\definecolor{dblue}{RGB}{98, 140, 190}
\definecolor{dlblue}{RGB}{216, 235, 255}
\definecolor{dgreen}{RGB}{124, 155, 127}
\definecolor{dpink}{RGB}{207, 166, 208}
\definecolor{dyellow}{RGB}{255, 248, 199}
\definecolor{dgray}{RGB}{46, 49, 49}
\newcommand{\durl}[1]{\textcolor{dblue}{\underline{\url{#1}}}}
\newcommand{\ubr}[1]{\underbrace{#1}}
\newcommand{\eps}{\varepsilon}
\newcommand{\mc}[1]{\mathcal{#1}}
\newcommand{\indic}{\mathbbm{1}}
\newcommand{\bE}{\mathbb{E}}
\newcommand{\bR}{\mathbb{R}}
\newcommand{\bP}{\mathbb{P}}
\newcommand{\bI}{\mathbb{I}}
\newcommand{\bH}{\mathbb{H}}
\newcommand{\bN}{\mathbb{N}}
\newcommand{\bB}{\mathbb{B}}
\newcommand{\bX}{\mathbb{X}}
\newcommand{\kl}[2]{D_{\mathrm{KL}}(#1\text{ }||\text{ }#2)}
\newcommand{\supp}[1]{\text{supp}\left(#1\right)}
\newcommand{\ra}{\rightarrow}
\newcommand{\vol}[1]{\text{Vol}\left(#1\right)}
\DeclareMathOperator*{\argmax}{arg\,max}
\newmdenv[
  topline=false,
  bottomline=false,
  rightline = false,
  leftmargin=10pt,
  rightmargin=0pt,
  innertopmargin=0pt,
  innerbottommargin=0pt
]{innerproof}
\newcounter{DaveDefCounter}
\newtheorem{assumption}{Assumption}
\newtheorem{conjecture}{Conjecture}
\newtheorem{corollary}{Corollary}
\newtheorem{example}{Example}
\newtheorem{lemma}{Lemma}
\newtheorem{theorem}{Theorem}
\newtheorem{fact}{Fact}
\newif\ifsubmit
\newcommand{\dnote}[1]{}
\newcommand{\bnote}[1]{}
\newcommand{\dnote}[1]{\textcolor{blue}{Dilip: #1}}
\newcommand{\bnote}[1]{\textcolor{orange}{Ben: #1}}
\title{Deciding What to Model: Value-Equivalent Sampling for Reinforcement Learning}
\author{%
  Dilip Arumugam\\
  Department of Computer Science\\
  Stanford University\\
  \texttt{dilip@cs.stanford.edu}\\
  \And
  Benjamin Van Roy \\
  Department of Electrical Engineering \\
  Department of Management Science \& Engineering\\
  Stanford University\\
  \texttt{bvr@stanford.edu} \\
}
\begin{document}

\maketitle

\begin{abstract}
  The quintessential model-based reinforcement-learning agent iteratively refines its estimates or prior beliefs about the true underlying model of the environment. Recent empirical successes in model-based reinforcement learning with function approximation, however, eschew the true model in favor of a surrogate that, while ignoring various facets of the environment, still facilitates effective planning over behaviors. Recently formalized as the value equivalence principle, this algorithmic technique is perhaps unavoidable as real-world reinforcement learning demands consideration of a simple, computationally-bounded agent interacting with an overwhelmingly complex environment, whose underlying dynamics likely exceed the agent's capacity for representation. In this work, we consider the scenario where agent limitations may entirely preclude identifying an exactly value-equivalent model, immediately giving rise to a trade-off between identifying a model that is simple enough to learn while only incurring bounded sub-optimality. To address this problem, we introduce an algorithm that, using rate-distortion theory, iteratively computes an approximately-value-equivalent, lossy compression of the environment which an agent may feasibly target in lieu of the true model. We prove an information-theoretic, Bayesian regret bound for our algorithm that holds for any finite-horizon, episodic sequential decision-making problem. Crucially, our regret bound can be expressed in one of two possible forms, providing a performance guarantee for finding either the simplest model that achieves a desired sub-optimality gap or, alternatively, the best model given a limit on agent capacity.
\end{abstract}

\section{Introduction}

A central challenge of the reinforcement-learning problem~\citep{sutton1998introduction,kaelbling1996reinforcement}  is exploration, where a sequential decision-making agent must judiciously balance exploitation of knowledge accumulated thus far against the need to further acquire information for optimal long-term performance. Historically, provably-efficient reinforcement-learning algorithms~\citep{kearns2002near,brafman2002r,kakade2003sample,auer2009near,bartlett2009regal,strehl2009reinforcement,jaksch2010near,osband2013more,dann2015sample,osband2017posterior,azar2017minimax,dann2017unifying,agrawal2017optimistic,jin2018q,zanette2019tighter,dong2021simple,lu2021reinforcement} have often relied upon one of two possible mechanisms for addressing the exploration challenge in a principled manner: optimism in the face of uncertainty or posterior sampling. Briefly, methods in the former category begin with optimistically-biased value estimates for all state-action pairs; an agent acting greedily with respect to these estimates will be incentivized to visit all state-action pairs a sufficient number of times until this bias dissipates and the agent is left with an accurate estimate of the value function for deriving optimal behavior. In contrast, posterior-sampling methods primarily operate based on Thompson sampling~\citep{thompson1933likelihood,russo2018tutorial} whereby the agent begins with a prior belief over the Markov Decision Process (MDP) with which it is interacting and acts optimally with respect to a single sample drawn from these beliefs. The resulting experience sampled from the true environment allows the agent to derive a corresponding posterior distribution and this Posterior Sampling for Reinforcement Learning (PSRL)~\citep{strens2000bayesian} algorithm proceeds iteratively in this manner, eventually arriving at a posterior sharply concentrated around the true environment MDP. While both paradigms have laid down solid theoretical foundations for provably-efficient reinforcement learning, a line of work has demonstrated how posterior-sampling methods can be more favorable both in theory and in practice~\citep{osband2013more,osband2016deep,osband2016generalization,osband2017posterior,o2018uncertainty,osband2019deep,dwaracherla2020hypermodels}.

While existing analyses of reinforcement-learning algorithms have largely focused on providing guarantees for learning optimal solutions, real-world reinforcement learning demands consideration for a computationally-bounded agent interacting with an overwhelmingly complex environment~\citep{lu2021reinforcement}. A simplified view of this notion can be succinctly depicted in the multi-armed bandit setting~\citep{lai1985asymptotically,bubeck2012regret,lattimore2020bandit}; as the number of arms increases, a Thompson sampling agent's relentless pursuit of the optimal arm will lead to large regret~\citep{russo2018satisficing,russo2022satisficing}. On the other hand, one might simply settle for the first $\eps$-optimal arm found, for some $\eps > 0$, which may be identified in far fewer time periods. The goal of this work is to augment PSRL so as to accommodate these satisficing solutions in addition to optimal ones, paralleling existing work for satisficing in multi-armed bandit problems~\citep{russo2017time,russo2022satisficing,arumugam2021deciding,arumugam2021the}. To help elucidate the utility of satisficing solutions in the reinforcement-learning setting, we offer the following illustrative example:
\begin{example}[A Multi-Resolution MDP]
For a large but finite $N \in \bN$, consider a sequence of MDPs, $\{\mc{M}_n\}_{n \in [N]}$, which all share a common action space $\mc{A}$ but vary in state space $\mc{S}_n$, reward function, and transition function. Moreover, for each $n \in [N]$, the rewards of the $n$th MDP are bounded in the interval $[0,\frac{1}{n}]$. An agent is confronted with the resulting product MDP, $\mc{M}$, defined on the state space $\mc{S}_1 \times \ldots \times \mc{S}_N$ with action space $\mc{A}$ and rewards summed across the $N$ constituent reward functions. The transition function is defined such that each action $a \in \mc{A}$ is executed across all $N$ MDPs simultaneously and the resulting individual transitions are combined into a transition of $\mc{M}$.
\label{example:multi_res_mdps}
\end{example}
Example \ref{example:multi_res_mdps} presents a simple scenario where, as $N \uparrow \infty$, a complex environment retains a wealth of information and yet, due to the scale of $N$ and the boundedness of rewards for each constituent MDP $\mc{M}_n$, only a subset of that information is within the agent's reach or even necessary for producing reasonably competent behavior. Despite this fact, PSRL will persistently act to fully identify the transition and reward structure of all $\{\mc{M}_n\}_{n \in [N]}$, for any value of $N$. Without knowing which MDPs are more important \textit{a priori} and even as data accumulates during learning, PSRL is unable to forego learning granular components of $\mc{M}$, eventually accumulating optimal reward at the cost of more time. Intuitively, however, one might anticipate that there exists a value $M \ll N$ such that learning the subsequence of MDPs $\{\mc{M}_n\}_{n \in [M]}$ in fewer time periods is sufficient for achieving a desired degree of sub-optimality, since the rewards of the remaining MDPs $\{\mc{M}_n\}_{n > M}$ make suitably negligible contributions to the overall rewards of $\mc{M}$. Alternatively, for a computationally-bounded decision maker, the agent's resource limitations ought to translate into a value $C \ll N$ such that $\{\mc{M}_n\}_{n \in [C]}$ is feasible and learning this subsequence is the best possible outcome under the agent capacity constraints.  In this work, we introduce an algorithm that, in a purely data-driven and automated fashion, implicitly identifies such a value $M$ or $C$ to facilitate tractable, near-optimal learning in what may otherwise be an intractable problem. Following \citet{arumugam2021deciding}, a key tool for defining a notion of satisficing in reinforcement learning will be rate-distortion theory~\citep{shannon1959coding,berger1971rate}.


The paper proceeds as follows: we introduce our problem formulation in Section \ref{sec:problem_form}, present our generalization of PSRL in Section \ref{sec:sat_psrl}, and provide a complementary regret analysis in Section \ref{sec:analysis}. Due to space constraints, technical proofs, an overview of related work, and discussion of our results in a broader context are relegated to the appendix. 

\section{Preliminaries}

In this section, we provide brief background on information theory and details on our notation. All random variables are defined on a probability space $(\Omega, \mc{F}, \bP)$. For any random variable $X:\Omega \ra \mc{X}$ taking values on the measurable space $(\mc{X}, \bX)$, we use $\sigma(X) \triangleq \{X^{-1}(A) \mid A \in \bX\} \subseteq \mc{F}$ to denote the $\sigma$-algebra generated by $X$. For any natural number $N \in \bN$, we denote the index set as $[N] \triangleq \{1,2,\ldots,N\}$. For any arbitrary set $\mc{X}$, $\Delta(\mc{X})$ denotes the set of all probability distributions with support on $\mc{X}$. For any two arbitrary sets $\mc{X}$ and $\mc{Y}$, we denote the class of all (measurable) functions mapping from $\mc{X}$ to $\mc{Y}$ as $\{\mc{X} \ra \mc{Y}\} \triangleq \{f \mid f:\mc{X} \ra \mc{Y}\}$. While our exposition throughout the paper will consistently refer to bits of information, it will be useful for the purposes of analysis that all logarithms be in base $e$.

\subsection{Information Theory}

Here we introduce various concepts in probability theory and information theory used throughout this paper. We encourage readers to consult \citep{cover2012elements,gray2011entropy,polyanskiy2019lecture,duchi21ItLectNotes} for more background. 

 We define the mutual information between any two random variables $X,Y$ through the Kullback-Leibler (KL) divergence: $$\bI(X;Y) = \kl{\bP((X,Y) \in \cdot)}{\bP(X \in \cdot) \times \bP(Y \in \cdot)} \qquad \kl{P}{Q} = \begin{cases} \int \log\left(\frac{dP}{dQ}\right) dP & P \ll Q \\ +\infty & P \not\ll Q \end{cases},$$ where $P$ and $Q$ are both probability measures on the same measurable space and $\frac{dP}{dQ}$ denotes the Radon-Nikodym derivative of $P$ with respect to $Q$. An analogous definition of conditional mutual information holds through the expected KL-divergence for any three random variables $X,Y,Z$:
$$\bI(X;Y \mid Z) = \bE\left[\kl{\bP((X,Y) \in \cdot \mid Z)}{\bP(X \in \cdot \mid Z) \times \bP(Y \in \cdot \mid Z)}\right].$$
With these definitions in hand, we may define the entropy and conditional entropy for any two random variables $X,Y$ as $$\bH(X) = \bI(X;X) \qquad \bH(Y \mid X) = \bH(Y) - \bI(X;Y).$$ This yields the following identities for mutual information and conditional mutual information for any three arbitrary random variables $X$, $Y$, and $Z$:
$$\bI(X;Y) = \bH(X) - \bH(X \mid Y) = \bH(Y) - \bH(Y | X), \qquad \bI(X;Y|Z) = \bH(X|Z) - \bH(X \mid Y,Z) = \bH(Y|Z) - \bH(Y | X,Z).$$
Through the chain rule of the KL-divergence and the fact that $\kl{P}{P} = 0$ for any probability measure $P$, we obtain another equivalent definition of mutual information, $$\bI(X;Y) = \bE\left[\kl{\bP(Y \in \cdot \mid X)}{\bP(Y \in \cdot)}\right],$$ as well as the chain rule of mutual information: $\bI(X;Y_1,\ldots,Y_n) = \sum\limits_{i=1}^n \bI(X;Y_i \mid Y_1,\ldots,Y_{i-1}).$ Finally, for any three random variables $X$, $Y$, and $Z$ which form the Markov chain $X \ra Y \ra Z$, we have the following data-processing inequality: $\bI(X;Z) \leq \bI(X;Y).$

\section{Problem Formulation}
\label{sec:problem_form}

We formulate a sequential decision-making problem as a finite-horizon, episodic Markov Decision Process (MDP)~\citep{bellman1957markovian,Puterman94} defined by $\mc{M} = \langle \mc{S}, \mc{A}, \mc{R}, \mc{T}, \beta, H \rangle$. Here $\mc{S}$ denotes a set of states, $\mc{A}$ is a set of actions, $\mc{R}:\mc{S} \times \mc{A} \ra [0,1]$ is a deterministic reward function providing evaluative feedback signals (in the unit interval) to the agent, $\mc{T}:\mc{S} \times \mc{A} \ra \Delta(\mc{S})$ is a transition function prescribing distributions over next states, $\beta \in \Delta(\mc{S})$ is an initial state distribution, and $H \in \bN$ is the maximum episode length or horizon. 

As is standard in Bayesian reinforcement learning~\citep{ghavamzadeh2015bayesian}, neither the transition function nor the reward function are known to the agent and, consequently, both are treated as random variables. Since all other components of the MDP are thought of as known a priori, the randomness in the model $(\mc{R}, \mc{T})$ fully accounts for the randomness in $\mc{M}$, which is also a random variable. We denote by $\mc{M}^\star$ the true MDP with model $(\mc{R}^\star, \mc{T}^\star)$ that the agent interacts with and attempts to solve over the course of $K$ episodes. Within each episode, the agent acts for exactly $H$ steps beginning with an initial state $s_1 \sim \beta$. For each $h \in [H]$, the agent observes the current state $s_h \in \mc{S}$, selects action $a_h \sim \pi_h(\cdot \mid s_h) \in \mc{A}$, enjoys a reward $r_h = \mc{R}(s_h,a_h) \in [0,1]$, and transitions to the next state $s_{h+1} \sim \mc{T}(\cdot \mid s_h, a_h) \in \mc{S}$.

A stationary, stochastic policy for timestep $h \in [H]$, $\pi_h:\mc{S} \ra \Delta(\mc{A})$, encodes a pattern of behavior mapping individual states to distributions over possible actions. Letting $\{\mc{S} \ra \Delta(\mc{A})\}$ denote the class of all stationary, stochastic policies, a non-stationary policy $\pi = (\pi_1,\ldots,\pi_H) \in \{\mc{S} \ra \Delta(\mc{A})\}^H$ is a collection of exactly $H$ stationary, stochastic policies whose overall performance in any MDP $\mc{M}$ at timestep $h \in [H]$ when starting at state $s \in \mc{S}$ and taking action $a \in \mc{A}$ is assessed by its associated action-value function $Q^\pi_{\mc{M},h}(s,a) = \bE\left[\sum\limits_{h'=h}^H \mc{R}(s_{h'},a_{h'}) \bigm| s_h = s, a_h = a\right]$, where the expectation integrates over randomness in the action selections and transition dynamics. Taking the corresponding value function as $V^\pi_{\mc{M},h}(s) = \bE_{a \sim \pi_h(\cdot \mid s)}\left[Q^\pi_{\mc{M},h}(s,a)\right]$, we define the optimal policy $\pi^\star = (\pi^\star_1,\pi^\star_2,\ldots,\pi^\star_H)$ as achieving supremal value $V^\star_{\mc{M},h}(s) = \sup\limits_{\pi \in \{\mc{S} \ra \Delta(\mc{A})\}^H} V^\pi_{\mc{M},h}(s)$ for all $s \in \mc{S}$, $h \in [H]$. For brevity, we will write any value function $V \in \{S \ra \bR\}$ without its argument to implicitly integrate over randomness in the initial state: $V = \bE_{s_1 \sim \beta(\cdot)}\left[V(s_1)\right]$. We let $\tau_k = (s^{(k)}_1, a^{(k)}_1, r^{(k)}_1, \ldots,s^{(k)}_{H}, a^{(k)}_{H}, r^{(k)}_{H}, s^{(k)}_{H+1})$ be the random variable denoting the trajectory experienced by the agent in the $k$th episode. Meanwhile, $H_k = \{\tau_1,\tau_2,\ldots, \tau_{k-1}\} \in \mc{H}_k$ is the random variable representing the entire history of the agent's interaction within the environment at the start of the $k$th episode; the sequence of history random variables $\{H_k\}_{k \in [K]}$ induce and, by definition, are adapted to the filtration $\{\sigma(H_k)\}_{k \in [K]}$ of $(\Omega, \mc{F})$. We call attention to the fact that we have yet to make any further restrictions on the state-action space $\mc{S} \times \mc{A}$, such as finiteness; notably, the main results of this paper are not limited to tabular MDPs. As mentioned by \citet{lattimore2020bandit} (also as Proposition 7.28 of \citet{bertsekas1996stochastic}), the Ionescu-Tulcea Theorem~\citep{tulcea1949mesures} ensures the existence of a probability space upon which $\tau_k$ and $H_k$ are well-defined random variables for all episodes $k \in [K]$.


Abstractly, a reinforcement-learning algorithm is a sequence of non-stationary policies $(\pi^{(1)},\ldots,\pi^{(K)})$ where for each episode $k \in [K]$, $\pi^{(k)}:\mc{H}_k \ra \{\mc{S} \ra \Delta(\mc{A})\}^H$ is a function of the current history $H_k$. We define the regret of a reinforcement-learning algorithm over $K$ episodes as 
$$\textsc{Regret}(K, \pi^{(1)},\ldots,\pi^{(K)}, \mc{M}^\star) = \sum\limits_{k=1}^K \Delta_k \qquad \Delta_k \triangleq V^\star_{\mc{M}^\star,1} - V^{\pi^{(k)}}_{\mc{M}^\star, 1},$$
where $\Delta_k$ denotes the episodic regret or regret incurred during the $k$th episode with respect to the true MDP $\mc{M}^\star$.
An agent's initial uncertainty in the (unknown) true MDP $\mc{M}^\star$ is reflected by an arbitrary prior distribution $\bP(\mc{M}^\star \in \cdot \mid H_1)$. Since the regret is a random variable due to our uncertainty in $\mc{M}^\star$, we integrate over this randomness to arrive at the Bayesian regret:
\begin{align*}
    \textsc{BayesRegret}(K, \pi^{(1)},\ldots,\pi^{(K)}) &= \bE\left[\textsc{Regret}(K, \pi^{(1)},\ldots,\pi^{(K)}, \mc{M}^\star)\right].
\end{align*}
Broadly speaking, our goal is to design a provably-efficient reinforcement-learning algorithm that incurs bounded Bayesian regret. 

Throughout the paper, we will denote the entropy and conditional entropy conditioned upon a specific realization of an agent's history $H_k$, for some episode $k \in [K]$, as $\bH_k(X) \triangleq \bH(X \mid H_k = H_k)$ and $\bH_k(X \mid Y) \triangleq \bH_k(X \mid Y, H_k = H_k)$, for two arbitrary random variables $X$ and $Y$. This notation will also apply analogously to the mutual information $\bI_k(X;Y) \triangleq \bI(X;Y \mid H_k = H_k) = \bH_k(X) - \bH_k(X \mid Y) = \bH_k(Y) - \bH_k(Y \mid X),$ as well as the conditional mutual information $\bI_k(X;Y \mid Z) \triangleq \bI(X;Y \mid H_k = H_k, Z),$ given an arbitrary third random variable, $Z$. Note that their dependence on the realization of random history $H_k$ makes both $\bI_k(X;Y)$ and $\bI_k(X;Y \mid Z)$ random variables themselves. The traditional notion of conditional mutual information given the random variable $H_k$ arises by integrating over this randomness: $$\bE\left[\bI_k(X;Y)\right] = \bI(X;Y \mid H_k) \qquad \bE\left[\bI_k(X;Y \mid Z)\right] = \bI(X;Y \mid H_k,Z).$$ Additionally, we will also adopt a similar notation to express a conditional expectation given the random history $H_k$: $\bE_k\left[X\right] \triangleq \bE\left[X|H_k\right].$

\section{Satisficing Through Posterior Sampling}
\label{sec:sat_psrl}

\subsection{Rate-Distortion Theory}
We begin with a brief, high-level overview of rate-distortion theory~\citep{shannon1959coding,berger1971rate} and encourage readers to consult \citep{cover2012elements} for more details and \citep{berger1998lossy} for a survey of advances in rate-distortion theory towards solving the lossy source coding problem in information theory. A lossy compression problem consumes as input a fixed information source $\bP(X \in \cdot)$ and a measurable distortion function $d: \mc{X} \times \mc{Z} \ra \bR_{\geq 0}$ which quantifies the loss of fidelity by using $Z$ in place of $X$. Then, for any $D \in \bR_{\geq 0}$, the rate-distortion function quantifies the fundamental limit of lossy compression as $$\mc{R}(D) = \inf\limits_{Z \in \Lambda(D)} \bI(X;Z) \qquad \Lambda(D) \triangleq \{Z: \Omega \ra \mc{Z} \mid \bE\left[d(X,Z)\right] \leq D\},$$ where the infimum is taken over all random variables $Z$ that incur bounded expected distortion, $\bE\left[d(X,Z)\right] \leq D$. Naturally, $\mc{R}(D)$ represents the minimum number of bits of information that must be retained from $X$ in order to achieve this bounded expected loss of fidelity\footnote{With a slight abuse of notation, we overload $\mc{R}$.}. Throughout the paper, various facts of the rate-distortion function will be referenced as needed. For now, we simply note that, in keeping with the problem formulation of the previous section which does not automatically assume discrete random variables, the rate-distortion function is well-defined for abstract information source and channel output random variables~\citep{csiszar1974extremum}.

Just as in past work that studies satisficing in multi-armed bandit problems~\citep{russo2018satisficing,russo2022satisficing,arumugam2021deciding}, we will use rate-distortion theory to formalize and identify the best simplified MDP $\widetilde{\mc{M}}_k$ that the agent will attempt to learn over the course of each episode $k \in [K]$. The dependence on the particular episode comes from the fact that this lossy compression mechanism or channel will treat the agent's current beliefs over the true MDP $\bP(\mc{M}^\star \in \cdot \mid H_k)$ as the information source to be compressed.

\subsection{The Value Equivalence Principle}

As outlined in the previous section, the second input for a well-specified lossy-compression problem is a distortion function prescribing non-negative real values to realizations of the information source and channel output random variables $(\mc{M}^\star, \widetilde{\mc{M}})$ that quantify the loss of fidelity incurred by using $\widetilde{\mc{M}}$ in lieu of $\mc{M}^\star$. To define this function, we will leverage an approximate notion of value equivalence~\citep{grimm2020value,grimm2021proper}. For any arbitrary MDP $\mc{M}$ with model $(\mc{R},\mc{T})$ and any stationary, stochastic policy $\pi:\mc{S} \ra \Delta(\mc{A})$, define the Bellman operator $\mc{B}^\pi_\mc{M}: \{\mc{S} \ra \bR\} \ra \{\mc{S} \ra \bR\}$ as follows: $$\mc{B}^\pi_\mc{M}V(s) \triangleq \bE_{a \sim \pi(\cdot \mid s)}\left[\mc{R}(s,a) + \bE_{s' \sim \mc{T}(\cdot \mid s, a)}\left[ V(s')\right]\right], \qquad \forall s \in \mc{S}.$$ The Bellman operator is a foundational tool in dynamic-programming approaches to reinforcement learning~\citep{bertsekas1995dynamic} and gives rise to the classic Bellman equation: for any MDP $\mc{M} = \langle \mc{S}, \mc{A}, \mc{R}, \mc{T}, \beta, H \rangle$ and any non-stationary policy $\pi = (\pi_1,\ldots,\pi_H)$, the value functions induced by $\pi$ satisfy $V^\pi_{\mc{M},h}(s) = \mc{B}^{\pi_h}_{\mc{M}}V^\pi_{\mc{M},h+1}(s),$ for all $h \in [H]$ and with $V^\pi_{\mc{M},H+1}(s) = 0$, $\forall s \in \mc{S}$. For any two MDPs $\mc{M} = \langle \mc{S}, \mc{A}, \mc{R}, \mc{T}, \beta, H \rangle$ and $\widehat{\mc{M}} = \langle \mc{S}, \mc{A}, \widehat{\mc{R}}, \widehat{\mc{T}}, \beta, H \rangle$, \citet{grimm2020value} define a notion of equivalence between them despite their differing models. For any policy class $\Pi \subseteq \{\mc{S} \ra \Delta(\mc{A})\}$ and value function class $\mc{V} \subseteq \{\mc{S} \ra \bR\}$, $\mc{M}$ and $\widehat{\mc{M}}$ are value equivalent with respect to $\Pi$ and $\mc{V}$ if and only if $\mc{B}^\pi_{\mc{M}}V = \mc{B}^\pi_{\widehat{\mc{M}}}V$, $\forall \pi \in \Pi, V \in \mc{V}.$ In words, two different models are deemed value equivalent if they induce identical Bellman updates under any pair of policy and value function from $\Pi \times \mc{V}$. \citet{grimm2020value} prove that when $\Pi = \{\mc{S} \ra \Delta(\mc{A})\}$ and $\mc{V} = \{\mc{S} \ra \bR\}$, the set of all exactly value-equivalent models is a singleton set containing only the true model of the environment. The key insight behind value equivalence, however, is that practical model-based reinforcement-learning algorithms need not be concerned with modeling every granular detail of the underlying environment and may, in fact, stand to benefit by optimizing an alternative criterion besides the traditional maximum-likelihood objective~\citep{silver2017predictron,farahmand2017value,oh2017value,asadi2018lipschitz,farahmand2018iterative,d2020gradient,abachi2020policy,cui2020control,ayoub2020model,schrittwieser2020mastering,nair2020goal,nikishin2022control,voelcker2022value}. Indeed, by restricting focus to decreasing subsets of policies $\Pi \subset \{\mc{S} \ra \Delta(\mc{A})\}$ and value functions $\mc{V} \subset \{\mc{S} \ra \bR\}$, the space of exactly value-equivalent models is monotonically increasing. 

For brevity, let $\mathfrak{R} \triangleq \{\mc{S} \times \mc{A} \ra [0,1]\}$ and $\mathfrak{T} \triangleq \{\mc{S} \times \mc{A} \ra \Delta(\mc{S})\}$ denote the classes of all reward functions and transition functions, respectively. Recall that, with $\langle \mc{S}, \mc{A}, \beta, H \rangle$ all known, the uncertainty in a random MDP $\mc{M}$ is entirely driven by its model $(\mc{R},\mc{T})$ such that we may think of the support of $\mc{M}^\star$ as $\text{supp}(\mc{M}^\star) = \mathfrak{M} \triangleq \mathfrak{R} \times \mathfrak{T}$. We define a distortion function on pairs of MDPs $d:\mathfrak{M} \times \mathfrak{M} \ra \bR_{\geq 0}$ for any $\Pi \subseteq \{\mc{S} \ra \Delta(\mc{A})\}$, $\mc{V} \subseteq \{\mc{S} \ra \bR\}$ as $$d_{\Pi,\mc{V}}(\mc{M},\widehat{\mc{M}}) = \sup\limits_{\substack{\pi \in \Pi \\ V \in \mc{V}}} ||\mc{B}^\pi_{\mc{M}}V - \mc{B}^\pi_{\widehat{\mc{M}}}V||_\infty^2 = \sup\limits_{\substack{\pi \in \Pi \\ V \in \mc{V}}} \left(\sup\limits_{s \in \mc{S}} |\mc{B}^\pi_{\mc{M}}V(s) - \mc{B}^\pi_{\widehat{\mc{M}}}V(s)| \right)^2.$$ In words, $d_{\Pi,\mc{V}}$ is the supremal squared Bellman error between MDPs $\mc{M}$ and $\widehat{\mc{M}}$ across all states $s \in \mc{S}$ with respect to the policy class $\Pi$ and value function class $\mc{V}$.

\subsection{Value-Equivalent Sampling for Reinforcement Learning}

By virtue of the previous two sections, we are now in a position to define the lossy compression problem that characterizes a MDP $\widetilde{\mc{M}}$ that the agent will aspire to learn in each episode $k \in [K]$ instead of the true MDP $\mc{M}^\star$. For any $\Pi \subseteq \{\mc{S} \ra \Delta(\mc{A})\}$; $\mc{V} \subseteq \{\mc{S} \ra \bR\}$; $k \in [K]$; and $D \geq 0$, we define the rate-distortion function
\begin{align}
    \mc{R}^{\Pi,\mc{V}}_k(D) = \inf\limits_{\widetilde{\mc{M}} \in \Lambda_k(D)} \bI_k(\mc{M}^\star; \widetilde{\mc{M}}), \text{          }\Lambda_k(D) \triangleq \{\widetilde{\mc{M}}: \Omega \ra \mathfrak{M} \mid \bE_k[d_{\Pi,\mc{V}}(\mc{M}^\star, \widetilde{\mc{M}})] \leq D\} .
    \label{eq:rdf}
\end{align}
This rate-distortion function characterizes the fundamental limit of MDP compression under our chosen distortion measure resulting in a channel that retains the minimum amount of information from the true MDP $\mc{M}^\star$ while yielding an approximately value-equivalent MDP in expectation. Observe that this distortion constraint is a notion of approximate value equivalence which collapses to the exact value equivalence of \citet{grimm2020value} as $D \ra 0$. Meanwhile, as $D \ra \infty$, we accommodate a more aggressive compression of the true MDP $\mc{M}^\star$ resulting in less faithful Bellman updates.

\begin{center}
\begin{minipage}{0.41\textwidth}
\vspace{-38pt}
\begin{algorithm}[H]
   \caption{Posterior Sampling for Reinforcement Learning (PSRL)~\citep{strens2000bayesian}}
   \label{alg:psrl}
\begin{algorithmic}
   \STATE {\bfseries Input:} Prior $\bP(\mc{M}^\star \in \cdot \mid H_1)$
   \FOR{$k \in [K]$}
   \STATE Sample $M_k \sim \bP(\mc{M}^\star \in \cdot \mid H_k)$
   \STATE Get optimal policy $\pi^{(k)} = \pi^\star_{M_k}$
   \STATE Execute $\pi^{(k)}$ and get trajectory $\tau_k$
   \STATE Update history $H_{k+1} = H_k \cup \tau_k$
   \STATE Induce posterior $\bP(\mc{M}^\star \in \cdot \mid H_{k+1})$
   \ENDFOR
\end{algorithmic}
\end{algorithm}
\end{minipage}
\hfill
\begin{minipage}{0.58\textwidth}
\begin{algorithm}[H]
   \caption{Value-equivalent Sampling for Reinforcement Learning (VSRL)}
   \label{alg:vsrl}
\begin{algorithmic}
   \STATE {\bfseries Input:} Prior $\bP(\mc{M}^\star \in \cdot \mid H_1)$, Threshold $D \in \bR_{\geq 0}$, Distortion function $d_{\Pi,\mc{V}}: \mathfrak{M} \times \mathfrak{M} \ra \bR_{\geq 0}$
   \FOR{$k \in [K]$}
   \STATE Compute $\widetilde{\mc{M}}_k$ achieving $\mc{R}^{\Pi,\mc{V}}_k(D)$ limit (Equation \ref{eq:rdf})
   \STATE Sample MDP $M^\star \sim \bP(\mc{M}^\star \in \cdot \mid H_k)$
   \STATE Sample compression $M_k \sim \bP(\widetilde{\mc{M}}_k \in \cdot \mid \mc{M}^\star = M^\star)$
   \STATE Compute optimal policy $\pi^{(k)} = \pi^\star_{M_k}$
   \STATE Execute $\pi^{(k)}$ and observe trajectory $\tau_k$
   \STATE Update history $H_{k+1} = H_k \cup \tau_k$
   \STATE Induce posterior $\bP(\mc{M}^\star \in \cdot \mid H_{k+1})$
   \ENDFOR
\end{algorithmic}
\end{algorithm}
\end{minipage}
\end{center}

A standard algorithm for our problem setting is widely known as Posterior Sampling for Reinforcement Learning (PSRL)~\citep{strens2000bayesian,osband2017posterior}, which we present as Algorithm \ref{alg:psrl}, while our Value-equivalent Sampling for Reinforcement Learning (VSRL) is given as Algorithm \ref{alg:vsrl}.  The key distinction between them is that, at each episode $k \in [K]$, the latter takes the posterior sample $M^\star \sim \bP(\mc{M}^\star \in \cdot \mid H_k)$ and passes it through the channel that achieves the rate-distortion limit (Equation \ref{eq:rdf}) at this episode to get the $M_k$ whose optimal policy is executed in the environment.


The core impetus for this work is to recognize that, for complex environments, pursuit of the exact MDP $\mc{M}^\star$ (as in PSRL) may be an entirely infeasible goal. Consider a MDP that represents control of a real-world, physical system; learning a transition function of the associated environment, at some level, demands that the agent internalize the laws of physics and motion with near-perfect accuracy. More formally, identifying $\mc{M}^\star$ demands the agent obtain exactly $\bH_1(\mc{M}^\star)$ bits of information from the environment which, under an uninformative prior, may either be prohibitively large by far exceeding the agent's capacity constraints or be simply impractical under time and resource constraints. 

As a remedy for this problem, we embrace the idea of being ``sufficiently satisfying'' or \textit{satisficing}~\citep{simon1982models,russo2017time,russo2018satisficing,russo2022satisficing,arumugam2021deciding,arumugam2021the}; as succinctly stated by Herbert A. Simon during his 1978 Nobel Memorial Lecture, ``decision makers can satisfice either by finding optimum solutions for a simplified world, or by finding satisfactory solutions for a more realistic world.'' Rather than spend an inordinate amount of time trying to recover an optimum solution to the true environment, we will instead design an algorithm that pursues optimum solutions for a sequence of simplified environments. In the next section, our analysis demonstrates that finding such optimum solutions for simplified worlds ultimately acts as a mechanism for achieving a satisfactory solution for the realistic, complex world. Naturally, the loss of fidelity between the simplified and true environments translates into a fixed amount of regret that an agent designer consciously and willingly accepts for two reasons: (1) they expect a reduction in the amount of time, data, and bits of information needed to identify the simplified environment and (2) in tasks where the environment encodes irrelevant information and exact knowledge is not needed to achieve optimal behavior~\citep{farahmand2017value,grimm2020value,grimm2021proper,voelcker2022value}, this worst-case error term may be negligible anyways while still maintaining greater efficiency than traditional PSRL.


Recalling Example \ref{example:multi_res_mdps} that revolves around a particular sequence of MDPs, $\{\mc{M}_n\}_{n \in [N]}$, we note that as the distortion threshold $D$ increases, the significance of MDPs in the sequence indexed by larger values of $n \in [N]$ rapidly diminishes. As $D \uparrow \infty$, the lossy compression $\widetilde{\mc{M}}_k$ needn't convey information about any of the MDPs in $\{\mc{M}_n\}_{n \in [N]}$. Conversely, at $D = 0$, a VSRL agent must necessarily obtain enough information about the entire sequence so as to facilitate planning over $\Pi$ and $\mc{V}$. In between, however, the agent need only concern itself with a particular subsequence of $\{\mc{M}_n\}_{n \in [N]}$ while the remaining MDPs can be ignored due to their negligible contribution to overall value and, therefore, expected distortion under $d_{\Pi,\mc{V}}$.

\section{Regret Analysis}
\label{sec:analysis}

In this section, we offer an information-theoretic analysis of VSRL (Algorithm \ref{alg:vsrl}) before refining our regret bounds to the tabular setting. We conclude by highlighting how our performance guarantees can be expressed via a notion of agent capacity that is considerate of real-world reinforcement learning.

\subsection{An Information-Theoretic Bayesian Regret Bound}
\label{sec:info_regret_bounds}

To establish a Bayesian regret bound for VSRL we first require a regret decomposition that acknowledges the agent's new objective of identifying an approximately value-equivalent MDP in each episode, $\widetilde{\mc{M}}_k$, rather than the true MDP $\mc{M}^\star$. Crucially, this regret decomposition leverages the precise form of our distortion function $d_{\Pi,\mc{V}}(\mc{M}^\star,\widetilde{\mc{M}}_k)$.

\begin{theorem}
Take any $\Pi \supseteq \{\mc{S} \ra \mc{A}\}$, any $\mc{V} \supseteq \{V^\pi \mid \pi \in \Pi^H\}$, and fix any $D \geq 0$. For each episode $k \in [K]$, let $\widetilde{\mc{M}}_k$ be any MDP that achieves the rate-distortion limit of $\mc{R}^{\Pi,\mc{V}}_k(D)$ with information source $\bP(\mc{M}^\star \in \cdot \mid H_k)$ and distortion function $d_{\Pi,\mc{V}}$. Then, 
$\textsc{BayesRegret}(K, \pi^{(1)},\ldots,\pi^{(K)}) \leq \bE\left[\sum\limits_{k=1}^K \bE_k\left[V^{\star}_{\widetilde{\mc{M}}_k,1} - V^{\pi^{(k)}}_{\widetilde{\mc{M}}_k,1}\right]\right] + 2KH\sqrt{D}.$
\label{thm:regret_decomp}
\end{theorem}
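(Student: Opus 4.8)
The plan is to bound the Bayesian regret through a purely algebraic three-term decomposition of the episodic regret that inserts the value functions of the compressed MDP $\widetilde{\mc{M}}_k$, keep the middle term intact, and control the two model-mismatch terms via the distortion function. For each episode $k$ I would write
$$\Delta_k = \underbrace{\left(V^\star_{\mc{M}^\star,1} - V^\star_{\widetilde{\mc{M}}_k,1}\right)}_{A_k} + \underbrace{\left(V^\star_{\widetilde{\mc{M}}_k,1} - V^{\pi^{(k)}}_{\widetilde{\mc{M}}_k,1}\right)}_{B_k} + \underbrace{\left(V^{\pi^{(k)}}_{\widetilde{\mc{M}}_k,1} - V^{\pi^{(k)}}_{\mc{M}^\star,1}\right)}_{B_k'},$$
where $B_k$ is exactly the surrogate regret on the right-hand side of the claim. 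Note this decomposition invokes \emph{no} posterior-sampling identity; that tool would only enter a later step bounding $\bE[B_k]$. Here everything reduces to controlling $A_k$ and $B_k'$.

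The crux — and the step I expect to be the main obstacle — is a simulation lemma relating value-function gaps to supremal Bellman errors. Starting from the Bellman equation $V^\pi_{\mc{M},h} = \mc{B}^{\pi_h}_{\mc{M}} V^\pi_{\mc{M},h+1}$, I would add and subtract $\mc{B}^{\pi_h}_{\mc{M}} V^\pi_{\widehat{\mc{M}},h+1}$ at each level $h$ and use that $\mc{B}^{\pi_h}_{\mc{M}}$ merely averages the next-step value gap over actions and transitions, so it is a sup-norm nonexpansion on value differences. Telescoping over $h=1,\ldots,H$ then gives
$$\left\Vert V^\pi_{\mc{M},1} - V^\pi_{\widehat{\mc{M}},1}\right\Vert_\infty \leq \sum_{h=1}^H \left\Vert \mc{B}^{\pi_h}_{\mc{M}} V^\pi_{\widehat{\mc{M}},h+1} - \mc{B}^{\pi_h}_{\widehat{\mc{M}}} V^\pi_{\widehat{\mc{M}},h+1}\right\Vert_\infty \leq H\sqrt{d_{\Pi,\mc{V}}(\mc{M},\widehat{\mc{M}})},$$
the final inequality holding provided each $\pi_h \in \Pi$ and each $V^\pi_{\widehat{\mc{M}},h+1} \in \mc{V}$, so that each per-level Bellman error is dominated by $\sqrt{d_{\Pi,\mc{V}}}$. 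This is precisely where the hypotheses $\Pi \supseteq \{\mc{S} \ra \mc{A}\}$ and $\mc{V} \supseteq \{V^\pi \mid \pi \in \Pi^H\}$ are used: the relevant optimal policies $\pi^\star_{\mc{M}^\star}$, $\pi^\star_{\widetilde{\mc{M}}_k}$, and $\pi^{(k)} = \pi^\star_{M_k}$ may each be taken deterministic, hence lie in $\Pi$, and their induced value functions lie in $\mc{V}$.

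Applying the lemma yields both mismatch bounds pointwise, using symmetry of $d_{\Pi,\mc{V}}$ in its arguments. For $B_k'$, a direct application with $\pi=\pi^{(k)}$ gives $|B_k'| \leq H\sqrt{d_{\Pi,\mc{V}}(\mc{M}^\star,\widetilde{\mc{M}}_k)}$. For $A_k$, I would first exploit optimality of $\pi^\star_{\widetilde{\mc{M}}_k}$ in $\widetilde{\mc{M}}_k$ to write $A_k \leq V^{\pi^\star_{\mc{M}^\star}}_{\mc{M}^\star,1} - V^{\pi^\star_{\mc{M}^\star}}_{\widetilde{\mc{M}}_k,1}$, a single-policy gap the lemma bounds by $H\sqrt{d_{\Pi,\mc{V}}(\mc{M}^\star,\widetilde{\mc{M}}_k)}$. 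Hence $A_k + B_k' \leq 2H\sqrt{d_{\Pi,\mc{V}}(\mc{M}^\star,\widetilde{\mc{M}}_k)}$ for every realization.

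Finally, I would take expectations, condition on $H_k$, and apply Jensen's inequality to the concave square root together with the distortion constraint $\widetilde{\mc{M}}_k \in \Lambda_k(D)$, namely $\bE_k\!\left[\sqrt{d_{\Pi,\mc{V}}(\mc{M}^\star,\widetilde{\mc{M}}_k)}\right] \leq \sqrt{\bE_k[d_{\Pi,\mc{V}}(\mc{M}^\star,\widetilde{\mc{M}}_k)]} \leq \sqrt{D}$. This bounds $\bE[A_k + B_k'] \leq 2H\sqrt{D}$; summing over $k \in [K]$ and recombining with $\bE\big[\sum_k B_k\big] = \bE\big[\sum_k \bE_k[V^\star_{\widetilde{\mc{M}}_k,1} - V^{\pi^{(k)}}_{\widetilde{\mc{M}}_k,1}]\big]$ via the tower property delivers the stated bound. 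The only delicate points beyond the simulation lemma are confirming existence of deterministic optimal policies and verifying that all value functions arising in the telescope indeed belong to $\mc{V}$.
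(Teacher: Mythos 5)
Your proposal is correct and follows essentially the same route as the paper's proof: the identical three-term decomposition, the same reduction of $V^\star_{\mc{M}^\star,1} - V^\star_{\widetilde{\mc{M}}_k,1}$ to a single-policy gap via optimality in $\widetilde{\mc{M}}_k$, the same telescoping simulation lemma bounding each per-step Bellman error by $\sqrt{d_{\Pi,\mc{V}}}$ (the paper states it as an exact identity in expectation over trajectories rather than your sup-norm nonexpansion form, but the two are interchangeable here), and the same Jensen-plus-distortion-constraint step yielding $2H\sqrt{D}$ per episode. The delicate points you flag (determinism of the optimal policies and membership of the telescoped value functions in $\mc{V}$) are exactly where the paper invokes the hypotheses $\Pi \supseteq \{\mc{S} \ra \mc{A}\}$ and $\mc{V} \supseteq \{V^\pi \mid \pi \in \Pi^H\}$, so nothing is missing.
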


Theorem \ref{thm:regret_decomp} shows how the Bayesian regret incurred by VSRL can be separated into an error term the agent must pay for learning a simplified MDP $\widetilde{\mc{M}}_k$, rather than $\mc{M}^\star$, and the Bayesian regret incurred while trying to learn $\widetilde{\mc{M}}_k$. This first term mirrors the satisficing regret of \citet{russo2018satisficing,russo2022satisficing} for multi-armed bandits where the performance of the agent in the $k$th episode is being measured with respect to a compressed MDP $\widetilde{\mc{M}}_k$, rather than the true MDP $\mc{M}^\star$. While further discussion on the choices of $\Pi$ and $\mc{V}$ is provided later in this section, we simply note that the conditions placed upon them in Theorem \ref{thm:regret_decomp} are an artifact of VSRL only executing optimal policies in each time period $h \in [H]$ which, under the assumptions of our problem formulation, are deterministic.

The remainder of this section is devoted to an analysis for establishing an information-theoretic bound on the satisficing regret term of Theorem \ref{thm:regret_decomp}. A central tool of our analysis will be the information ratio~\citep{russo2016information,russo2018learning} at the $k$th episode: $$\Gamma_k \triangleq \frac{\bE_k\left[V^{\star}_{\widetilde{\mc{M}}_k,1} - V^{\pi^{(k)}}_{\widetilde{\mc{M}}_k,1}\right]^2}{\bI_k(\widetilde{\mc{M}}_k; \tau_k, M_k)} \qquad \forall k \in [K].$$ In words, the information ratio is the ratio between squared expected regret in the $k$th episode with respect to $\widetilde{\mc{M}}_k$ and the information gained about $\widetilde{\mc{M}}_k$ in the $k$th episode by sampling MDP $M_k$ and observing trajectory $\tau_k$, given the current history $H_k$. Numerous prior works have leveraged similar or generalized types of information ratios for analyzing multi-armed bandit problems~\citep{russo2014learning,russo2016information,russo2018learning,russo2018satisficing,russo2022satisficing,dong2018information,lattimore2019information,zimmert2019connections,bubeck2020first,arumugam2021deciding,lattimore2021mirror} as well as reinforcement-learning problems~\citep{lu2019information}; in comparison to the latter, we simply note that our analysis bears stronger resemblance to those in multi-armed bandits by not constructing confidence sets over MDPs~\citep{osband2013more,osband2017posterior,lu2019information}, avoiding a restricted focus to tabular problems. That said, our results are contingent upon the existence of a uniform upper bound to the information ratios across all episodes, a non-trivial result~\citep{hao2022regret} that we leave to future work.


Through our information-ratio analysis, we obtain the following information-theoretic bound on satisficing Bayesian regret:

\begin{theorem}
If $\Gamma_k \leq \overline{\Gamma}$, for all $k \in [K]$, then $\bE\left[\sum\limits_{k=1}^K \bE_k\left[V^{\star}_{\widetilde{\mc{M}}_k,1} - V^{\pi^{(k)}}_{\widetilde{\mc{M}}_k,1}\right]\right] \leq \sqrt{\overline{\Gamma}K\mc{R}^{\Pi,\mc{V}}_1(D)}.$
\label{thm:info_sat_regret_bound}
\end{theorem}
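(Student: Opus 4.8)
The plan is to carry out the information-ratio argument of \citet{russo2016information,russo2018learning}, adapted so that information is measured against the per-episode target $\widetilde{\mc{M}}_k$, and then to collapse the resulting cumulative information gain onto the episode-$1$ rate $\mc{R}^{\Pi,\mc{V}}_1(D)$. Write $\delta_k \triangleq \bE_k[V^\star_{\widetilde{\mc{M}}_k,1} - V^{\pi^{(k)}}_{\widetilde{\mc{M}}_k,1}]$ for the within-episode satisficing regret. Since $\pi^{(k)} = \pi^\star_{M_k}$ need not be optimal for $\widetilde{\mc{M}}_k$, we have $\delta_k \ge 0$, so the definition of $\Gamma_k$ gives the pointwise identity $\delta_k = \sqrt{\Gamma_k\,\bI_k(\widetilde{\mc{M}}_k; \tau_k, M_k)}$, and the hypothesis $\Gamma_k \le \overline{\Gamma}$ yields $\delta_k \le \sqrt{\overline{\Gamma}}\,\sqrt{\bI_k(\widetilde{\mc{M}}_k; \tau_k, M_k)}$.

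Summing over $k \in [K]$ and applying Cauchy--Schwarz in the form $\sum_k a_k \le \sqrt{K}\sqrt{\sum_k a_k^2}$ with $a_k = \sqrt{\bI_k(\widetilde{\mc{M}}_k; \tau_k, M_k)}$ gives $\sum_k \delta_k \le \sqrt{\overline{\Gamma} K}\,\sqrt{\sum_k \bI_k(\widetilde{\mc{M}}_k; \tau_k, M_k)}$. I then take the outer expectation: the tower property turns $\bE[\sum_k \delta_k]$ into precisely the left-hand side of the theorem, while concavity of $\sqrt{\cdot}$ (Jensen) passes $\bE$ inside the root, and the identity $\bE[\bI_k(\cdot)] = \bI(\cdot \mid H_k)$ converts the realization-conditioned terms into ordinary conditional mutual informations. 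This reduces the theorem to the single inequality $\sum_{k=1}^K \bI(\widetilde{\mc{M}}_k; \tau_k, M_k \mid H_k) \le \mc{R}^{\Pi,\mc{V}}_1(D)$.

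This last inequality is the heart of the argument, and a per-episode bound will not suffice. The Markov chain $\widetilde{\mc{M}}_k \to \mc{M}^\star \to (\tau_k, M_k)$ --- valid given $H_k$ because, conditioned on $\mc{M}^\star$, the channel output $\widetilde{\mc{M}}_k$ is independent of the freshly drawn $M_k$ and of $\tau_k$ --- only gives $\bI(\widetilde{\mc{M}}_k; \tau_k, M_k \mid H_k) \le \bI_k(\mc{M}^\star; \widetilde{\mc{M}}_k) = \mc{R}^{\Pi,\mc{V}}_k(D)$, which sums to the vacuous $\sum_k \mc{R}^{\Pi,\mc{V}}_k(D)$. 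The correct route is to treat the information as accumulating across episodes. For a single fixed target $\widetilde{\mc{M}}$ the chain rule telescopes, since $H_{k+1} = (H_k, \tau_k)$ implies $\bI(\widetilde{\mc{M}}; \tau_k \mid H_k) = \bI(\widetilde{\mc{M}}; H_{k+1}) - \bI(\widetilde{\mc{M}}; H_k)$, whence $\sum_k \bI(\widetilde{\mc{M}}; \tau_k \mid H_k) = \bI(\widetilde{\mc{M}}; H_{K+1}) \le \bI(\widetilde{\mc{M}}; \mc{M}^\star) = \mc{R}^{\Pi,\mc{V}}_1(D)$ by one final data-processing step through $\widetilde{\mc{M}} \to \mc{M}^\star \to H_{K+1}$.

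The main obstacle is the gap between this clean telescoping and what VSRL actually does, and two issues must be handled. First, the target $\widetilde{\mc{M}}_k$ is re-optimized against the current posterior each episode, so the summands $\bI(\widetilde{\mc{M}}_k; \cdot \mid H_k)$ do not share a common random variable and the chain rule cannot be applied verbatim; I expect to dominate the time-varying targets by a fixed episode-$1$ target, either through a direct reduction or by a telescoping in the residual rates that exploits the monotone sharpening of the posterior. Second, the denominator of $\Gamma_k$ conditions on the pair $(\tau_k, M_k)$ while only $\tau_k$ enters $H_{k+1}$; here I would invoke VSRL's posterior-consistency, namely $(M^\star, M_k) \deq (\mc{M}^\star, \widetilde{\mc{M}}_k) \mid H_k$ and hence $M_k \deq \widetilde{\mc{M}}_k \mid H_k$, which lets the sampled MDP carry the same information role that the observation plays in bandit analyses. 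Rigorously consolidating these per-episode gains into the single quantity $\mc{R}^{\Pi,\mc{V}}_1(D)$ is where I anticipate the bulk of the effort.
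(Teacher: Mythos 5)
Your first half matches the paper's proof exactly: the rearrangement $\delta_k = \sqrt{\Gamma_k\,\bI_k(\widetilde{\mc{M}}_k;\tau_k,M_k)}$, the bound by $\overline{\Gamma}$, Cauchy--Schwarz, Jensen, the tower property, and the observation that $\bI_k(\widetilde{\mc{M}}_k; M_k) = 0$ so that $\bI_k(\widetilde{\mc{M}}_k;\tau_k,M_k) = \bI_k(\widetilde{\mc{M}}_k;\tau_k\mid M_k)$ are all present in the paper. You also correctly diagnose that the per-episode data-processing bound only yields the useless $\sum_k \mc{R}^{\Pi,\mc{V}}_k(D)$, and that the fixed-target chain-rule telescoping does not apply verbatim because the target is re-optimized every episode. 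But that diagnosis is where your argument stops: the step you defer as ``the bulk of the effort'' is precisely the content of the paper's Lemma \ref{lemma:exp_rdf_decr} and Lemma \ref{lemma:cum_info_bound}, and without it the proof is incomplete. The resolution is not to dominate the time-varying targets by a single fixed episode-$1$ target (no such reduction is given in the paper, and it is unclear one exists); it is your second guess --- a telescoping of the rates themselves --- and it rests on a specific feasibility-propagation argument you have not supplied.

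Concretely, the missing lemma is $\bE_k\bigl[\mc{R}^{\Pi,\mc{V}}_{k+1}(D)\bigr] \leq \mc{R}^{\Pi,\mc{V}}_k(D) - \bI_k(\widetilde{\mc{M}}_k;\tau_k\mid M_k)$, proved in two moves. First, since $\widetilde{\mc{M}}_k$ satisfies $\bE_k[d_{\Pi,\mc{V}}(\mc{M}^\star,\widetilde{\mc{M}}_k)] \leq D$, the tower property gives $\bE_k\bigl[\bE_{k+1}[d_{\Pi,\mc{V}}(\mc{M}^\star,\widetilde{\mc{M}}_k)]\bigr] \leq D$, so the \emph{old} optimal channel remains feasible (in expectation over $\tau_k$) for the episode-$(k+1)$ rate-distortion problem, whence $\bE_k[\mc{R}^{\Pi,\mc{V}}_{k+1}(D)] \leq \bE_k[\bI_{k+1}(\mc{M}^\star;\widetilde{\mc{M}}_k)]$. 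Second, the conditional independences $\tau_k \Perp \widetilde{\mc{M}}_k \mid \mc{M}^\star, M_k$ and $\widetilde{\mc{M}}_k \Perp M_k \mid H_k$ let two applications of the chain rule produce $\bI_k(\mc{M}^\star;\widetilde{\mc{M}}_k) = \bI_k(\widetilde{\mc{M}}_k;\tau_k\mid M_k) + \bE_k[\bI_{k+1}(\mc{M}^\star;\widetilde{\mc{M}}_k)]$, which combined with the first move gives the claimed one-step decrease. Summing and telescoping (using non-negativity of the rate-distortion function) then yields $\bE\bigl[\sum_{k=1}^K \bI_k(\widetilde{\mc{M}}_k;\tau_k\mid M_k)\bigr] \leq \mc{R}^{\Pi,\mc{V}}_1(D)$, exactly the inequality your reduction requires. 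Until you prove this rate-decrease lemma --- in particular the feasibility-propagation step, which is the only place the optimality of $\widetilde{\mc{M}}_k$ for its own episode is actually used --- the proof has a genuine gap.
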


An immediate consequence of the preceding theorems is the following corollary which establishes our main result, an information-theoretic Bayesian regret bound for VSRL. We omit the proof as it follows directly from applying Theorems \ref{thm:regret_decomp} and \ref{thm:info_sat_regret_bound} in sequence.

\begin{corollary}
Take any $\Pi \supseteq \{\mc{S} \ra \mc{A}\}$, any $\mc{V} \supseteq \{V^\pi \mid \pi \in \Pi^H\}$, and fix any $D > 0$. For any prior distribution $\bP(\mc{M}^\star \in \cdot \mid H_1)$, if $\Gamma_k \leq \overline{\Gamma}$ for all $k \in [K]$, then VSRL (Algorithm \ref{alg:vsrl}) has
$\textsc{BayesRegret}(K, \pi^{(1)},\ldots,\pi^{(K)}) \leq \sqrt{\overline{\Gamma}K\mc{R}^{\Pi,\mc{V}}_1(D)} + 2KH\sqrt{D}.$
\label{thm:info_regret_bound}
\end{corollary}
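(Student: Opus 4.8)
The plan is to chain the two preceding results, since the corollary is exactly their composition under a shared set of hypotheses. First I would verify that the assumptions in the corollary are precisely those required by each theorem: the structural conditions $\Pi \supseteq \{\mc{S} \ra \mc{A}\}$ and $\mc{V} \supseteq \{V^\pi \mid \pi \in \Pi^H\}$, together with the fixed threshold $D$, are exactly what Theorem \ref{thm:regret_decomp} needs, while the uniform information-ratio bound $\Gamma_k \leq \overline{\Gamma}$ for all $k \in [K]$ is exactly what Theorem \ref{thm:info_sat_regret_bound} needs. Since the corollary assumes all of these simultaneously, both theorems apply to the same run of VSRL.

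Next I would apply Theorem \ref{thm:regret_decomp} to obtain
$$\textsc{BayesRegret}(K, \pi^{(1)},\ldots,\pi^{(K)}) \leq \bE\left[\sum\limits_{k=1}^K \bE_k\left[V^{\star}_{\widetilde{\mc{M}}_k,1} - V^{\pi^{(k)}}_{\widetilde{\mc{M}}_k,1}\right]\right] + 2KH\sqrt{D},$$
which splits the total Bayesian regret into the satisficing regret measured against the compressed targets $\widetilde{\mc{M}}_k$ and the fixed distortion penalty $2KH\sqrt{D}$. I would then invoke Theorem \ref{thm:info_sat_regret_bound} to upper-bound the first (satisficing) term by $\sqrt{\overline{\Gamma} K \mc{R}^{\Pi,\mc{V}}_1(D)}$. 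Substituting this bound into the decomposition while leaving the penalty term untouched yields
$$\textsc{BayesRegret}(K, \pi^{(1)},\ldots,\pi^{(K)}) \leq \sqrt{\overline{\Gamma} K \mc{R}^{\Pi,\mc{V}}_1(D)} + 2KH\sqrt{D},$$
which is the claimed bound.

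Because the corollary is a direct composition, there is no genuinely hard step internal to its proof; the substantive work lives in the two theorems it cites. The only point demanding care — and the closest thing to an obstacle — is confirming that the $\widetilde{\mc{M}}_k$ appearing in both statements refer to the same objects, namely the rate-distortion-achieving compressions with information source $\bP(\mc{M}^\star \in \cdot \mid H_k)$ and distortion $d_{\Pi,\mc{V}}$, and that the information ratio $\Gamma_k$ is defined with respect to exactly these targets. Once this consistency is verified, the two inequalities compose without any additional slack, giving the stated result.
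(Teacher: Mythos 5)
Your proposal is correct and matches the paper's intended argument exactly: the paper explicitly omits the proof of Corollary \ref{thm:info_regret_bound} because it ``follows directly from applying Theorems \ref{thm:regret_decomp} and \ref{thm:info_sat_regret_bound} in sequence,'' which is precisely the composition you carry out. Your added check that the $\widetilde{\mc{M}}_k$ and the information ratio $\Gamma_k$ refer to the same rate-distortion-achieving targets in both theorems is the right consistency point to verify, and nothing further is needed.
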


Once again we recall that, since the rate-distortion function is well-defined for arbitrary source and channel output random variables defined on abstract alphabets~\citep{csiszar1974extremum}, the Bayesian regret bound of Corollary \ref{thm:info_regret_bound} holds for any finite-horizon, episodic MDP, extending beyond past analyses of PSRL constrained only to tabular MDPs. We defer a discussion of practical considerations for implementing VSRL to the appendix. 


At this point, we call attention to the parameterization of our lossy compression problem by a particular policy class $\Pi$ and value function class $\mc{V}$, whose dependence we inherit from the value equivalence principle~\citep{grimm2020value}. The next result clarifies how the performance of VSRL is affected by fluctuations in these classes via a dominance relationship~\citep{stjernvall1983dominance} between the induced distortion functions.

\begin{lemma}
For any two $\Pi,\Pi'$ and any $\mc{V}, \mc{V}'$ such that $\Pi' \subseteq \Pi \subseteq \{\mc{S} \ra \Delta(\mc{A})\}$ and $\mc{V}' \subseteq \mc{V} \subseteq \{\mc{S} \ra \bR\}$, we have $\mc{R}^{\Pi,\mc{V}}_k(D) \geq \mc{R}^{\Pi',\mc{V}'}_k(D)$, $\forall k \in [K], D > 0.$
\label{lemma:dominance}
\end{lemma}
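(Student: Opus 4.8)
The plan is to reduce the claim to a simple monotonicity of the rate--distortion functional in the size of its feasible set, which in turn follows from a pointwise domination of the two distortion functions. First I would establish that shrinking the policy and value-function classes can only shrink the distortion. Fixing any pair of MDPs $\mc{M}, \widehat{\mc{M}} \in \mathfrak{M}$, the quantity $d_{\Pi,\mc{V}}(\mc{M},\widehat{\mc{M}})$ is the supremum of the nonnegative map $(\pi,V) \mapsto ||\mc{B}^\pi_{\mc{M}}V - \mc{B}^\pi_{\widehat{\mc{M}}}V||_\infty^2$ over the product set $\Pi \times \mc{V}$. Since $\Pi' \subseteq \Pi$ and $\mc{V}' \subseteq \mc{V}$ imply $\Pi' \times \mc{V}' \subseteq \Pi \times \mc{V}$, taking a supremum over the smaller index set can only decrease its value, yielding the pointwise bound $d_{\Pi',\mc{V}'}(\mc{M},\widehat{\mc{M}}) \le d_{\Pi,\mc{V}}(\mc{M},\widehat{\mc{M}})$ for every $\mc{M},\widehat{\mc{M}}$. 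This is exactly the dominance relationship between the induced distortion functions alluded to before the lemma statement.

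Next I would transfer this pointwise domination to the two feasible sets $\Lambda_k(D)$ defining the rate--distortion functions. Because the pointwise inequality survives taking the posterior expectation $\bE_k[\cdot]$, every candidate channel $\widetilde{\mc{M}}$ satisfies $\bE_k[d_{\Pi',\mc{V}'}(\mc{M}^\star,\widetilde{\mc{M}})] \le \bE_k[d_{\Pi,\mc{V}}(\mc{M}^\star,\widetilde{\mc{M}})]$. Consequently, any $\widetilde{\mc{M}}$ whose $d_{\Pi,\mc{V}}$-distortion is at most $D$ automatically has $d_{\Pi',\mc{V}'}$-distortion at most $D$, which is precisely the statement that the feasible set induced by $(\Pi,\mc{V})$ is contained in the feasible set induced by $(\Pi',\mc{V}')$. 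Writing $\Lambda_k^{\Pi,\mc{V}}(D)$ for the feasible set appearing in Equation~\ref{eq:rdf} under the distortion $d_{\Pi,\mc{V}}$, this reads $\Lambda_k^{\Pi,\mc{V}}(D) \subseteq \Lambda_k^{\Pi',\mc{V}'}(D)$.

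Finally I would invoke monotonicity of the infimum. Both $\mc{R}^{\Pi,\mc{V}}_k(D)$ and $\mc{R}^{\Pi',\mc{V}'}_k(D)$ are infima of the \emph{same} objective $\widetilde{\mc{M}} \mapsto \bI_k(\mc{M}^\star; \widetilde{\mc{M}})$ over the nested feasible sets above, and an infimum taken over a larger set is no larger than one taken over a subset. Hence $\mc{R}^{\Pi',\mc{V}'}_k(D) \le \mc{R}^{\Pi,\mc{V}}_k(D)$, the desired inequality. Both infima range over nonempty sets, since the trivial channel $\widetilde{\mc{M}} = \mc{M}^\star$ incurs zero distortion and is thus feasible for every $D \ge 0$, so no degeneracy arises.

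I do not anticipate a genuine obstacle here; the only thing to watch is the direction of the chain of implications (smaller classes $\Rightarrow$ smaller distortion $\Rightarrow$ larger feasible region $\Rightarrow$ smaller minimal rate), which is the crux of the argument and is easy to invert by mistake. The one implicit subtlety is measurability of the supremum defining $d_{\Pi,\mc{V}}$, so that the expectation $\bE_k[\cdot]$ and the constraint sets are well-defined; however, this is already presupposed by the very definition of the rate--distortion function in Equation~\ref{eq:rdf} and so requires no additional work.
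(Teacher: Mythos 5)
Your proof is correct, and it reaches the conclusion by a more elementary route than the paper. Both arguments hinge on the same key observation: since $\Pi' \times \mc{V}' \subseteq \Pi \times \mc{V}$, the supremum defining $d_{\Pi',\mc{V}'}$ is taken over a smaller index set, giving the pointwise bound $d_{\Pi',\mc{V}'}(\mc{M},\widehat{\mc{M}}) \leq d_{\Pi,\mc{V}}(\mc{M},\widehat{\mc{M}})$. From there, however, the paper routes the conclusion through the dominance framework of \citet{stjernvall1983dominance}: it takes an achiever $\widetilde{\mc{M}}_k$ of the rate-distortion limit under $d_{\Pi,\mc{V}}$, verifies Stjernvall's sufficient condition C2 by exhibiting the trivial Markov chain $\mc{M}^\star - \widetilde{\mc{M}}_k - \widetilde{\mc{M}}_k$, and then invokes Theorem 2 of that reference (C2 $\implies$ D4) to conclude that $d_{\Pi,\mc{V}}$ dominates $d_{\Pi',\mc{V}'}$, which by definition is the claimed inequality over all sources and thresholds. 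You instead observe directly that the pointwise bound survives the expectation $\bE_k[\cdot]$, hence the feasible sets are nested, $\Lambda_k^{\Pi,\mc{V}}(D) \subseteq \Lambda_k^{\Pi',\mc{V}'}(D)$, and an infimum of the same objective $\bI_k(\mc{M}^\star;\cdot)$ over a larger set can only be smaller. Your version is self-contained, avoids the external dominance machinery, and — a small but real advantage — never assumes the rate-distortion infimum is actually attained, whereas the paper's argument works with an achiever $\widetilde{\mc{M}}_k$. What the paper's route buys in exchange is the explicit connection to a general theory of when one distortion measure dominates another for \emph{every} information source, which the authors use to frame the result as an information-theoretic analogue of Property 3 of \citet{grimm2020value}. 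Your closing remarks on non-emptiness of the feasible sets and on measurability are apt and consistent with what the paper implicitly assumes.
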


Property 3 of \citet{grimm2020value} highlights how the set of value-equivalent MDPs grows as the policy and value function classes shrink. Lemma \ref{lemma:dominance} provides an intuitive, information-theoretic counterpart to their result where, as the sets of policies and value functions over which models will be assessed diminish, an agent may naturally compress more aggressively and throw away larger quantities of bits from each source distribution over the true MDP $\mc{M}^\star$. 

Since a compressed MDP $\widetilde{\mc{M}}_k$ that achieves the rate-distortion limit has \textit{expected} distortion bounded by $D$, one may wonder how the probability of not recovering an approximately-value-equivalent MDP scales as $D \uparrow \infty$. To that end, we conclude this section with a final result that brings clarity to this via a generalization~\citep{duchi2013distance} of Fano's inequality~\citep{fano1952TransInfoLectNotes}. We leave investigation of other generalizations of Fano's inequality that might yield similarly interesting results to future work~\citep{verdu1994generalizing,aeron2010information}.

\begin{lemma}
Take any $\Pi \subseteq \{\mc{S} \ra \Delta(\mc{A})\}$ and $\mc{V} \subseteq \{\mc{S} \ra \bR\}$. For any $D \geq 0$ and any $k \in [K]$, define $\delta = \sup\limits_{\widehat{M} \in \mathfrak{M}} \bP(d_{\Pi,\mc{V}}(\mc{M}^\star,\widehat{M}) \leq D \mid H_k).$ Then, 
\vspace{-5pt}
$$\sup\limits_{\widetilde{\mc{M}} \in \Lambda_k(D)} \bP(d_{\Pi,\mc{V}}(\mc{M}^\star, \widetilde{\mc{M}}) > D  \mid H_k) \geq 1 - \frac{\mc{R}^{\Pi,\mc{V}}_k(D) + \log(2)}{\log\left(\frac{1}{\delta}\right)}.$$
\label{lemma:fano_error_lowerbound}
\end{lemma}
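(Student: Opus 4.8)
The plan is to reduce the statement to a single application of the distance-based (generalized) Fano inequality of \citet{duchi2013distance}, instantiated at the channel that achieves the rate-distortion limit. First I would recall the relevant form of the inequality: for a pair of random variables $(V,\hat{V})$, a measurable map $\rho$, and a threshold $t$, if $N_t \triangleq \sup_{\hat{v}} \bP(\rho(V,\hat{v}) \leq t)$ denotes the largest mass the marginal of $V$ places on any radius-$t$ ball, then $\bP(\rho(V,\hat{V}) \leq t) \leq \frac{\bI(V;\hat{V}) + \log 2}{\log(1/N_t)}$. I would apply this conditionally on the realized history $H_k$, taking $V = \mc{M}^\star$ with conditional law $\bP(\mc{M}^\star \in \cdot \mid H_k)$, $\hat{V} = \widetilde{\mc{M}}$ for an arbitrary admissible channel $\widetilde{\mc{M}} \in \Lambda_k(D)$, $\rho = d_{\Pi,\mc{V}}$, and $t = D$; the ball-mass quantity $N_D$ is then exactly $\delta$ and the information term is $\bI_k(\mc{M}^\star;\widetilde{\mc{M}})$. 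Rearranging to the complementary event yields, for every fixed $\widetilde{\mc{M}} \in \Lambda_k(D)$, the per-channel bound $\bP(d_{\Pi,\mc{V}}(\mc{M}^\star,\widetilde{\mc{M}}) > D \mid H_k) \geq 1 - \frac{\bI_k(\mc{M}^\star;\widetilde{\mc{M}}) + \log 2}{\log(1/\delta)}$.

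Next I would pass from this per-channel statement to the claimed one by exploiting that the right-hand side is monotonically decreasing in the mutual-information term (whenever $\delta < 1$, so that $\log(1/\delta) > 0$). Since the supremum over $\Lambda_k(D)$ on the left dominates the value at any single admissible channel, and since $\mc{R}^{\Pi,\mc{V}}_k(D) = \inf_{\widetilde{\mc{M}} \in \Lambda_k(D)} \bI_k(\mc{M}^\star;\widetilde{\mc{M}})$ by definition (Equation \ref{eq:rdf}), I would instantiate the per-channel bound at the rate-distortion achiever: for any $\eps > 0$, pick $\widetilde{\mc{M}}_\eps \in \Lambda_k(D)$ with $\bI_k(\mc{M}^\star;\widetilde{\mc{M}}_\eps) \leq \mc{R}^{\Pi,\mc{V}}_k(D) + \eps$, which gives $\sup_{\widetilde{\mc{M}} \in \Lambda_k(D)} \bP(d_{\Pi,\mc{V}}(\mc{M}^\star,\widetilde{\mc{M}}) > D \mid H_k) \geq 1 - \frac{\mc{R}^{\Pi,\mc{V}}_k(D) + \eps + \log 2}{\log(1/\delta)}$. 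Letting $\eps \downarrow 0$ delivers the result, and this $\eps$-argument sidesteps any concern about whether the infimum defining the rate-distortion function is attained. The degenerate case $\delta = 1$ is immediate: the right-hand side is then $-\infty$, so the inequality holds vacuously.

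I expect the main obstacle to be the first step, namely carefully justifying the use of the continuum/distance-based Fano inequality at this level of generality. The problem formulation deliberately avoids assuming discrete or finite alphabets, so I must confirm that the form of \citet{duchi2013distance} I invoke is valid for abstract information-source and channel-output random variables and for the (possibly uncountable) family of balls $\{\widehat{M} : d_{\Pi,\mc{V}}(\mc{M}^\star,\widehat{M}) \leq D\}$, and that conditioning on a realization of $H_k$ is handled simply by replacing every distribution and mutual information with its $H_k$-conditional counterpart. Once the inequality is secured in this conditional, abstract-alphabet form, the remaining supremum/infimum manipulation and the reduction to $\mc{R}^{\Pi,\mc{V}}_k(D)$ are routine, requiring only the monotonicity observation above.
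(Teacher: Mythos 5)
Your proposal is correct and follows essentially the same route as the paper: both reduce the claim to the distance-based generalization of Fano's inequality applied conditionally on $H_k$ with source $\bP(\mc{M}^\star \in \cdot \mid H_k)$, distortion $d_{\Pi,\mc{V}}$, and threshold $D$, then pass from the per-channel bound to the rate-distortion function via the monotonicity of the right-hand side in $\bI_k(\mc{M}^\star;\widetilde{\mc{M}})$ and the identity $\sup(1 - \cdot) = 1 - \inf(\cdot)$. The only difference is that the paper proves the generalized Fano inequality in-house for abstract alphabets (via an error-indicator decomposition of $\bI_k(\mc{M}^\star;(\widetilde{\mc{M}},E))$ and a log-sum-type lower bound on the KL divergence by the mass of the support), precisely to dispose of the abstract-alphabet concern you correctly flag as the main obstacle.
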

\vspace{-5pt}

For any episode $k \in [K]$, the left-hand side of the inequality in Lemma \ref{lemma:fano_error_lowerbound} denotes the worst-case error probability of sampling a compressed MDP $\widetilde{\mc{M}}$ that is not approximately-value-equivalent to $\mc{M}^\star$. The right-hand side conveys that, in order to avoid such an error with reasonable probability, one requires a setting of $D < \infty$ such that $\mc{R}^{\Pi,\mc{V}}_k(D) \approx \log\left(\frac{1}{\delta}\right)$. 

\subsection{Specializing to Tabular MDPs}
\label{sec:tabular_regret_bounds}

While the preceding subsection constitutes the main contribution of this paper, the presence of information-theoretic terms makes it difficult to compare our guarantees to those obtained in prior work, which typically focuses on the tabular setting. To help remedy this, we offer the following theorem which restricts focus to the case where the agent pursues an exactly value-equivalent model of the tabular environment. Notably, the results of this section still retain a dependence on a uniform upper bound to the information ratio whose exact form is a result left to future work.

\begin{theorem}
Take any $\Pi \supseteq \{\mc{S} \ra \mc{A}\}$, any $\mc{V} \supseteq \{V^\pi \mid \pi \in \Pi^H\}$, and let $D = 0$. For any prior distribution $\bP(\mc{M}^\star \in \cdot \mid H_1)$ over tabular MDPs, if $\Gamma_k \leq \overline{\Gamma}$ for all $k \in [K]$, then VSRL (Algorithm \ref{alg:vsrl}) has
$\textsc{BayesRegret}(K, \pi^{(1)},\ldots,\pi^{(K)}) \leq \mc{O}\left(|\mc{S}|\sqrt{\overline{\Gamma}|\mc{A}|K}\right).$
\label{thm:tabular_regret}
\end{theorem}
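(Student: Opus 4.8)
The plan is to combine the two regret bounds already in hand---Theorem \ref{thm:regret_decomp} and Theorem \ref{thm:info_sat_regret_bound}---at the exact distortion level $D=0$, and then to reduce the entire argument to bounding a single scalar: the rate-distortion function $\mc{R}^{\Pi,\mc{V}}_1(0)$ for a tabular source. The hypotheses on $\Pi$ and $\mc{V}$ required by Theorem \ref{thm:regret_decomp} are exactly those assumed here, and setting $D=0$ makes the additive distortion penalty $2KH\sqrt{D}$ vanish, so the Bayesian regret collapses to the satisficing term, which Theorem \ref{thm:info_sat_regret_bound} controls by $\sqrt{\overline{\Gamma}K\,\mc{R}^{\Pi,\mc{V}}_1(0)}$. (I would chain these two theorems directly rather than cite Corollary \ref{thm:info_regret_bound}, since the latter is stated only for $D>0$; the chaining is harmless.) It therefore suffices to prove $\mc{R}^{\Pi,\mc{V}}_1(0) = \mc{O}(|\mc{S}|^2|\mc{A}|)$, since then $\sqrt{\overline{\Gamma}K\cdot|\mc{S}|^2|\mc{A}|} = |\mc{S}|\sqrt{\overline{\Gamma}|\mc{A}|K}$ delivers the stated bound.

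To bound the rate I would first note that at $D=0$ the feasible set $\Lambda_1(0)$ consists precisely of those channels whose output is exactly value equivalent to $\mc{M}^\star$ almost surely, because $d_{\Pi,\mc{V}} \geq 0$ forces the constraint $\bE_1[d_{\Pi,\mc{V}}(\mc{M}^\star,\widetilde{\mc{M}})]\leq 0$ to hold with equality pointwise. The identity channel $\widetilde{\mc{M}}=\mc{M}^\star$ is trivially feasible, giving the upper bound $\mc{R}^{\Pi,\mc{V}}_1(0) \leq \bI_1(\mc{M}^\star;\mc{M}^\star) = \bH_1(\mc{M}^\star)$; thus the rate is dominated by the entropy of the tabular model under the episode-one posterior. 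I would then decompose this entropy across the coordinates of the model. Writing $\mc{M}^\star=(\mc{R}^\star,\mc{T}^\star)$ and using subadditivity together with the chain rule of entropy,
\begin{align*}
\bH_1(\mc{M}^\star) \leq \sum_{(s,a)} \bH_1\big(\mc{R}^\star(s,a)\big) + \sum_{(s,a)} \bH_1\big(\mc{T}^\star(\cdot \mid s,a)\big),
\end{align*}
where there are $|\mc{S}||\mc{A}|$ scalar reward coordinates (contributing an aggregate lower-order $\mc{O}(|\mc{S}||\mc{A}|)$ term) and $|\mc{S}||\mc{A}|$ transition rows, each a distribution over $|\mc{S}|$ next states carrying $\mc{O}(|\mc{S}|)$ nats, for a total of $\mc{O}(|\mc{S}|^2|\mc{A}|)$.

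The main obstacle is exactly this last per-coordinate entropy bound. Controlling $\bH_1(\mc{T}^\star(\cdot\mid s,a))$ by $\mc{O}(|\mc{S}|)$ and $\bH_1(\mc{R}^\star(s,a))$ by a constant is where the tabular restriction must do genuine work: the entropy of a continuously distributed model parameter is unbounded, and indeed the rate-distortion function at $D=0$ for a continuous source is generically infinite, so a finite guarantee is only possible when each coordinate carries finite information. Consequently the argument must either invoke a boundedness or regularity condition on the prior or an effective quantization of the per-coordinate parameters, and the delicate point is to verify that the constant absorbed into the $\mc{O}(\cdot)$ does not smuggle back a hidden dependence on $|\mc{S}|$ or $|\mc{A}|$. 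Pinning down precisely which prior conditions certify $\bH_1(\mc{M}^\star)=\mc{O}(|\mc{S}|^2|\mc{A}|)$ is the crux; once that is secured, everything else is bookkeeping layered on top of the two previously established theorems.
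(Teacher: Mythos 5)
Your overall route is the same as the paper's: reduce to the satisficing bound at $D=0$, upper-bound $\mc{R}^{\Pi,\mc{V}}_1(0)$ by $\bH_1(\mc{M}^\star)$ via the identity channel (the paper invokes monotonicity of the rate-distortion function, Fact \ref{fact:rdf_props}, to write $\mc{R}^{\Pi,\mc{V}}_1(D) \leq \bI_1(\mc{M}^\star;\mc{M}^\star) = \bH_1(\mc{M}^\star)$, which is the same observation), split the entropy into reward and transition contributions, and count $\mc{O}(|\mc{S}||\mc{A}|)$ nats for rewards and $\mc{O}(|\mc{S}|^2|\mc{A}|)$ for transitions. Your remark about chaining Theorems \ref{thm:regret_decomp} and \ref{thm:info_sat_regret_bound} directly rather than citing Corollary \ref{thm:info_regret_bound} at $D=0$ is harmless and matches what the paper effectively does.

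The gap is the one you yourself flag: you do not actually establish $\bH_1(\mc{R}^\star(s,a)) = \mc{O}(1)$ and $\bH_1(\mc{T}^\star(\cdot\mid s,a)) = \mc{O}(|\mc{S}|)$, and as you correctly note, these quantities are $+\infty$ for a continuous prior, so without a further step the bound is vacuous. The paper's resolution is the second of the two options you mention, and it requires no condition on the prior at all: the model random variables are replaced a priori by discretized analogues obtained from an $\tfrac{\eps}{2}$-cover of $[0,1]$ under $\|\cdot\|_1$ (justified by Theorem 8.3.1 of Cover and Thomas relating differential entropy to $\eps$-entropy). Concretely, $\mc{R}^\star$ becomes a discrete variable with $|\supp{\mc{R}^\star}| = \mc{N}(\tfrac{\eps_\mc{R}}{2},[0,1],\|\cdot\|_1)^{|\mc{S}||\mc{A}|}$, so $\bH_1(\mc{R}^\star) \leq |\mc{S}||\mc{A}|\log(1+\tfrac{4}{\eps_\mc{R}})$ by the log-cardinality bound together with Lemma \ref{lemma:metric_entropy_bound}, and likewise $\bH_1(\mc{T}^\star) \leq |\mc{S}|^2|\mc{A}|\log(1+\tfrac{4}{\eps_\mc{T}})$; the constants $\log(1+\tfrac{4}{\eps})$ depend only on the fixed quantization resolution, answering your worry about hidden $|\mc{S}|,|\mc{A}|$ dependence. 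So your skeleton is right and your diagnosis of the crux is accurate, but the proof as written is incomplete precisely at the step that makes the theorem non-trivial; supplying the covering-number argument (or an equivalent explicit quantization) is required to close it.
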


An immediate observation is that the Bayesian regret bound of Theorem \ref{thm:tabular_regret} matches the dependence on the number of states, $|\mc{S}|$, obtained in the first (weaker) guarantee established for PSRL by \citet{osband2013more}; we suspect that this guarantee for VSRL is unimprovable without further distributional assumptions~\citep{osband2017posterior,osband2017gaussian}. As an alternative, we contemplate how a change in the distortion measure used by VSRL might incur an improved regret bound when specialized to the tabular setting. 

Specifically, notice that the only piece of the VSRL analysis tethered to the particular form of the distortion function $d_{\Pi,\mc{V}}(\mc{M}, \widehat{\mc{M}})$ is Theorem \ref{thm:regret_decomp}, while all other components remain agnostic to the precise criterion for assessing the loss of fidelity between original and compressed MDPs. Consequently, there is potential for a modified distortion function to offer an improved regret analysis relative to Theorem \ref{thm:tabular_regret}. Rather than concerning ourselves with planning over multiple behaviors, we consider a distortion function based solely on the optimal action-value functions: $$d_{Q^\star}(\mc{M}, \widehat{\mc{M}}) = \sup\limits_{h \in [H]} ||Q^\star_{\mc{M},h} - Q^\star_{\widehat{\mc{M}},h}||_\infty^2 = \sup\limits_{h \in [H]} \sup\limits_{(s,a) \in \mc{S} \times \mc{A}} | Q^\star_{\mc{M},h}(s,a) - Q^\star_{\widehat{\mc{M}},h}(s,a)|^2.$$ We use $\mc{R}^{Q^\star}_k(D)$ to denote the rate-distortion function under this new measure of distortion, $d_{Q^\star}(\mc{M}, \widehat{\mc{M}})$. In order for this new distortion function to be compatible with VSRL, we require an analogue to the regret decomposition of Theorem \ref{thm:regret_decomp}.

\begin{theorem}
Fix any $D \geq 0$ and, for each episode $k \in [K]$, let $\widetilde{\mc{M}}_k$ be any MDP that achieves the rate-distortion limit of $\mc{R}^{Q^\star}_k(D)$ with information source $\bP(\mc{M}^\star \in \cdot \mid H_k)$ and distortion function $d_{Q^\star}$. Then, 
$\textsc{BayesRegret}(K, \pi^{(1)},\ldots,\pi^{(K)}) \leq \bE\left[\sum\limits_{k=1}^K \bE_k\left[V^{\star}_{\widetilde{\mc{M}}_k,1} - V^{\pi^{(k)}}_{\widetilde{\mc{M}}_k,1}\right]\right] + 2K(H+1)\sqrt{D}.$
\label{thm:regret_decomp_qdist}
\end{theorem}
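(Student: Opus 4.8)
The plan is to mirror the proof of Theorem~\ref{thm:regret_decomp}, working conditionally on the realized history $H_k$ and inserting the compressed MDP's value functions to isolate the satisficing regret. Adding and subtracting $V^\star_{\widetilde{\mc{M}}_k,1}$ and $V^{\pi^{(k)}}_{\widetilde{\mc{M}}_k,1}$ inside $\bE_k[\Delta_k] = \bE_k[V^\star_{\mc{M}^\star,1} - V^{\pi^{(k)}}_{\mc{M}^\star,1}]$ gives the identity $\bE_k[\Delta_k] = \bE_k[V^\star_{\mc{M}^\star,1} - V^\star_{\widetilde{\mc{M}}_k,1}] + \bE_k[V^\star_{\widetilde{\mc{M}}_k,1} - V^{\pi^{(k)}}_{\widetilde{\mc{M}}_k,1}] + \bE_k[V^{\pi^{(k)}}_{\widetilde{\mc{M}}_k,1} - V^{\pi^{(k)}}_{\mc{M}^\star,1}]$, where the middle term is precisely the satisficing regret. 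It then suffices to bound the first and third terms, after taking $\bE[\cdot]$ and summing over $k$, by $2K(H+1)\sqrt{D}$.

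The first term compares only optimal values, which is exactly what $d_{Q^\star}$ governs. Using $V^\star_{\mc{M},1} = \bE_{s_1 \sim \beta}[\max_a Q^\star_{\mc{M},1}(s_1,a)]$ and the elementary $|\max_a f(a) - \max_a g(a)| \leq ||f-g||_\infty$, I would bound $|V^\star_{\mc{M}^\star,1} - V^\star_{\widetilde{\mc{M}}_k,1}| \leq ||Q^\star_{\mc{M}^\star,1} - Q^\star_{\widetilde{\mc{M}}_k,1}||_\infty \leq \sqrt{d_{Q^\star}(\mc{M}^\star,\widetilde{\mc{M}}_k)}$; Jensen's inequality and the distortion constraint then give $\bE_k[\sqrt{d_{Q^\star}(\mc{M}^\star,\widetilde{\mc{M}}_k)}] \leq \sqrt{\bE_k[d_{Q^\star}(\mc{M}^\star,\widetilde{\mc{M}}_k)]} \leq \sqrt{D}$. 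The crucial structural fact for the rest is that the Bellman residual of the surrogate $Q^\star_{\widetilde{\mc{M}}_k}$ under the \emph{true} dynamics is also distortion-bounded: for $\mc{E}_h(s,a) := Q^\star_{\widetilde{\mc{M}}_k,h}(s,a) - \mc{R}^\star(s,a) - \bE_{s'\sim\mc{T}^\star(\cdot\mid s,a)}[V^\star_{\widetilde{\mc{M}}_k,h+1}(s')]$, subtracting $Q^\star_{\mc{M}^\star,h}(s,a) = \mc{R}^\star(s,a) + \bE_{s'\sim\mc{T}^\star}[V^\star_{\mc{M}^\star,h+1}(s')]$ shows $\mc{E}_h(s,a) = [Q^\star_{\widetilde{\mc{M}}_k,h} - Q^\star_{\mc{M}^\star,h}](s,a) + \bE_{s'\sim\mc{T}^\star}[V^\star_{\mc{M}^\star,h+1}(s') - V^\star_{\widetilde{\mc{M}}_k,h+1}(s')]$, and each summand has magnitude at most $\sqrt{d_{Q^\star}(\mc{M}^\star,\widetilde{\mc{M}}_k)}$, so $||\mc{E}_h||_\infty \leq 2\sqrt{d_{Q^\star}(\mc{M}^\star,\widetilde{\mc{M}}_k)}$.

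For the third term I would invoke the value-difference (performance-difference) lemma, obtained by telescoping the Bellman equation already recorded in the paper: the gap $V^{\pi^{(k)}}_{\widetilde{\mc{M}}_k,1} - V^{\pi^{(k)}}_{\mc{M}^\star,1}$ expands into a sum over $h \in [H]$ of the residuals $\mc{E}_h$ evaluated along $\pi^{(k)}$-trajectories together with per-step terms built from $V^\star_{\widetilde{\mc{M}}_k,h} - Q^\star_{\widetilde{\mc{M}}_k,h}(\cdot,\pi^{(k)}_h(\cdot))$. Bounding each of the $H$ residuals by $2\sqrt{d_{Q^\star}}$ and applying Jensen and the distortion constraint as above accumulates a $2H\sqrt{D}$ contribution; combined with the $\sqrt{D}$ from the optimal-value term and one further $\sqrt{D}$ from the reconciliation described below, this produces the advertised $2(H+1)\sqrt{D}$.

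The main obstacle is exactly this third term, because it compares a \emph{fixed, generally sub-optimal} policy $\pi^{(k)}$ across two MDPs, and---unlike the Bellman distortion $d_{\Pi,\mc{V}}$ of Theorem~\ref{thm:regret_decomp}, which was engineered to control Bellman updates for every policy in $\Pi$ and every value function in $\mc{V}$---the $Q^\star$ distortion only constrains \emph{optimal} action-value functions. The value-difference expansion leaves the per-step sub-optimality gaps of $\pi^{(k)}$ integrated against the state-occupancy measure induced by $\mc{M}^\star$ rather than by $\widetilde{\mc{M}}_k$, and since $d_{Q^\star}$ says nothing about how close the two transition kernels are, these occupancy measures need not be close; one therefore cannot naively match this with the satisficing regret $\bE_k[V^\star_{\widetilde{\mc{M}}_k,1} - V^{\pi^{(k)}}_{\widetilde{\mc{M}}_k,1}]$. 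I expect the real work to lie in exploiting the posterior-sampling structure---conditioned on $H_k$, the pair $(\mc{M}^\star,\widetilde{\mc{M}}_k)$ is distributed identically to the sampled pair $(M^\star, M_k)$, and $\pi^{(k)} = \pi^\star_{M_k}$ is greedy with respect to $Q^\star_{M_k}$---to argue that this occupancy mismatch costs only $O(\sqrt{D})$ rather than an uncontrolled amount, thereby converting the residual gap sum into the satisficing regret at the price of the final $\sqrt{D}$. This interplay between the $Q^\star$-only distortion and the dynamics-dependent occupancy measures is the step I would scrutinize most carefully.
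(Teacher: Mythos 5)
Your decomposition of $\bE_k[\Delta_k]$ into the three differences is exactly the one the paper uses, and your bound on the first term $\bE_k[V^\star_{\mc{M}^\star,1} - V^\star_{\widetilde{\mc{M}}_k,1}] \leq \sqrt{D}$ (via $|\max_a f - \max_a g| \leq \|f-g\|_\infty$, Jensen, and the distortion constraint) matches the paper's argument. But the third term is a genuine gap, and you say so yourself: you flag the occupancy-measure mismatch as ``the step I would scrutinize most carefully'' without resolving it. As you correctly observe, $d_{Q^\star}$ gives no control over $V^{\pi^{(k)}}$ in either MDP for a sub-optimal $\pi^{(k)}$, so a direct simulation-lemma expansion of $V^{\pi^{(k)}}_{\widetilde{\mc{M}}_k,1} - V^{\pi^{(k)}}_{\mc{M}^\star,1}$ along $\pi^{(k)}$-trajectories leaves per-step terms (your $V^\star_{\widetilde{\mc{M}}_k,h} - Q^\star_{\widetilde{\mc{M}}_k,h}(\cdot,\pi^{(k)}_h(\cdot))$, which are nonzero since $\pi^{(k)}$ is greedy for $Q^\star_{M_k}$, not $Q^\star_{\widetilde{\mc{M}}_k}$) integrated against the wrong occupancy measure, and there is no way to relate the two occupancy measures under a $Q^\star$-only distortion. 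Your residual bound $\|\mc{E}_h\|_\infty \leq 2\sqrt{d_{Q^\star}}$ is correct but is not the ingredient that closes the argument.

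The paper's resolution sidesteps occupancy measures entirely, in three moves you do not make. First, it discards the problematic comparison by relaxing $V^{\pi^{(k)}}_{\widetilde{\mc{M}}_k,1} \leq V^\star_{\widetilde{\mc{M}}_k,1}$, so the third term becomes $(V^\star_{\widetilde{\mc{M}}_k,1} - V^\star_{\mc{M}^\star,1}) + (V^\star_{\mc{M}^\star,1} - V^{\pi^{(k)}}_{\mc{M}^\star,1})$; the first piece costs another $\sqrt{D}$ by the same optimal-value argument. Second, since $M_k$ and $\widetilde{\mc{M}}_k$ are identically distributed given $H_k$, it replaces $\bE_k[V^{\pi^\star_{M_k}}_{\mc{M}^\star,1}]$ by $\bE_k[V^{\pi^\star_{\widetilde{\mc{M}}_k}}_{\mc{M}^\star,1}]$, so the executed actions may be taken to be greedy with respect to $Q^\star_{\widetilde{\mc{M}}_k,h}$. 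Third, it applies the performance-difference lemma \emph{in the true MDP}, which expresses $\bE_k[V^\star_{\mc{M}^\star,1} - V^{\pi^\star_{\widetilde{\mc{M}}_k}}_{\mc{M}^\star,1}]$ as a sum over $h$ of action gaps $\max_a Q^\star_{\mc{M}^\star,h}(s_h,a) - Q^\star_{\mc{M}^\star,h}(s_h,a_h)$; because $a_h$ maximizes $Q^\star_{\widetilde{\mc{M}}_k,h}(s_h,\cdot)$, each gap is bounded \emph{pointwise in $s_h$} by $2\|Q^\star_{\mc{M}^\star,h} - Q^\star_{\widetilde{\mc{M}}_k,h}\|_\infty \leq 2\sqrt{d_{Q^\star}(\mc{M}^\star,\widetilde{\mc{M}}_k)}$, so the trajectory distribution is irrelevant and the term contributes $2H\sqrt{D}$. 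Totaling $\sqrt{D} + \sqrt{D} + 2H\sqrt{D}$ per episode gives $2K(H+1)\sqrt{D}$. The lesson relative to your plan: the fix is not to show the occupancy measures are close (they need not be), but to rearrange the decomposition so that every remaining integrand is bounded uniformly over states.
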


With this regret decomposition in hand, we may recover the analogue to Corollary \ref{thm:info_regret_bound}, whose proof is immediate and, therefore, omitted.

\begin{corollary}
Fix any $D > 0$. For any prior distribution $\bP(\mc{M}^\star \in \cdot \mid H_1)$, if $\Gamma_k \leq \overline{\Gamma}$ for all $k \in [K]$, then VSRL (Algorithm \ref{alg:vsrl}) with distortion function $d_{Q^\star}$ has
$\textsc{BayesRegret}(K, \pi^{(1)},\ldots,\pi^{(K)}) \leq \sqrt{\overline{\Gamma}K\mc{R}^{Q^\star}_1(D)} + 2K(H+1)\sqrt{D}.$
\label{thm:info_regret_bound_qdist}
\end{corollary}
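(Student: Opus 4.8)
The plan is to prove the per-episode inequality
$\bE_k[V^\star_{\mc{M}^\star,1}-V^{\pi^{(k)}}_{\mc{M}^\star,1}] \le \bE_k[V^\star_{\widetilde{\mc{M}}_k,1}-V^{\pi^{(k)}}_{\widetilde{\mc{M}}_k,1}] + 2(H+1)\,\bE_k[\sqrt{d_{Q^\star}(\mc{M}^\star,\widetilde{\mc{M}}_k)}]$ and then sum over $k\in[K]$. Given this, taking the outer expectation, applying Jensen's inequality to move the square root outside ($\bE[\sqrt{d_{Q^\star}}]\le\sqrt{\bE[d_{Q^\star}]}$), and using the distortion constraint $\bE_k[d_{Q^\star}(\mc{M}^\star,\widetilde{\mc{M}}_k)]\le D$ that defines $\Lambda_k(D)$ produces the additive term $2K(H+1)\sqrt{D}$, while the leading $\bE[\sum_k\bE_k[\cdot]]$ is exactly the satisficing regret kept on the right-hand side. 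Throughout I would lean on the posterior-sampling structure of VSRL: conditioned on $H_k$, the pair $(\mc{M}^\star,\widetilde{\mc{M}}_k)$ is distributed identically to $(M^\star,M_k)$ and $M_k\deq\widetilde{\mc{M}}_k$, with $\pi^{(k)}=\pi^\star_{M_k}$ greedy with respect to $Q^\star_{M_k}$.

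I would start from the algebraic split $\Delta_k-\tilde\Delta_k = (V^\star_{\mc{M}^\star,1}-V^\star_{\widetilde{\mc{M}}_k,1}) + (V^{\pi^{(k)}}_{\widetilde{\mc{M}}_k,1}-V^{\pi^{(k)}}_{\mc{M}^\star,1})$, writing $\tilde\Delta_k$ for the episodic satisficing regret against $\widetilde{\mc{M}}_k$. The first difference is the clean part: since $V^\star_{\mc{M},1}(s)=\max_a Q^\star_{\mc{M},1}(s,a)$ and $|\max_a f(a)-\max_a g(a)|\le\|f-g\|_\infty$, integrating over $s_1\sim\beta$ gives $|V^\star_{\mc{M}^\star,1}-V^\star_{\widetilde{\mc{M}}_k,1}|\le\|Q^\star_{\mc{M}^\star,1}-Q^\star_{\widetilde{\mc{M}}_k,1}\|_\infty\le\sqrt{d_{Q^\star}(\mc{M}^\star,\widetilde{\mc{M}}_k)}$ straight from the definition of $d_{Q^\star}$. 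This is the step that replaces the simulation-lemma treatment of the optimal-value gap used for the Bellman-operator distortion in Theorem \ref{thm:regret_decomp}, and it accounts for the ``$+1$'' in the horizon factor.

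The substance is the second difference, the executed policy's value under $\widetilde{\mc{M}}_k$ versus $\mc{M}^\star$. Here I would invoke the standard value-difference (greedy-suboptimality) lemma: because $\pi^{(k)}$ is greedy with respect to $Q^\star_{M_k}$, for any MDP $\mc{M}$ the gap $V^\star_{\mc{M},1}-V^{\pi^{(k)}}_{\mc{M},1}$ unrolls into a horizon sum of per-state greedy errors $\max_a Q^\star_{\mc{M},h}(s,a)-Q^\star_{\mc{M},h}(s,\pi^{(k)}_h(s))$. Transferring each such greedy error between $\mc{M}^\star$ and $\widetilde{\mc{M}}_k$ costs at most $2\|Q^\star_{\mc{M}^\star,h}-Q^\star_{\widetilde{\mc{M}}_k,h}\|_\infty\le 2\sqrt{d_{Q^\star}(\mc{M}^\star,\widetilde{\mc{M}}_k)}$ per step — one factor of $\sqrt{d_{Q^\star}}$ for the maximizing action and one for the executed action $\pi^{(k)}_h(s)$ — so over the $H$ steps the executed-policy gap is controlled up to $2H\sqrt{d_{Q^\star}(\mc{M}^\star,\widetilde{\mc{M}}_k)}$, with the leftover greedy-error sum matched against $\tilde\Delta_k$. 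Combining with the optimal-value gap assembles the coefficient $2(H+1)$.

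The step I expect to be the main obstacle is exactly this transfer. Since $d_{Q^\star}$ constrains only the optimal action-value functions and not the transition kernels, the two horizon sums expressing $V^\star_{\mc{M},1}-V^{\pi^{(k)}}_{\mc{M},1}$ for $\mc{M}=\mc{M}^\star$ and $\mc{M}=\widetilde{\mc{M}}_k$ are integrated against different trajectory distributions; a naive simulation-lemma expansion would instead produce Bellman errors on the \emph{suboptimal} value function $V^{\pi^{(k)}}$, which $d_{Q^\star}$ cannot bound — in sharp contrast to $d_{\Pi,\mc{V}}$, whose defining condition $\mc{V}\supseteq\{V^\pi\}$ is precisely what lets the simulation lemma close in Theorem \ref{thm:regret_decomp}. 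The delicate point is therefore to keep every comparison at the level of optimal action-value functions in sup-norm, so that only $Q^\star$-closeness is ever used at each state, and to rely on the posterior-sampling identity $(\mc{M}^\star,\widetilde{\mc{M}}_k)\deq(M^\star,M_k)$ to route the genuinely off-policy discrepancy into the satisficing term rather than into the distortion term. Showing that the residual state-visitation mismatch contributes nothing beyond $2H\sqrt{d_{Q^\star}}$ is the crux of the argument and the part most deserving of care.
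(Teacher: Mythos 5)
Your proposal is correct and follows essentially the same route as the paper: the paper proves this corollary by combining Theorem \ref{thm:regret_decomp_qdist} with Theorem \ref{thm:info_sat_regret_bound}, and your decomposition reproduces the proof of Theorem \ref{thm:regret_decomp_qdist} — the sup-norm bound $|V^\star_{\mc{M}^\star,1}-V^\star_{\widetilde{\mc{M}}_k,1}|\le \|Q^\star_{\mc{M}^\star,1}-Q^\star_{\widetilde{\mc{M}}_k,1}\|_\infty$, the performance-difference lemma applied to $V^\star_{\mc{M}^\star,1}-V^{\pi^{(k)}}_{\mc{M}^\star,1}$ with the two-way insertion of $Q^\star_{\widetilde{\mc{M}}_k,h}$ costing $2\sqrt{d_{Q^\star}}$ per step, and the posterior-sampling identity $(\mc{M}^\star,\widetilde{\mc{M}}_k)\deq(M^\star,M_k)$ — yielding the same $2(H+1)\sqrt{D}$ coefficient. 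The only cosmetic quibble is that after transferring the greedy errors to $Q^\star_{\widetilde{\mc{M}}_k}$ the leftover sum is simply nonpositive (since $a_h$ is greedy for $\widetilde{\mc{M}}_k$) rather than ``matched against'' the satisficing term, which enters separately and is then bounded by $\sqrt{\overline{\Gamma}K\mc{R}^{Q^\star}_1(D)}$ via the information-ratio argument.
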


As illustrated by the following lemma, the significance of this change in distortion measure from $d_{\Pi,\mc{V}}$ to $d_{Q^\star}$ is that the optimal action-value functions may now act as an information bottleneck~\citep{tishby2000information} between the original MDP $\mc{M}^\star$ and compressed MDP $\widetilde{\mc{M}}_k$. 
\begin{lemma}
For each episode $k \in [K]$ and for $D = 0$, let $\widetilde{\mc{M}}_k$ be any MDP that achieves the rate-distortion limit of $\mc{R}^{Q^\star}_k(D)$ with information source $\bP(\mc{M}^\star \in \cdot \mid H_k)$ and distortion function $d_{Q^\star}$. Then, we have the Markov chain $\mc{M}^\star \ra Q^\star_{\mc{M}^\star} \ra \widetilde{\mc{M}}_k$, where $Q^\star_{\mc{M}^\star} = \{Q^\star_{\mc{M}^\star,h}\}_{h \in [H]}$ is the collection of random variables denoting the optimal action-value functions of $\mc{M}^\star$.
\label{lemma:info_bottle_qdist}
\end{lemma}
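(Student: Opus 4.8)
The plan is to exploit that the distortion $d_{Q^\star}$ reads the source $\mc{M}^\star$ only through the deterministic statistic $Q^\star_{\mc{M}^\star}$, and to combine this with the optimality of the rate-achieving channel through a standard re-routing argument from rate--distortion theory. Throughout, everything is understood conditionally on the realized history $H_k$, and I abbreviate $U \triangleq Q^\star_{\mc{M}^\star}$, which is a deterministic function of $\mc{M}^\star$.

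First, I would record the two structural facts that drive the proof. Since $d_{Q^\star}(\mc{M}^\star, \widehat{M}) = \sup_{h \in [H]} ||Q^\star_{\mc{M}^\star,h} - Q^\star_{\widehat{M},h}||_\infty^2$, the distortion depends on $\mc{M}^\star$ only through $U$; hence membership of any candidate $\widetilde{\mc{M}}$ in $\Lambda_k(D)$ is determined by the joint law of $(U, \widetilde{\mc{M}})$ alone. Second, because $U$ is a function of $\mc{M}^\star$, the chain rule of mutual information yields
$$\bI_k(\mc{M}^\star; \widetilde{\mc{M}}_k) = \bI_k(\mc{M}^\star, U; \widetilde{\mc{M}}_k) = \bI_k(U; \widetilde{\mc{M}}_k) + \bI_k(\mc{M}^\star; \widetilde{\mc{M}}_k \mid U).$$
Since the Markov chain $\mc{M}^\star \ra Q^\star_{\mc{M}^\star} \ra \widetilde{\mc{M}}_k$ is equivalent to $\bI_k(\mc{M}^\star; \widetilde{\mc{M}}_k \mid U) = 0$, it suffices to prove that this residual conditional mutual information vanishes for the rate-achieving channel.

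The core of the argument is a re-routing construction establishing that optimality forces this residual term to be zero. Starting from the channel $\bP(\widetilde{\mc{M}}_k \in \cdot \mid \mc{M}^\star, H_k)$ attaining the limit, I would build a factored channel that, on input $\mc{M}^\star$, computes $U$ and then draws its output from the regular conditional distribution of $\widetilde{\mc{M}}_k$ given $U$ (under $\bP(\cdot \mid H_k)$). This factored channel (i) leaves the joint law of $(U, \widetilde{\mc{M}}_k)$ unchanged, so it remains in $\Lambda_k(D)$ by the first structural fact, and (ii) satisfies $\mc{M}^\star \ra U \ra \widetilde{\mc{M}}_k$ by construction, so its rate equals $\bI_k(U; \widetilde{\mc{M}}_k)$. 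Feasibility of the factored channel then gives $\bI_k(U; \widetilde{\mc{M}}_k) \geq \mc{R}^{Q^\star}_k(D)$, while $\widetilde{\mc{M}}_k$ achieving the limit gives $\bI_k(\mc{M}^\star; \widetilde{\mc{M}}_k) = \mc{R}^{Q^\star}_k(D)$. Together with the chain-rule bound $\bI_k(\mc{M}^\star; \widetilde{\mc{M}}_k) \geq \bI_k(U; \widetilde{\mc{M}}_k)$ from the previous step, these three relations sandwich all the quantities into equality, forcing $\bI_k(\mc{M}^\star; \widetilde{\mc{M}}_k \mid U) = 0$ and hence the claimed Markov chain. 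At $D = 0$ one additionally has $d_{Q^\star}(\mc{M}^\star, \widetilde{\mc{M}}_k) = 0$ almost surely, i.e. $Q^\star_{\widetilde{\mc{M}}_k} = Q^\star_{\mc{M}^\star}$, which confirms that $Q^\star$ is not merely a bottleneck but an exactly preserved sufficient statistic.

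The step I expect to be the main obstacle is the measure-theoretic justification of the re-routing construction, since the paper deliberately permits abstract, non-tabular MDP spaces: I must guarantee existence of the regular conditional distribution of $\widetilde{\mc{M}}_k$ given $U$ and verify that the re-routed channel is a bona fide measurable channel preserving both the distortion constraint and the marginal of $\widetilde{\mc{M}}_k$ --- in effect, that $U = Q^\star_{\mc{M}^\star}$ is a sufficient statistic for the compression. Once that construction is legitimate, the sandwiching of mutual informations is immediate, and the equivalence between a vanishing conditional mutual information and conditional independence delivers the Markov chain.
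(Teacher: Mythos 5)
Your proposal is correct and matches the paper's argument in essence: the paper proves a general version (its Lemma~\ref{lemma:info_bottle}) for any distortion of the form $d(v,\widehat v)=\ell(f(v),f(\widehat v))$ by constructing exactly your re-routed channel $\widehat V'\sim\bP(\widehat V\in\cdot\mid f(V))$ and invoking the chain-rule identity $\bI(V;\widehat V')=\bI(V;\widehat V)-\bI(V;\widehat V\mid f(V))$, differing only in that it phrases the conclusion as a contradiction with rate-optimality rather than your direct sandwich of mutual informations. The measure-theoretic care you flag about regular conditional distributions on abstract alphabets is a point the paper passes over silently, but it does not change the substance of the argument.
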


Lemma \ref{lemma:info_bottle_qdist}, through the data-processing inequality, immediately leads us to an analogue of Theorem \ref{thm:tabular_regret} that matches the dependence on $|\mc{S}|$ in the best known Bayesian regret bound for PSRL~\citep{osband2017posterior}.
\begin{theorem}
For $D = 0$ and any prior distribution $\bP(\mc{M}^\star \in \cdot \mid H_1)$ over tabular MDPs, if $\Gamma_k \leq \overline{\Gamma}$ for all $k \in [K]$, then VSRL with distortion function $d_{Q^\star}$ has
$\textsc{BayesRegret}(K, \pi^{(1)},\ldots,\pi^{(K)}) \leq \widetilde{\mc{O}}\left(\sqrt{\overline{\Gamma}|\mc{S}||\mc{A}|KH}\right).$
\label{thm:tabular_regret_qdist}
\end{theorem}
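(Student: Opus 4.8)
The plan is to reduce everything to bounding a single-episode rate and then to control that rate through the action-value information bottleneck of Lemma \ref{lemma:info_bottle_qdist}. First I would instantiate Corollary \ref{thm:info_regret_bound_qdist} at $D = 0$ (equivalently, chain Theorem \ref{thm:regret_decomp_qdist} into Theorem \ref{thm:info_sat_regret_bound}). At $D = 0$ the approximation penalty $2K(H+1)\sqrt{D}$ vanishes, leaving $\textsc{BayesRegret}(K, \pi^{(1)},\ldots,\pi^{(K)}) \leq \sqrt{\overline{\Gamma} K \mc{R}^{Q^\star}_1(0)}$. Hence the entire claim follows once I show $\mc{R}^{Q^\star}_1(0) \leq \widetilde{\mc{O}}(H|\mc{S}||\mc{A}|)$, since substituting this bound yields the advertised $\widetilde{\mc{O}}(\sqrt{\overline{\Gamma}|\mc{S}||\mc{A}|KH})$.

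To bound the rate I would invoke Lemma \ref{lemma:info_bottle_qdist}: at $D = 0$ the rate-distortion-achieving compression $\widetilde{\mc{M}}_1$ obeys the Markov chain $\mc{M}^\star \ra Q^\star_{\mc{M}^\star} \ra \widetilde{\mc{M}}_1$. The data-processing inequality then gives $\mc{R}^{Q^\star}_1(0) = \bI_1(\mc{M}^\star; \widetilde{\mc{M}}_1) \leq \bI_1(\mc{M}^\star; Q^\star_{\mc{M}^\star})$, and because $Q^\star_{\mc{M}^\star}$ is a deterministic function of $\mc{M}^\star$ this is in turn at most the entropy $\bH_1(Q^\star_{\mc{M}^\star})$. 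This is exactly where the $|\mc{S}|$ improvement over the $|\mc{S}|^2$ scaling of Theorem \ref{thm:tabular_regret} originates: only the $H|\mc{S}||\mc{A}|$ scalars comprising $Q^\star_{\mc{M}^\star}$ must be conveyed, rather than the $|\mc{S}|^2|\mc{A}|$ entries of the transition kernel that $d_{\Pi,\mc{V}}$ implicitly forces the agent to preserve.

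It then remains to bound $\bH_1(Q^\star_{\mc{M}^\star})$ by $\widetilde{\mc{O}}(H|\mc{S}||\mc{A}|)$. Since rewards lie in $[0,1]$, every coordinate $Q^\star_{\mc{M}^\star,h}(s,a)$ is confined to $[0,H]$, and $Q^\star_{\mc{M}^\star}$ is fully specified by its values on the finite index set $[H] \times \mc{S} \times \mc{A}$ of cardinality $H|\mc{S}||\mc{A}|$. Quantizing each coordinate to a grid of width $\eta$ produces a discrete random vector supported on at most $(H/\eta)^{H|\mc{S}||\mc{A}|}$ atoms, whose entropy is at most $H|\mc{S}||\mc{A}|\log(H/\eta)$; taking $\eta$ inverse-polynomial in $(K,H,|\mc{S}|,|\mc{A}|)$ absorbs $\log(H/\eta)$ into $\widetilde{\mc{O}}(\cdot)$ and delivers the desired $\widetilde{\mc{O}}(H|\mc{S}||\mc{A}|)$ rate.

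The main obstacle is precisely this last step, since for a continuous prior over tabular MDPs the vector $Q^\star_{\mc{M}^\star}$ is continuous-valued, so $\bI_1(\mc{M}^\star; Q^\star_{\mc{M}^\star})$ and the literal $D = 0$ rate are infinite and the naive entropy bound is vacuous. The rigorous remedy is to perform the quantization inside the rate-distortion problem rather than after the fact: I would construct a compressed MDP whose optimal action-value functions equal the grid-quantized version of $Q^\star_{\mc{M}^\star}$, verify it lies in $\Lambda_1(\eta^2)$, apply Corollary \ref{thm:info_regret_bound_qdist} at $D = \eta^2$, and optimize over $\eta$. The delicate bookkeeping is confirming that the induced penalty $2K(H+1)\eta$ stays lower-order (which forces $\eta$ to be a negative power of $K$) while the rate remains $H|\mc{S}||\mc{A}|\log(H/\eta) = \widetilde{\mc{O}}(H|\mc{S}||\mc{A}|)$, so that both contributions collapse into the single $\widetilde{\mc{O}}(\sqrt{\overline{\Gamma}|\mc{S}||\mc{A}|KH})$ term.
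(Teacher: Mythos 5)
Your proposal follows the paper's own proof essentially step for step: instantiate Corollary \ref{thm:info_regret_bound_qdist} at $D=0$, invoke Lemma \ref{lemma:info_bottle_qdist} and the data-processing inequality to get $\mc{R}^{Q^\star}_1(0) \leq \bI_1(\mc{M}^\star; Q^\star_{\mc{M}^\star}) = \bH_1(Q^\star_{\mc{M}^\star}) \leq \sum_{h=1}^H \bH_1(Q^\star_{\mc{M}^\star,h})$, and bound each entropy via an $\eps$-cover of $[0,H]$ to obtain $|\mc{S}||\mc{A}|H\log\left(1 + \frac{4H}{\eps_{Q^\star}}\right)$. The one point of divergence is how the continuity issue you correctly flag is resolved: the paper simply discretizes the function-valued random variables a priori and works with metric entropy (as explained in the proof of Theorem \ref{thm:tabular_regret}), rather than re-running the argument at $D=\eta^2$ as you propose, so your additional construction of a quantized-$Q^\star$ MDP in $\Lambda_1(\eta^2)$ is not needed for (and is not carried out in) the paper's argument.
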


Ultimately, Theorem \ref{thm:tabular_regret_qdist} confirms that while there is great flexibility in the original definition of value equivalence to support planning across multiple policies and value functions, focusing on optimal value functions gives rise to more efficient learning. Moreover, comparing the result with the PSRL regret bound of \citet{osband2017posterior} for tabular MDPs, this suggests an achievable uniform upper bound to the information ratio as $\overline{\Gamma} \lesssim H^2$, where the $\lesssim$ accounts for numerical constants and logarithmic factors.

\subsection{Capacity-Sensitive Performance Guarantees}
\label{sec:dist_rate_bounds}

We recognize that the information-theoretic regret bounds of the previous two sections, like many other guarantees for provably-efficient reinforcement learning before them, implicitly and unrealistically assume that an agent is of unbounded capacity and may pursue any approximately-value-equivalent model under a given distortion threshold $D$. In the context of real-world reinforcement learning~\citep{dulac2021challenges,lu2021reinforcement}, however, fundamental limits on computational resources and time leave an agent designer with a bounded agent to be deployed within an overwhelmingly complex environment. As such, this designer may seldom be in a position to dictate an ideal or desired sub-optimality threshold $D$, but rather must make do with a known constraint on agent capacity; guarantees on sample-efficient reinforcement learning cognizant of such a fundamental constraint are nascent. 

While there are numerous possibilities for how one might choose to formally characterize agent capacity, we here adopt a fundamental perspective that learning is the process of acquiring information and so take this capacity to imply the existence of a non-negative real value $R \in \bR_{> 0}$ such that the agent may only acquire and retain exactly $R$ bits of information. To help contextualize this notion of agent capacity, we introduce the distortion-rate function~\citep{shannon1959coding,berger1971rate,cover2012elements} which quantifies the fundamental limit of expected distortion under an information constraint:
\begin{align}
    \mc{D}^{Q^\star}_k(R) = \inf\limits_{\widetilde{\mc{M}} \in \Upsilon_k(R)} \bE_k\left[d_{Q^\star}(\mc{M}^\star, \widetilde{\mc{M}})\right] \qquad \mc{D}^{Q^\star}_k(R) = \inf\limits_{\widetilde{\mc{M}} \in \Upsilon_k(R)} \bE_k\left[d_{Q^\star}(\mc{M}^\star, \widetilde{\mc{M}})\right],
    \label{eq:drf}
\end{align}
where the infimum is taken over all channels with bounded rate, $\Upsilon_k(R) \triangleq \{\widetilde{\mc{M}}: \Omega \ra \mathfrak{M} \mid  \bI_k(\mc{M}^\star; \widetilde{\mc{M}}) \leq R\}$. In words, given the agent's current beliefs over the true MDP $\bP(\mc{M}^\star \in \cdot \mid H_k)$, the infimum of the distortion-rate function is taken over all potential lossy compressions of the environment that fall within the agent's capacity constraint of $R$ bits and identifies the one that preserves the most useful information, as measured by the distortion function. Conveniently, the rate-distortion function and distortion-rate function are inverses of one another~\citep{cover2012elements} $(\mc{R}(\mc{D}(R)) = R$) such that we recover the following two capacity-sensitive regret bounds directly from Corollaries \ref{thm:info_regret_bound} and \ref{thm:info_regret_bound_qdist} by simply taking the input distortion threshold of VSRL equal to the associated distortion-rate function in the first episode ($D = \mc{D}^{\Pi,\mc{V}}_1(R)$ and $D = \mc{D}^{Q^\star}_1(R)$, respectively).

\begin{corollary}
Take any $\Pi \supseteq \{\mc{S} \ra \mc{A}\}$, any $\mc{V} \supseteq \{V^\pi \mid \pi \in \Pi^H\}$, and let $R > 0$ be the agent capacity. For any prior distribution $\bP(\mc{M}^\star \in \cdot \mid H_1)$, if $\Gamma_k \leq \overline{\Gamma}$ for all $k \in [K]$, then VSRL (Algorithm \ref{alg:vsrl}) with distortion function $d_{\Pi,\mc{V}}$ has
$\textsc{BayesRegret}(K, \pi^{(1)},\ldots,\pi^{(K)}) \leq \sqrt{\overline{\Gamma}KR} + 2KH\sqrt{\mc{D}^{\Pi,\mc{V}}_1(R)}.$
\label{thm:info_regret_bound_capacity}
\end{corollary}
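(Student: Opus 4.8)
The plan is to derive this bound as a direct specialization of Corollary \ref{thm:info_regret_bound}, exploiting the duality between the rate-distortion and distortion-rate functions. Recall that Corollary \ref{thm:info_regret_bound}, whose hypotheses on $\Pi$ and $\mc{V}$ coincide with those assumed here, asserts for \emph{any} admissible distortion threshold that
$$\textsc{BayesRegret}(K, \pi^{(1)},\ldots,\pi^{(K)}) \leq \sqrt{\overline{\Gamma}K\mc{R}^{\Pi,\mc{V}}_1(D)} + 2KH\sqrt{D}.$$
Since the agent capacity $R > 0$ is fixed rather than being a sub-optimality target, the first step is to translate $R$ into an equivalent distortion threshold by instantiating the above at $D = \mc{D}^{\Pi,\mc{V}}_1(R)$, the value of the distortion-rate function (Equation \ref{eq:drf}) computed against the first-episode beliefs $\bP(\mc{M}^\star \in \cdot \mid H_1)$.

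The key step is to control the resulting rate term $\mc{R}^{\Pi,\mc{V}}_1(\mc{D}^{\Pi,\mc{V}}_1(R))$ by the capacity $R$. First I would invoke the standard fact that the rate-distortion and distortion-rate functions are mutual inverses~\citep{cover2012elements}, giving $\mc{R}^{\Pi,\mc{V}}_1(\mc{D}^{\Pi,\mc{V}}_1(R)) = R$. For a self-contained argument, the inequality direction that the bound actually requires, namely $\mc{R}^{\Pi,\mc{V}}_1(\mc{D}^{\Pi,\mc{V}}_1(R)) \leq R$, follows directly from the definitions: any channel attaining the distortion-rate infimum $\mc{D}^{\Pi,\mc{V}}_1(R)$ over $\Upsilon_1(R)$ has mutual information at most $R$ and expected distortion at most $\mc{D}^{\Pi,\mc{V}}_1(R)$, hence lies in the constraint set $\Lambda_1(\mc{D}^{\Pi,\mc{V}}_1(R))$ of the rate-distortion program, so the rate-distortion infimum over that set is no larger than $R$. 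Substituting this into the specialized form of Corollary \ref{thm:info_regret_bound} collapses the first term to $\sqrt{\overline{\Gamma}KR}$ while leaving the second term as $2KH\sqrt{\mc{D}^{\Pi,\mc{V}}_1(R)}$, yielding exactly the claimed inequality.

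The main obstacle is purely technical: justifying the inverse relationship (or its one-sided weakening) rigorously in this abstract setting, where all quantities are conditioned on the random history $H_1$ and the underlying alphabets are not assumed discrete. Because the infimum defining $\mc{D}^{\Pi,\mc{V}}_1(R)$ need not be attained, I would run the feasibility argument on an $\epsilon$-near-optimal channel in $\Upsilon_1(R)$, placing it in $\Lambda_1(\mc{D}^{\Pi,\mc{V}}_1(R) + \epsilon)$ to obtain $\mc{R}^{\Pi,\mc{V}}_1(\mc{D}^{\Pi,\mc{V}}_1(R) + \epsilon) \leq R$, and then pass to the limit using convexity and monotonicity of $D \mapsto \mc{R}^{\Pi,\mc{V}}_1(D)$ (hence its continuity on the interior of its domain) to conclude $\mc{R}^{\Pi,\mc{V}}_1(\mc{D}^{\Pi,\mc{V}}_1(R)) \leq R$. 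The only remaining edge case is $\mc{D}^{\Pi,\mc{V}}_1(R) = 0$, where $\mc{D}^{\Pi,\mc{V}}_1(R)$ may sit at the boundary of the domain; there I would invoke the underlying decomposition (Theorem \ref{thm:regret_decomp}) and information-ratio bound (Theorem \ref{thm:info_sat_regret_bound}) directly at $D = 0$, so that the second term simply vanishes and the claim holds trivially.
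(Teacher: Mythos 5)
Your proposal matches the paper's own (very brief, in-line) argument exactly: the paper obtains this corollary by setting $D = \mc{D}^{\Pi,\mc{V}}_1(R)$ in Corollary \ref{thm:info_regret_bound} and invoking the inverse relationship $\mc{R}(\mc{D}(R)) = R$ from \citet{cover2012elements}. Your additional one-sided feasibility argument for $\mc{R}^{\Pi,\mc{V}}_1(\mc{D}^{\Pi,\mc{V}}_1(R)) \leq R$ and the handling of non-attained infima are more careful than what the paper provides, but they follow the same route.
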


\begin{corollary}
Let $R > 0$ be the agent capacity. For any prior distribution $\bP(\mc{M}^\star \in \cdot \mid H_1)$, if $\Gamma_k \leq \overline{\Gamma}$ for all $k \in [K]$, then VSRL (Algorithm \ref{alg:vsrl}) with distortion function $d_{Q^\star}$ has
$\textsc{BayesRegret}(K, \pi^{(1)},\ldots,\pi^{(K)}) \leq \sqrt{\overline{\Gamma}KR} + 2K(H+1)\sqrt{\mc{D}^{Q^\star}_1(R)}.$
\label{thm:info_regret_bound_qdist_capacity}
\end{corollary}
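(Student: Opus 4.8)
The plan is to obtain this corollary as a direct specialization of Corollary~\ref{thm:info_regret_bound_qdist}, which already bounds the Bayesian regret of VSRL under the distortion function $d_{Q^\star}$ by $\sqrt{\overline{\Gamma}K\mc{R}^{Q^\star}_1(D)} + 2K(H+1)\sqrt{D}$ for every fixed $D > 0$. The idea is to run VSRL with its input distortion threshold set to the first-episode distortion-rate value $D = \mc{D}^{Q^\star}_1(R)$ induced by the agent's capacity $R$, and then translate the rate term $\mc{R}^{Q^\star}_1(D)$ back into the capacity $R$. This exactly parallels the derivation of Corollary~\ref{thm:info_regret_bound_capacity} from Corollary~\ref{thm:info_regret_bound}, only with $d_{Q^\star}$ in place of $d_{\Pi,\mc{V}}$.

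First I would verify that the hypotheses of Corollary~\ref{thm:info_regret_bound_qdist} transfer verbatim: the prior is arbitrary and the uniform information-ratio bound $\Gamma_k \leq \overline{\Gamma}$ is assumed here as well, so nothing beyond the choice of $D$ needs checking. The substantive step is the identity $\mc{R}^{Q^\star}_1(\mc{D}^{Q^\star}_1(R)) = R$, i.e. that the rate-distortion function $\mc{R}^{Q^\star}_1$ (defined as in~\eqref{eq:rdf} but with $d_{Q^\star}$) and the distortion-rate function $\mc{D}^{Q^\star}_1$ of~\eqref{eq:drf} are mutual inverses~\citep{cover2012elements}. I would argue the inequality $\mc{R}^{Q^\star}_1(\mc{D}^{Q^\star}_1(R)) \leq R$ that the bound actually requires straight from the definitions: a channel $\widetilde{\mc{M}} \in \Upsilon_1(R)$ realizing (up to vanishing slack) the distortion-rate infimum has $\bI_1(\mc{M}^\star; \widetilde{\mc{M}}) \leq R$ and expected distortion at most $\mc{D}^{Q^\star}_1(R)$, hence lies in $\Lambda_1(\mc{D}^{Q^\star}_1(R))$, so the infimum defining $\mc{R}^{Q^\star}_1(\mc{D}^{Q^\star}_1(R))$ is at most $R$.

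Given this, I substitute $D = \mc{D}^{Q^\star}_1(R)$ into Corollary~\ref{thm:info_regret_bound_qdist}: the first term becomes $\sqrt{\overline{\Gamma}K\,\mc{R}^{Q^\star}_1(\mc{D}^{Q^\star}_1(R))} \leq \sqrt{\overline{\Gamma}KR}$ by the bound just established together with monotonicity of the square root, and the second term is exactly $2K(H+1)\sqrt{\mc{D}^{Q^\star}_1(R)}$ by construction. Summing the two estimates yields the claimed inequality.

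I expect the only delicate points to be (i) the infimum-attainment/continuity issue underlying the inverse relationship, since on abstract alphabets the rate-distortion and distortion-rate functions are infima that need not be attained, which I would handle with an $\eps$-optimal channel and the convexity-driven right-continuity of $\mc{R}^{Q^\star}_1$ at $\mc{D}^{Q^\star}_1(R)$; and (ii) the boundary case where $R$ is large enough that $\mc{D}^{Q^\star}_1(R) = 0$, placing $D = 0$ outside the $D > 0$ scope of Corollary~\ref{thm:info_regret_bound_qdist}, where I would instead invoke the $D = 0$ decomposition of Theorem~\ref{thm:regret_decomp_qdist} directly or pass to the limit. Everything else is routine substitution.
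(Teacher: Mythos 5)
Your proposal matches the paper's own derivation: the paper obtains this corollary exactly by substituting $D = \mc{D}^{Q^\star}_1(R)$ into Corollary~\ref{thm:info_regret_bound_qdist} and invoking the inverse relationship $\mc{R}(\mc{D}(R)) = R$ from \citet{cover2012elements}, with no further proof given. Your additional care about establishing $\mc{R}^{Q^\star}_1(\mc{D}^{Q^\star}_1(R)) \leq R$ directly from the definitions, the $\eps$-optimal channel, and the $D = 0$ boundary case goes slightly beyond what the paper writes, but the approach is the same and the argument is correct.
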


Turning back to Example \ref{example:multi_res_mdps}, note how an agent with significantly limited capacity cannot possibly hope to capture all the granularity contained in the entire MDP sequence $\{\mc{M}_n\}_{n \in [N]}$, for large values of $N$. For a capacity of exactly $R$ bits, Corollaries \ref{thm:info_regret_bound_capacity} and \ref{thm:info_regret_bound_qdist_capacity} immediately translate this fundamental limit into a corresponding performance guarantee, allowing the agent to identify a subsequence $\{\mc{M}_n\}_{n \in [C]}$ for some $C \ll N$ which only requires gathering $R$ bits of information from the environment.

\section{Conclusion}
\label{sec:conc}

In this paper, we began with a finite-horizon, episodic MDP and considered the ramifications of a real-world reinforcement-learning scenario wherein the relative complexity of the environment is so immense that an agent may find itself incapable of perfectly recovering optimal behavior. An immediate consequence of this reality is the need to strike an appropriate balance between what is performant and what is achievable. We introduced the VSRL algorithm for incrementally synthesizing \textit{simple} and \textit{useful} approximations of the environment from which an agent might still recover near-optimal behaviors. Recognizing the information-theoretic nature of this lossy MDP compression, we provided an analysis of VSRL whose performance guarantees, by virtue of rate-distortion theory, are twofold. The first set of guarantees ensure VSRL recovers the simplest compression of the environment which still incurs bounded sub-optimality, as specified by the agent designer. Alternatively, the second set of guarantees maintain that VSRL finds the best compression of the environment subject to constraints on agent capacity. Through our general problem formulation and information-theoretic analysis, both regret bounds hold for any finite-horizon, episodic MDP, regardless of whether or not the state-action space is finite. That said, the question of how to practically instantiate VSRL for high-dimensional settings is an open problem left to future work.

\section*{Acknowledgements}

The authors gratefully acknowledge Christopher Grimm for initial discussions that provided an impetus for this work. The authors also thank Adithya Devraj, Shi Dong, John Duchi, Hong Jun Jeon, Saurabh Kumar, and Xiuyuan (Lucy) Lu for insightful comments on various components of the paper. Financial support from Army Research Office (ARO) grant W911NF2010055 is gratefully acknowledged.

\bibliographystyle{plainnat}
\bibliography{references}

\section*{Checklist}

\begin{enumerate}
\item For all authors...
\begin{enumerate}
  \item Do the main claims made in the abstract and introduction accurately reflect the paper's contributions and scope?
    \answerYes{}
  \item Did you describe the limitations of your work?
    \answerYes{}
  \item Did you discuss any potential negative societal impacts of your work?
    \answerNA{}{}
  \item Have you read the ethics review guidelines and ensured that your paper conforms to them?
    \answerYes{}
\end{enumerate}

\item If you are including theoretical results...
\begin{enumerate}
  \item Did you state the full set of assumptions of all theoretical results?
    \answerYes{}
        \item Did you include complete proofs of all theoretical results?
    \answerYes{}
\end{enumerate}

\item If you ran experiments...
\begin{enumerate}
  \item Did you include the code, data, and instructions needed to reproduce the main experimental results (either in the supplemental material or as a URL)?
    \answerNA{}
  \item Did you specify all the training details (e.g., data splits, hyperparameters, how they were chosen)?
    \answerNA{}
        \item Did you report error bars (e.g., with respect to the random seed after running experiments multiple times)?
    \answerNA{}
        \item Did you include the total amount of compute and the type of resources used (e.g., type of GPUs, internal cluster, or cloud provider)?
    \answerNA{}
\end{enumerate}

\item If you are using existing assets (e.g., code, data, models) or curating/releasing new assets...
\begin{enumerate}
  \item If your work uses existing assets, did you cite the creators?
    \answerNA{}
  \item Did you mention the license of the assets?
    \answerNA{}
  \item Did you include any new assets either in the supplemental material or as a URL?
    \answerNA{}
  \item Did you discuss whether and how consent was obtained from people whose data you're using/curating?
    \answerNA{}
  \item Did you discuss whether the data you are using/curating contains personally identifiable information or offensive content?
    \answerNA{}
\end{enumerate}

\item If you used crowdsourcing or conducted research with human subjects...
\begin{enumerate}
  \item Did you include the full text of instructions given to participants and screenshots, if applicable?
    \answerNA{}
  \item Did you describe any potential participant risks, with links to Institutional Review Board (IRB) approvals, if applicable?
    \answerNA{}
  \item Did you include the estimated hourly wage paid to participants and the total amount spent on participant compensation?
    \answerNA{}
\end{enumerate}

\end{enumerate}



\newpage
\appendix

\section{Related Work}
\label{sec:related}

This paper follows suit with a long line of work on provably-efficient reinforcement learning~\citep{kearns2002near,brafman2002r,kakade2003sample,auer2009near,bartlett2009regal,strehl2009reinforcement,jaksch2010near,osband2013more,dann2015sample,osband2017posterior,azar2017minimax,dann2017unifying,agrawal2017optimistic,jin2018q,zanette2019tighter,dong2021simple,lu2021reinforcement}. As previously discussed, these methods can be categorized based on their use of optimism in the face of uncertainty or posterior sampling to address the exploration challenge. Notably, methods in the latter category are Bayesian reinforcement-learning algorithms~\citep{ghavamzadeh2015bayesian} that, through their use of Thompson sampling~\citep{thompson1933likelihood,russo2018tutorial}, are exclusively concerned with identifying optimal solutions. The notable exception to this statement is the method of \citet{lu2021reinforcement}, which is based on information-directed sampling~\citep{russo2014learning,russo2018learning}; while their analysis does accommodate other learning targets besides the optimal policy, an agent designer is responsible for supplying this target to the agent a priori whereas we adaptively compute an information-theoretically sound target grounded in rate-distortion theory. 

In contrast to this class of approaches, optimism-based methods tend to obey PAC-MDP guarantees~\citep{kakade2003sample,strehl2009reinforcement} which, given a fixed parameter $\eps > 0$, offer a high-probability bound on the total number of timesteps for which the agent's behavior is worse than $\eps$-sub-optimal. Through this tolerance parameter $\eps$, an agent designer can express a preference for efficiently identifying a deliberately sub-optimal solution; our work can be seen as providing an analogous knob for Bayesian reinforcement-learning methods that deliberately pursue a satisficing solution while also remaining competitive with regret guarantees for optimism-based methods~\citep{dann2015sample,dann2017unifying,jin2018q,zanette2019tighter}. In this way, our theoretical guarantees are more general than those for PSRL~\citep{osband2013more,osband2014model,abbasi2014bayesian,osband2017posterior,agrawal2017optimistic}. Importantly, the nature of our contribution is not to be confused with the PAC-BAMDP framework of \citet{kolter2009near} which characterizes algorithms that adhere to a high-probability bound on the total number of sub-optimal timesteps relative to the Bayes-optimal policy~\citep{asmuth2009bayesian,sorg2010variance}. We refer readers to the work of \citet{ghavamzadeh2015bayesian} for a broader survey of Bayesian reinforcement-learning methods, including those which do not employ posterior sampling~\citep{strens2000bayesian}, but instead entertain other approximations~\citep{dearden1998bayesian,dearden1999model,wang2005bayesian,castro2007using,araya2012near,guez2012efficient,guez2013scalable,guez2014bayes} to tractably solve the resulting Bayes-Adaptive Markov Decision Process (BAMDP)~\citep{duff2002optimal}, typically while foregoing rigorous theoretical guarantees.

A perhaps third distinct class of provably-efficient reinforcement-learning algorithms~\citep{krishnamurthy2016pac,jiang2017contextual,dann2018oracle,du2019provably,sun2019model} proceeds by iteratively selecting an element of a function class (typically denoting a collection of regressors for either a value function or transition model), inducing a policy from the chosen function, and then carefully eliminating all hypotheses of the function class that are inconsistent with the observed data resulting from policy rollouts in the environment. To the extent that one might be willing to characterize this high-level algorithmic template as an iterative, manual compression and refinement of the initial function class, our algorithm can be seen as bringing the appropriate tool of rate-distortion theory to bear on the inherent lossy compression problem and developing the complementary information-theoretic analysis.

The concept of designing algorithms to learn such near-optimal or satisficing solutions has been well-studied in the multi-armed bandit setting~\citep{bubeck2012regret,lattimore2020bandit}. Indeed, the need to forego optimizing for an optimal arm arises naturally in various contexts~\citep{bubeck2011x,kleinberg2008multi,rusmevichientong2010linearly,ryzhov2012knowledge,deshpande2012linear,berry1997,wang2008algorithms,bonald2013two}. A general study of such satisficing solutions through the lens of information theory was first proposed by \citet{russo2017time,russo2018satisficing,russo2022satisficing} and later extended to develop practical algorithms by \citet{arumugam2021deciding,arumugam2021the}. Our work provides the natural, theoretical extension of these ideas to the full reinforcement-learning setting, leaving investigation of practical instantiations to future work (see Section \ref{sec:disc}). The algorithm and regret bound we provide bears some resemblance to the compressed Thompson sampling algorithm of \citet{dong2018information} for bandit problems. Crucially, while the compressive statistic of the environment utilized by their algorithm is computed once a priori, our algorithm recomputes its learning target in each episode, refining it as the agent's knowledge of the true environment accumulates. Similar to these prior works, we leverage rate-distortion theory~\citep{shannon1959coding} as a principled tool for a mathematically-precise characterization of satisficing solutions. We simply note that our use of rate-distortion theory for reinforcement learning in this work stands in stark contrast to that of prior work which examines state abstraction in reinforcement learning~\citep{abel2019state} or attempts to control the entropy of the resulting policy~\citep{tishby2011information,rubin2012trading,shafieepoorfard2016rationally}.

We also recognize the connection between this work and prior work at the intersection of information theory and control theory~\citep{witsenhausen1971separation,mitter1999information,mitter2001control,borkar2001markov,tatikonda2004control,kostina2019rate}. These works parallel our setting in their consideration for an agent that must stabilize a system with limited \textit{observational} capacity, augmenting the standard control objective subject to a constraint on the rate of the channel that processes raw observations; this problem formulation more closely aligns with a partially-observable Markov Decision Process~\citep{aastrom1965optimal,kaelbling1998planning} or an agent learning with a state abstraction~\citep{li2006towards,abel2016near,van2006performance}. In contrast, our work is concerned with an overall limit on the total amount of information an agent may acquire from the environment and, in turn, how that translates into its selection of a feasible learning target. That said, we suspect there could be a strong, subtle synergy between these prior works and the capacity-sensitive performance guarantees for our algorithm (see Section \ref{sec:dist_rate_bounds}).

\section{Discussion}
\label{sec:disc}

In this section, we outline connections between VSRL and follow-up work to the value equivalence principle~\citep{grimm2021proper}, explore opportunities for even further compression through state abstraction~\citep{li2006towards,abel2016near}, and contemplate potential avenues for how our theory might inform practice.

\subsection{Proper Value Equivalence}

While the value equivalence principle examines a single application of each Bellman operator, in follow-up work  \citet{grimm2021proper} introduce the notion of proper value equivalence,  which considers the limit of infinitely many applications or, stated more concisely, the fixed points of the associated operators. A model $\widetilde{\mc{M}}$ is proper value equivalent if $V^\pi_{\mc{M}^\star,h} = V^\pi_{\widetilde{\mc{M}},h}$, $\forall \pi \in \Pi$, $h \in [H]$. This notion allows for a simpler parameterization through the policy class $\Pi$ alone, without the need for a complementary value function class $\mc{V}$. Conveniently, through Proposition 2 of \citet{grimm2021proper}, it follows that to obtain the set of proper value equivalent models with respect to $\Pi$, one need only find the set of models that are value equivalent for each $\pi \in \Pi$ and its induced value function, $V^\pi$. In our context, we can establish an approximate version of this by using the distortion function $d_{\Pi,\mc{V}}$ where $\mc{V} = \{V^\pi \mid \pi \in \Pi^H\}$ (recall that previous results obeyed the less stringent condition that $\mc{V} \supseteq \{V^\pi \mid \pi \in \Pi^H\}$). 

\citet{grimm2021proper} go on to study proper value equivalence for the set of all deterministic policies, $\Pi = \{\mc{S} \ra \mc{A}\}$ and, through their Corollary 1, show that an optimal policy for any model which is proper value equivalent to $\Pi$ is also optimal in the original MDP $\mc{M}^\star$. Again, we recall that our prior guarantees were made under the less restrictive assumption that $\Pi \supseteq \{\mc{S} \ra \mc{A}\}$. Coupling these insights on proper value equivalence together, we see that when VSRL is run with $\Pi = \{\mc{S} \ra \mc{A}\}$ and $\mc{V} = \{V^\pi \mid \pi \in \Pi^H \}$, the agent aims to recover an approximately proper value-equivalent model of the true environment and, when $D = 0$, the optimal policy associated with this compressed MDP will be optimal for $\mc{M}^\star$. Finally, through their Proposition 5, \citet{grimm2021proper}  identify the set of all proper value equivalent models with respect to $\{\mc{S} \ra \mc{A}\}$ as the largest possible value equivalence class that is guaranteed to yield optimal performance in the true environment. Meanwhile, our Lemma \ref{lemma:dominance} again establishes the information-theoretic analogue of this claim; namely, that VSRL configured to learn a model from this largest value equivalence class requires the fewest bits of information from the true environment. The importance of proper value equivalence culminates with experiments that highlight how MuZero~\citep{schrittwieser2020mastering} succeeds by optimizing a proper value-equivalent loss function. We leave to future work the question of how VSRL might pave the way towards more principled exploration strategies for practical algorithms like MuZero.

\subsection{Greater Compression via State Abstraction}
\label{sec:state_abstraction}

A core disconnect between VSRL and contemporary deep model-based reinforcement learning approaches is that our lossy compression problem forces VSRL to identify a model defined with respect to the original state space whereas methods in the latter category learn a model with respect to a state abstraction. Indeed, algorithms like MuZero and its predecessors~\citep{silver2017predictron,oh2017value,schrittwieser2020mastering} never approximate reward functions and transition models with respect to the raw image observations generated by the environment, but instead incrementally learn some latent representation of state upon which a corresponding model is approximated for planning. This philosophy is born out of several years of work that elucidate the important of state abstraction as a key tool for avoiding the irrelevant information encoded in environment states and addressing the challenge of generalization for sample-efficient reinforcement learning large-scale environments~\citep{whitt1978approximations,bertsekas1989adaptive,dean1997model,ferns2004metrics,jong2005state,li2006towards,van2006performance,ferns2012methods,jiang2015abstraction,abel2016near,abel2018state,abel2019state,dong2019provably,du2019provably,arumugam2020randomized,misra2020kinematic,agarwal2020flambe,abel2020value,abel2020thesis,dong2021simple}. In this section, we briefly introduce a small extension of VSRL that builds on these insights to accommodate lossy MDP compressions defined on a simpler, abstract state space (also referred to as aleatoric or situational state by \citet{lu2021reinforcement,dong2021simple}).

Let $\Phi \subseteq \{\mc{S} \ra [Z]\}$ denote a class of state abstractions or quantizers which map environment states to some discrete, finite abstract state space containing a known, fixed number of abstract states $Z \in \bN$. For any abstract action-value function $Q_\phi \in \{[Z] \times \mc{A} \ra \bR\}$ and any state abstraction $\phi \in \Phi$, we denote by $Q_\phi \circ \phi \in \{\mc{S} \times \mc{A} \ra \bR\}$ the composition of the state abstraction and abstract value function such that $Q_\phi \circ \phi$ is a value function for the original MDP. We adopt a similar convention for any policy $\pi_\phi \in \{[Z] \ra \Delta(\mc{A})\}$ such that $\pi_\phi \circ \phi \in \{\mc{S} \ra \Delta(\mc{A})\}$. We now consider carrying out the rate-distortion optimization of VSRL in each episode over abstract MDPs such that $\widetilde{\mc{M}}_k \in \mathfrak{M}_\phi \triangleq \{[Z] \times \mc{A} \ra [0,1]\} \times \{[Z] \times \mc{A} \ra \Delta([Z])\}$. Just as before, we take the input information source to our lossy compression problem in each episode $k \in [K]$ as the agent's current beliefs over the true MDP, $\bP(\mc{M}^\star \in \cdot \mid H_k)$. Unlike the preceding sections, our distortion function $d: \mathfrak{M} \times \mathfrak{M}_\phi \ra \bR_{\geq 0}$ must now quantify the loss of fidelity incurred by using a compressed abstract MDP in lieu of the true environment MDP. Consequently, we define a new distortion function $$d_{\Phi}(\mc{M}, \widehat{\mc{M}}) = \sup\limits_{\phi \in \Phi} \sup\limits_{h \in [H]} ||Q^\star_{\mc{M},h} - Q^\star_{\widehat{\mc{M}},h} \circ \phi||_\infty^2 = \sup\limits_{\phi \in \Phi} \sup\limits_{h \in [H]} \max\limits_{(s,a) \in \mc{S} \times \mc{A}} | Q^\star_{\mc{M},h}(s,a) - Q^\star_{\widehat{\mc{M}},h}(\phi(s),a)|^2,$$
whose corresponding rate-distortion function is given by
\begin{align*}
    \mc{R}^{\Phi}_k(D) = \inf\limits_{\widetilde{\mc{M}} \in \Lambda_k(D)} \bI_k(\mc{M}^\star; \widetilde{\mc{M}}) \qquad \Lambda_k(D) \triangleq \{\widetilde{\mc{M}}: \Omega \ra \mathfrak{M} \mid \bE_k\left[d_{\Phi}(\mc{M}^\star, \widetilde{\mc{M}})\right] \leq D\} .
\end{align*}

Unlike when performing a lossy compression where $\widetilde{\mc{M}}_k \in \mathfrak{M}$, the channel that represents the identity mapping is no longer a viable option as we must now generate an abstract MDP that resides in $\mathfrak{M}_\phi$. Consequently, we require the following assumption on $\Phi$ to ensure that the set of channels over which we compute the infimum of $\mc{R}^{\Phi}_k(D)$ is non-empty.

\begin{assumption}
For each $k \in [K]$, we have that $\Lambda_k(D) \neq \emptyset$. 
\label{assume:phi}
\end{assumption}

\begin{center}
\begin{minipage}{0.95\textwidth}
\begin{algorithm}[H]
   \caption{Compressed Value-equivalent Sampling for Reinforcement Learning (Compressed-VSRL)}
   \label{alg:cvsrl}
\begin{algorithmic}
   \STATE {\bfseries Input:} Prior distribution $\bP(\mc{M}^\star \in \cdot \mid H_1)$, Distortion threshold $D \in \bR_{\geq 0}$, State abstraction class $\Phi$, Distortion function $d_{\Phi}: \mathfrak{M} \times \mathfrak{M}_\phi \ra \bR_{\geq 0}$,
   \FOR{$k \in [K]$}
   \STATE Compute channel $\bP(\widetilde{\mc{M}}_k \in \cdot | \mc{M}^\star)$ achieving $\mc{R}^{\Phi}_k(D)$ limit
   \STATE Sample MDP $M^\star \sim \bP(\mc{M}^\star \in \cdot \mid H_k)$
   \STATE Sample compressed MDP $M_k \sim \bP(\widetilde{\mc{M}}_k \in \cdot \mid \mc{M}^\star = M^\star)$
   \STATE Set state abstraction $\phi_k$ to achieve the infimum: $\inf\limits_{\phi \in \Phi} \sup\limits_{h \in [H]} ||Q^\star_{M^\star,h} - Q^\star_{M_k,h} \circ \phi||_\infty^2$
   \STATE Compute optimal policy $\pi^\star_{M_k}$ and set $\pi^{(k)} = \pi^\star_{M_k} \circ \phi_k$
   \STATE Execute $\pi^{(k)}$ and observe trajectory $\tau_k$
   \STATE Update history $H_{k+1} = H_k \cup \tau_k$
   \STATE Induce posterior $\bP(\mc{M}^\star \in \cdot \mid H_{k+1})$
   \ENDFOR
\end{algorithmic}
\end{algorithm}
\end{minipage}
\end{center}

We present our Compressed-VSRL extension as Algorithm \ref{alg:cvsrl} which incorporates an additional step beyond VSRL to govern the choice of state abstraction utilized in conjunction with the sampled compressed MDP in each episode.

We strongly suspect that an analysis paralleling that of Corollaries \ref{thm:info_regret_bound} and \ref{thm:info_regret_bound_qdist}, with an appropriately defined information ratio, can be carried out for Compressed-VSRL as well. However, for the sake of brevity and since the result is neither immediate nor trivial, we leave the information-theoretic regret bound stated as a conjecture.

\begin{conjecture}
Fix any $D > 0$. For any prior distribution $\bP(\mc{M}^\star \in \cdot \mid H_1)$, if $\Gamma_k \leq \overline{\Gamma}$ for all $k \in [K]$, then CVSRL (Algorithm \ref{alg:cvsrl}) with distortion function $d_{\Phi}$ has
$$\textsc{BayesRegret}(K, \pi^{(1)},\ldots,\pi^{(K)}) \leq \sqrt{\overline{\Gamma}K\mc{R}^{\Phi}_1(D)} + 2K(H+1)\sqrt{D}.$$
\label{conj:info_regret_bound_phi}
\end{conjecture}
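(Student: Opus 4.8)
The plan is to establish Conjecture~\ref{conj:info_regret_bound_phi} by reproducing, for Compressed-VSRL, the two-step chain behind Corollary~\ref{thm:info_regret_bound_qdist}: a regret decomposition in the spirit of Theorem~\ref{thm:regret_decomp_qdist} that peels off the distortion penalty $2K(H+1)\sqrt{D}$ and leaves a satisficing regret term assessed against the abstract learning target $\widetilde{\mc{M}}_k$, followed by an information-ratio argument in the spirit of Theorem~\ref{thm:info_sat_regret_bound} converting that term into $\sqrt{\overline{\Gamma}K\mc{R}^\Phi_1(D)}$. The appropriate information ratio is $\Gamma_k = \bE_k[V^\star_{\widetilde{\mc{M}}_k,1} - V^{\pi^\star_{M_k}}_{\widetilde{\mc{M}}_k,1}]^2 / \bI_k(\widetilde{\mc{M}}_k; \tau_k, M_k, \phi_k)$, whose numerator is the per-episode regret measured entirely within the abstract MDP on $[Z]$ and whose denominator is the information the episode reveals about the abstract target through the sampled compression $M_k$, the selected abstraction $\phi_k$, and the trajectory $\tau_k$; Assumption~\ref{assume:phi} guarantees the constraint sets $\Lambda_k(D)$ are nonempty so these targets exist.

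For the decomposition I would exploit the fact that $d_\Phi$ was engineered through the composition $Q^\star_{\widehat{\mc{M}},h}\circ\phi$ precisely so that a lifted abstract action-value function can occupy the role played by $Q^\star_{\widetilde{\mc{M}}_k}$ in the proof of Theorem~\ref{thm:regret_decomp_qdist}. Because the abstraction $\phi_k$ chosen in Algorithm~\ref{alg:cvsrl} attains the inner infimum, which is dominated by the supremum over $\phi$ in the definition of $d_\Phi$, we obtain $\sup_h ||Q^\star_{M^\star,h} - Q^\star_{M_k,h}\circ\phi_k||_\infty \leq \sqrt{d_\Phi(M^\star, M_k)}$ uniformly on $\mc{S}\times\mc{A}$; the executed policy $\pi^{(k)} = \pi^\star_{M_k}\circ\phi_k$ is therefore greedy with respect to the lifted function $Q^\star_{M_k}\circ\phi_k$, which is itself a uniform $\sqrt{d_\Phi(M^\star,M_k)}$-approximation of $Q^\star_{M^\star}$ on the original state space. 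Feeding this into the same greedy-suboptimality and probability-matching steps used for Theorem~\ref{thm:regret_decomp_qdist} --- where probability matching now identifies, given $H_k$, the joint law of the true-side objects $\mc{M}^\star$, $\widetilde{\mc{M}}_k$, and their induced minimizing abstraction with that of the sample-side objects $M^\star$, $M_k$, and $\phi_k$, since the selected abstraction is a deterministic function of the matched pair --- lets every inequality transfer with $Q^\star_{\widetilde{\mc{M}}_k,h}$ replaced by its lift. Summing and applying Jensen with the constraint $\bE_k[d_\Phi(\mc{M}^\star,\widetilde{\mc{M}}_k)]\leq D$ then yields the $2K(H+1)\sqrt{D}$ penalty alongside the abstract satisficing term in the numerator of $\Gamma_k$.

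The information-ratio step then follows Theorem~\ref{thm:info_sat_regret_bound} directly: $\Gamma_k \leq \overline{\Gamma}$ bounds the per-episode satisficing regret by $\sqrt{\overline{\Gamma}\,\bI_k(\widetilde{\mc{M}}_k; \tau_k, M_k, \phi_k)}$, and summing over $k$, applying Cauchy--Schwarz, and controlling the cumulative expected conditional information gain by the first-episode rate delivers $\sqrt{\overline{\Gamma}K\mc{R}^\Phi_1(D)}$. Combining the two bounds gives the conjectured inequality.

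The hard part is this final information-theoretic accounting once the random abstraction $\phi_k$ is present. In the non-abstract analyses the information the agent accumulates about the sequence of targets telescopes to the initial rate via the chain rule of mutual information; here $\phi_k$ is produced by an inner infimum that couples it to both the posterior sample $M^\star$ and the compression $M_k$, so the augmented observation $(\tau_k, M_k, \phi_k)$ no longer factorizes along the clean nested structure that made the telescoping immediate. Showing that $\sum_k \bE[\bI_k(\widetilde{\mc{M}}_k; \tau_k, M_k, \phi_k)] \leq \mc{R}^\Phi_1(D)$ regardless --- and, relatedly, that $\Gamma_k$ is well defined with a strictly positive denominator --- is the non-trivial obstacle the authors flag. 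I would expect the resolution to hinge on an information-bottleneck factorization through the optimal action-value functions and the abstraction, analogous to Lemma~\ref{lemma:info_bottle_qdist}, allowing the data-processing inequality to absorb $\phi_k$ without inflating the rate.
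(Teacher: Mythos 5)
The statement you are trying to prove is explicitly left \emph{unproven} in the paper: it appears as Conjecture~\ref{conj:info_regret_bound_phi}, and the authors state that ``since the result is neither immediate nor trivial, we leave the information-theoretic regret bound stated as a conjecture.'' There is therefore no proof in the paper against which your argument can be checked, and your proposal does not itself close the gap --- it is an outline of the same two-step strategy the authors gesture at (a $d_{Q^\star}$-style regret decomposition followed by an information-ratio bound), and you candidly defer the decisive step to a hoped-for ``information-bottleneck factorization.'' A plan that ends with ``I would expect the resolution to hinge on\ldots'' is not a proof.

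To be concrete about where the argument is incomplete: the reduction of the satisficing term to $\sqrt{\overline{\Gamma}K\mc{R}^{\Phi}_1(D)}$ requires an analogue of Lemma~\ref{lemma:cum_info_bound}, whose proof for VSRL rests on the conditional independence $\tau_k \Perp \widetilde{\mc{M}} \mid \mc{M}^\star, M_k$ and on the chain-rule telescoping $\bI_k(\mc{M}^\star;\widetilde{\mc{M}}_k) = \bI_k(\widetilde{\mc{M}}_k;\tau_k \mid M_k) + \bI_k(\mc{M}^\star;\widetilde{\mc{M}}_k \mid \tau_k, M_k)$ together with the membership $\widetilde{\mc{M}}_k \in \Lambda_{k+1}(D)$ in expectation. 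Once the observation is augmented to $(\tau_k, M_k, \phi_k)$ with $\phi_k$ determined by an inner infimum coupling it to both $M^\star$ and $M_k$, none of these identities is established, and you do not establish them; you only name the difficulty. Likewise, the regret decomposition step needs the probability-matching identity $\bE_k[V^\star_{\mc{M}^\star,1} - V^{\pi^{(k)}}_{\mc{M}^\star,1}] = \bE_k[V^\star_{\mc{M}^\star,1} - V^{\pi^\star_{\widetilde{\mc{M}}_k}\circ\phi}_{\mc{M}^\star,1}]$ for an appropriately coupled abstraction on the ``true side,'' and the greedy-suboptimality argument of Theorem~\ref{thm:regret_decomp_qdist} must be rerun for a policy that is greedy with respect to a \emph{lifted} abstract value function rather than $Q^\star_{\widetilde{\mc{M}}_k}$ itself; you assert these transfer ``with $Q^\star_{\widetilde{\mc{M}}_k,h}$ replaced by its lift'' without verifying the steps that use the Bellman optimality structure of $Q^\star_{\widetilde{\mc{M}}_k}$ on the original state space. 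Your identification of these obstacles is accurate and matches the authors' reasons for leaving the result open, but the proposal should be assessed as a research plan, not a proof.
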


The significance of Conjecture \ref{conj:info_regret_bound_phi} for allowing a simple, bounded agent to contend with a complex environment manifests when considering analogues to Theorems \ref{thm:tabular_regret} and \ref{thm:tabular_regret_qdist}. Specifically, for any finite-horizon, episodic MDP with a finite action space $(|\mc{A}| < \infty)$, one may upper bound the rate-distortion function via the entropy in the abstract model $\bH_1(\mc{R}_\phi, \mc{T}_\phi)$. Using the same proof technique as in the preceding results, this facilitates an upper bound $\mc{R}^{\Phi}_1(D) \leq \widetilde{\mc{O}}\left(Z^2|\mc{A}|\right)$ which lacks any dependence on the complexity of the (potentially infinite) environment state space, $\mc{S}$.

\subsection{From Theory to Practice}

While the performance guarantees of VSRL hold for any finite-horizon, episodic MDP, it is important to reconcile that generality with the practicality of the instantiating the algorithm. The three key barriers to practical, scalable implementations of VSRL applied to complex tasks of interest are the representation of epistemic uncertainty, the computation of the rate-distortion function, and the synthesis of optimal policies for sampled MDPs. The first point is a fundamental obstacle to Bayesian reinforcement-learning algorithms and recent work in deep reinforcement learning has found success with simple, albeit computationally-inefficient, ensembles of networks~\citep{osband2016deep,lu2017ensemble,osband2018randomized} or even hypermodels~\citep{dwaracherla2020hypermodels}. As progress is made towards more computationally-efficient models for representing and resolving epistemic uncertainty through Bayesian deep learning~\citep{osband2021epistemic,osband2021evaluating}, there will be greater potential for a practical implementation of VSRL.

For addressing the second issue, a classic option for computing the channel that achieves the rate-distortion limit is the Blahut-Arimoto algorithm~\citep{blahut1972computation,arimoto1972algorithm} which, in theory, is a well-defined procedure even for random variables defined on abstract alphabets~\citep{csiszar1974extremum,csiszar1974computation}. In practice, however, computing such a channel for continuous outputs remains an open challenge~\citep{dauwels2005numerical}; still, several analyses and refinements have been made to the algorithm so far~\citep{boukris1973upper,rose1994mapping,sayir2000iterating,matz2004information,chiang2004geometric,niesen2007adaptive,vontobel2008generalization,naja2009geometrical,yu2010squeezing}, and the reinforcement-learning community stands to greatly benefit from further improvements. Continuous information sources, however, are less problematic as one may draw a sufficiently large number of i.i.d. samples and substitute this empirical distribution for the source, leading to the so-called plug-in estimator of the rate-distortion function for which consistency and sample-complexity guarantees are known~\citep{harrison2008estimation,palaiyanur2008uniform}. Moreover, empirical successes for such estimators have already been demonstrated in the multi-armed bandit setting~\citep{arumugam2021deciding,arumugam2021the}.

The last issue touches upon the fact that while tabular problems admit several planning algorithms for recovering the optimal policy associated with the sampled MDP in each episode, the same cannot be said for arbitrary state-action spaces. At best, one might hope for simply recovering an approximation to this policy through some high-dimensional model-based planning algorithm. We leave the questions of how to practically implement such a procedure and understand its impact on our theory to future work.

Of course, all of the aforementioned issues arise when trying to directly implement VSRL roughly as described by Algorithm \ref{alg:vsrl}. An alternative, however, is to ask how one might take existing practical algorithms already operating at scale (such as MuZero~\citep{schrittwieser2020mastering}) and bring those methods closer to the spirit of VSRL? Since these practical model-based reinforcement-learning algorithms are already engaging with some form of state abstraction~\citep{li2006towards,abel2016near,van2006performance}, this might entail further consideration for information-theoretic approaches to guiding representation learning~\citep{abel2019state,shafieepoorfard2016rationally} as a proxy to engaging with a rate-distortion trade-off. Additionally, one of the core insights developed in this work is the formalization of model simplifications arising out of the value equivalence principle as a form of lossy compression. Curiously, recent meta reinforcement-learning approaches applied to multi-task or contextual MDPs~\citep{hallak2015contextual} arrive at a similar need of obtaining compressed representations of the underlying MDP model during a meta-exploration phase in order to facilitate few-shot learning~\citep{rakelly2019efficient,zintgraf2019varibad,liu2021decoupling}. Given such approaches for learning latent/contextual variational encoders, we strongly suspect that the known connections between rate-distortion theory and information bottlenecks~\citep{tishby2000information,alemi2018fixing} represent a viable path to bridging the ideas between our theory and value equivalence in practice. Notably, while these methods adopt a probabilistic inference perspective and do articulate the encoders as posterior distributions over the underlying task, they lack any representation of epistemic uncertainty~\citep{o2020making}, similar to MuZero; consequently, this still leaves open the earlier obstacle of how best to represent and maintain notions of epistemic uncertainty in large-scale agents.

A third competing perspective is to recognize that recent empirical successes in Bayesian reinforcement learning often avoid representing uncertainty over the model of the environment in favor of the underlying optimal action-value functions, $\{Q^\star_{\mc{M}^\star,h}\}_{h \in [H]}$. Such approaches apply an algorithm known as Randomized Value Functions (RVF), rather than PSRL~\citep{osband2016deep,osband2016generalization,osband2018randomized,o2018uncertainty,osband2019deep}. Naturally, the optimal action-value functions of the true MDP $\mc{M}^\star$ and its model are related and, in fact, there is an equivalence between RVF and PSRL~\citep{osband2016deepthesis}. By maintaining epistemic uncertainty over the action-value functions $\bP(\{Q^\star_{\mc{M}^\star,h}\}_{h \in [H]} \in \cdot \mid H_k)$ rather than the underlying model, RVF methods are amenable to practical instantiation with deep neural networks~\citep{osband2016deep,osband2018randomized,o2018uncertainty,osband2021epistemic}. Moreover, extensions of this line of work have gone on to consider other scalable avenues for leveraging such principled, practical solutions to the exploration challenge through successor features~\citep{dayan1993improving,janz2019successor} and temporal-difference errors~\citep{flennerhag2020temporal}. Overall, we strongly suspect that VSRL can also play an analogous role to PSRL in this sense, providing a sound theoretical foundation that gives rise to subsequent practical algorithms of a slightly different flavor.

\section{Proof of Theorem \ref{thm:regret_decomp}}
\label{sec:regret_decomp_proof}

Before we can prove Theorem \ref{thm:regret_decomp}, we require the following lemma whose proof we adapt from \citet{osband2013more}:
\begin{lemma}
Let $\mc{M},\widehat{\mc{M}}$ be two arbitrary finite-horizon, episodic MDPs with models $(\mc{R}, \mc{T})$ and $(\widehat{\mc{R}}, \widehat{\mc{T}})$, respectively. Then, for any non-stationary policy $\pi = (\pi_1,\ldots,\pi_H) \in \{\mc{S} \ra \Delta(\mc{A})\}^H$, $$V^{\pi}_{\mc{M},1} - V^{\pi}_{\widehat{\mc{M}},1} = \sum\limits_{h=1}^H \bE\left[\mc{B}^{\pi_{h}}_{\mc{M}}V^{\pi}_{\mc{M},h+1}(s_h) - \mc{B}^{\pi_{h}}_{\widehat{\mc{M}}}V^{\pi}_{\mc{M},h+1}(s_h)\right] = \sum\limits_{h=1}^H \bE\left[\mc{B}^{\pi_{h}}_{\mc{M}}V^{\pi}_{\widehat{\mc{M}},h+1}(s_h) - \mc{B}^{\pi_{h}}_{\widehat{\mc{M}}}V^{\pi}_{\widehat{\mc{M}},h+1}(s_h)\right].$$
\label{lemma:planning_err_decomp}
\end{lemma}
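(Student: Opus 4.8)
The plan is to establish both equalities by the standard telescoping (``simulation lemma'') argument, exploiting the fact that each Bellman operator is \emph{affine} in its value-function argument: applying $\mc{B}^{\pi_h}_{\widehat{\mc{M}}}$ (or $\mc{B}^{\pi_h}_{\mc{M}}$) to a \emph{difference} of two value functions annihilates the reward term and leaves only an expected one-step transition of the gap. I would prove the two identities by symmetric arguments, the only distinction being which MDP's dynamics generate the trajectory over which the expectation $\bE[\cdot]$ is taken: the first identity integrates over the trajectory induced by running $\pi$ in $\widehat{\mc{M}}$ (because the inner value function is $V^\pi_{\mc{M},h+1}$), and the second over the trajectory induced by running $\pi$ in $\mc{M}$.

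For the first identity, I would work pointwise with the per-stage value gap $\Delta_h(s) \triangleq V^\pi_{\mc{M},h}(s) - V^\pi_{\widehat{\mc{M}},h}(s)$. Invoking the Bellman equation in each MDP gives $\Delta_h(s) = \mc{B}^{\pi_h}_{\mc{M}} V^\pi_{\mc{M},h+1}(s) - \mc{B}^{\pi_h}_{\widehat{\mc{M}}} V^\pi_{\widehat{\mc{M}},h+1}(s)$. Adding and subtracting the hybrid term $\mc{B}^{\pi_h}_{\widehat{\mc{M}}} V^\pi_{\mc{M},h+1}(s)$ splits this into a one-step Bellman-error term $g_h(s) \triangleq \mc{B}^{\pi_h}_{\mc{M}} V^\pi_{\mc{M},h+1}(s) - \mc{B}^{\pi_h}_{\widehat{\mc{M}}} V^\pi_{\mc{M},h+1}(s)$ plus the residual $\mc{B}^{\pi_h}_{\widehat{\mc{M}}}\big(V^\pi_{\mc{M},h+1} - V^\pi_{\widehat{\mc{M}},h+1}\big)(s)$. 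By affineness, the residual equals $\bE_{a\sim\pi_h(\cdot\mid s)}\bE_{s'\sim\widehat{\mc{T}}(\cdot\mid s,a)}[\Delta_{h+1}(s')]$, the reward contributions having cancelled, yielding the one-step recursion $\Delta_h(s) = g_h(s) + \bE[\Delta_{h+1}(s')\mid s]$ with $s'$ drawn from $\widehat{\mc{T}}$ under $\pi_h$.

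I would then unroll this recursion from $h=1$, taking $s_1 \sim \beta$ and letting the subsequent states be generated by $\widehat{\mc{T}}$ under $\pi$. Using the terminal condition $V^\pi_{\mc{M},H+1} = V^\pi_{\widehat{\mc{M}},H+1} = 0$, hence $\Delta_{H+1}\equiv 0$, the telescoping collapses to $V^\pi_{\mc{M},1} - V^\pi_{\widehat{\mc{M}},1} = \bE_{s_1\sim\beta}[\Delta_1(s_1)] = \sum_{h=1}^H \bE[g_h(s_h)]$, which is exactly the first claimed form; the initial-state convention $V = \bE_{s_1\sim\beta}[V(s_1)]$ supplies the base case. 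For the second identity I would instead add and subtract $\mc{B}^{\pi_h}_{\mc{M}} V^\pi_{\widehat{\mc{M}},h+1}(s)$, producing a Bellman-error term built from $V^\pi_{\widehat{\mc{M}},h+1}$ and a residual $\mc{B}^{\pi_h}_{\mc{M}}(\Delta_{h+1})(s)$, which unrolls along the $\mc{M}$-trajectory.

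The routine bookkeeping is light; the one point demanding care---the main (modest) obstacle---is keeping straight which transition law the expectation uses in each form and verifying that the reward terms genuinely cancel in the residual. This cancellation is precisely what forces the inner value function to match $\mc{M}$ in the first identity and $\widehat{\mc{M}}$ in the second, and it is why the two expectations are taken under different dynamics. I would also remark that affineness of $\mc{B}^{\pi_h}$ in $V$ relies only on $\mc{R}$ being independent of the value function, which holds here, so the step is valid for the abstract (possibly non-tabular) state-action spaces assumed in the problem formulation.
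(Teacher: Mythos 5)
Your proposal is correct and follows essentially the same route as the paper's proof: add and subtract the hybrid Bellman term, observe that the rewards cancel so the residual is the expected next-step value gap under $\widehat{\mc{T}}$ (resp.\ $\mc{T}$), and telescope using $V^\pi_{\cdot,H+1}\equiv 0$. You also correctly identify the one subtlety the paper leaves implicit, namely that the expectation in the first identity is taken along the $\widehat{\mc{M}}$-trajectory and in the second along the $\mc{M}$-trajectory.
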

\begin{proof}
By simply applying the Bellman equations, we have
\begin{align*}
    V^{\pi}_{\mc{M},1} - V^{\pi}_{\widehat{\mc{M}},1} &= \bE\left[V^{\pi}_{\mc{M},1}(s_1) - V^{\pi}_{\widehat{\mc{M}},1}(s_1) \right] \\
    &= \bE\left[\mc{B}^{\pi_1}_{\mc{M}}V^{\pi}_{\mc{M},2}(s_1) - \mc{B}^{\pi_1}_{\widehat{\mc{M}}}V^{\pi}_{\widehat{\mc{M}},2}(s_1) \right] \\
    &= \bE\left[\mc{B}^{\pi_1}_{\mc{M}}V^{\pi}_{\mc{M},2}(s_1) - \mc{B}^{\pi_1}_{\widehat{\mc{M}}}V^{\pi}_{\mc{M},2}(s_1) + \mc{B}^{\pi_1}_{\widehat{\mc{M}}}V^{\pi}_{\mc{M},2}(s_1) - \mc{B}^{\pi_1}_{\widehat{\mc{M}}}V^{\pi}_{\widehat{\mc{M}},2}(s_1) \right] \\
    &= \bE\left[\mc{B}^{\pi_1}_{\mc{M}}V^{\pi}_{\mc{M},2}(s_1) - \mc{B}^{\pi_1}_{\widehat{\mc{M}}}V^{\pi}_{\mc{M},2}(s_1) + \bE_{s_2 \sim \widehat{\mc{T}}(\cdot \mid s_1, a_1)}\left[ V^{\pi}_{\mc{M},2}(s_2) - V^{\pi}_{\widehat{\mc{M}},2}(s_2) \right]\right] \\
    &= \sum\limits_{h=1}^2 \bE\left[\mc{B}^{\pi_h}_{\mc{M}}V^{\pi}_{\mc{M},h+1}(s_h) - \mc{B}^{\pi_h}_{\widehat{\mc{M}}}V^{\pi}_{\mc{M},h+1}(s_h)\right] + \bE\left[ V^{\pi}_{\mc{M},3}(s_3) - V^{\pi}_{\widehat{\mc{M}},3}(s_3)\right]\\
    &= \ldots \\
    &= \sum\limits_{h=1}^H \bE\left[\mc{B}^{\pi_h}_{\mc{M}}V^{\pi}_{\mc{M},h+1}(s_h) - \mc{B}^{\pi_h}_{\widehat{\mc{M}}}V^{\pi}_{\mc{M},h+1}(s_h)\right] + \ubr{\bE\left[ V^{\pi}_{\mc{M},H+1}(s_{H+1}) - V^{\pi}_{\widehat{\mc{M}},H+1}(s_{H+1})\right]}_{=0}\\
    &= \sum\limits_{h=1}^H \bE\left[\mc{B}^{\pi_h}_{\mc{M}}V^{\pi}_{\mc{M},h+1}(s_h) - \mc{B}^{\pi_h}_{\widehat{\mc{M}}}V^{\pi}_{\mc{M},h+1}(s_h)\right].
\end{align*}
For the second identity, we have nearly identical steps:
\begin{align*}
    V^{\pi}_{\mc{M},1} - V^{\pi}_{\widehat{\mc{M}},1} &= \bE\left[V^{\pi}_{\mc{M},1}(s_1) - V^{\pi}_{\widehat{\mc{M}},1}(s_1) \right] \\
    &= \bE\left[\mc{B}^{\pi_1}_{\mc{M}}V^{\pi}_{\mc{M},2}(s_1) - \mc{B}^{\pi_1}_{\widehat{\mc{M}}}V^{\pi}_{\widehat{\mc{M}},2}(s_1) \right] \\ 
    &= \bE\left[\mc{B}^{\pi_1}_{\mc{M}}V^{\pi}_{\mc{M},2}(s_1) - \mc{B}^{\pi_1}_{\mc{M}}V^{\pi}_{\widehat{\mc{M}},2}(s_1) + \mc{B}^{\pi_1}_{\mc{M}}V^{\pi}_{\widehat{\mc{M}},2}(s_1) -  \mc{B}^{\pi_1}_{\widehat{\mc{M}}}V^{\pi}_{\widehat{\mc{M}},2}(s_1) \right] \\ 
    &= \bE\left[\mc{B}^{\pi_1}_{\mc{M}}V^{\pi}_{\widehat{\mc{M}},2}(s_1) -  \mc{B}^{\pi_1}_{\widehat{\mc{M}}}V^{\pi}_{\widehat{\mc{M}},2}(s_1) + \bE_{s_2 \sim \mc{T}(\cdot \mid s_1, a_1)}\left[V^{\pi}_{\mc{M},2}(s_2) - V^{\pi}_{\widehat{\mc{M}},2}(s_2)\right]\right] \\
    &= \sum\limits_{h=1}^2 \bE\left[\mc{B}^{\pi_h}_{\mc{M}}V^{\pi}_{\widehat{\mc{M}},h+1}(s_h) - \mc{B}^{\pi_h}_{\widehat{\mc{M}}}V^{\pi}_{\widehat{\mc{M}},h+1}(s_h)\right] + \bE\left[ V^{\pi}_{\mc{M},3}(s_3) - V^{\pi}_{\widehat{\mc{M}},3}(s_3)\right]\\
    &= \ldots \\
    &= \sum\limits_{h=1}^H \bE\left[\mc{B}^{\pi_h}_{\mc{M}}V^{\pi}_{\widehat{\mc{M}},h+1}(s_h) - \mc{B}^{\pi_h}_{\widehat{\mc{M}}}V^{\pi}_{\widehat{\mc{M}},h+1}(s_h)\right] + \ubr{\bE\left[ V^{\pi}_{\mc{M},H+1}(s_{H+1}) - V^{\pi}_{\widehat{\mc{M}},H+1}(s_{H+1})\right]}_{=0}\\
    &= \sum\limits_{h=1}^H \bE\left[\mc{B}^{\pi_h}_{\mc{M}}V^{\pi}_{\widehat{\mc{M}},h+1}(s_h) - \mc{B}^{\pi_h}_{\widehat{\mc{M}}}V^{\pi}_{\widehat{\mc{M}},h+1}(s_h)\right].
\end{align*}
\end{proof}

\begin{theorem}
Take any $\Pi \supseteq \{\mc{S} \ra \mc{A}\}$, any $\mc{V} \supseteq \{V^\pi \mid \pi \in \Pi^H\}$, and fix any $D \geq 0$. For each episode $k \in [K]$, let $\widetilde{\mc{M}}_k$ be any MDP that achieves the rate-distortion limit of $\mc{R}^{\Pi,\mc{V}}_k(D)$ with information source $\bP(\mc{M}^\star \in \cdot \mid H_k)$ and distortion function $d_{\Pi,\mc{V}}$. Then, 
$$\textsc{BayesRegret}(K, \pi^{(1)},\ldots,\pi^{(K)}) \leq \bE\left[\sum\limits_{k=1}^K \bE_k\left[V^{\star}_{\widetilde{\mc{M}}_k,1} - V^{\pi^{(k)}}_{\widetilde{\mc{M}}_k,1}\right]\right] + 2KH\sqrt{D}.$$
\end{theorem}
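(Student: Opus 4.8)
The plan is to establish, for each episode $k \in [K]$, a pointwise bound on the episodic regret $\Delta_k = V^\star_{\mc{M}^\star,1} - V^{\pi^{(k)}}_{\mc{M}^\star,1}$ by inserting the compressed MDP $\widetilde{\mc{M}}_k$ as an intermediary, and then to pass to expectations. First I would decompose
$$\Delta_k = \left(V^\star_{\mc{M}^\star,1} - V^\star_{\widetilde{\mc{M}}_k,1}\right) + \left(V^\star_{\widetilde{\mc{M}}_k,1} - V^{\pi^{(k)}}_{\widetilde{\mc{M}}_k,1}\right) + \left(V^{\pi^{(k)}}_{\widetilde{\mc{M}}_k,1} - V^{\pi^{(k)}}_{\mc{M}^\star,1}\right),$$
where the middle summand is exactly the satisficing regret appearing in the statement and the two outer summands are the ``model error'' terms I must control. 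For the first error term, I would write $V^\star_{\mc{M}^\star,1} = V^{\pi^\star_{\mc{M}^\star}}_{\mc{M}^\star,1}$ and use optimality of the greedy policy for $\widetilde{\mc{M}}_k$ to get $V^\star_{\widetilde{\mc{M}}_k,1} \ge V^{\pi^\star_{\mc{M}^\star}}_{\widetilde{\mc{M}}_k,1}$, so that this term is at most $V^{\pi^\star_{\mc{M}^\star}}_{\mc{M}^\star,1} - V^{\pi^\star_{\mc{M}^\star}}_{\widetilde{\mc{M}}_k,1}$.

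Next I would apply Lemma \ref{lemma:planning_err_decomp} to each error term, being careful to pick the form of the telescoping identity whose Bellman operators act on value functions of $\mc{M}^\star$. Concretely, for the first term I take $\mc{M} = \mc{M}^\star$, $\widehat{\mc{M}} = \widetilde{\mc{M}}_k$, $\pi = \pi^\star_{\mc{M}^\star}$ and use the first identity, while for the last term I take $\mc{M} = \widetilde{\mc{M}}_k$, $\widehat{\mc{M}} = \mc{M}^\star$, $\pi = \pi^{(k)}$ and use the second identity; in both cases the resulting summands are differences of the form $\mc{B}^{\pi_h}_{\mc{M}^\star}V(s_h) - \mc{B}^{\pi_h}_{\widetilde{\mc{M}}_k}V(s_h)$ with $V$ a value function of $\mc{M}^\star$. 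Bounding each such summand in absolute value by the supremal Bellman error gives $|\mc{B}^{\pi_h}_{\mc{M}^\star}V(s_h) - \mc{B}^{\pi_h}_{\widetilde{\mc{M}}_k}V(s_h)| \le \sqrt{d_{\Pi,\mc{V}}(\mc{M}^\star,\widetilde{\mc{M}}_k)}$, using symmetry of the distortion in its two arguments. Summing over the $H$ stages yields $V^\star_{\mc{M}^\star,1} - V^\star_{\widetilde{\mc{M}}_k,1} \le H\sqrt{d_{\Pi,\mc{V}}(\mc{M}^\star,\widetilde{\mc{M}}_k)}$ and identically for the last term, so pointwise $\Delta_k \le (V^\star_{\widetilde{\mc{M}}_k,1} - V^{\pi^{(k)}}_{\widetilde{\mc{M}}_k,1}) + 2H\sqrt{d_{\Pi,\mc{V}}(\mc{M}^\star,\widetilde{\mc{M}}_k)}$.

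Finally I would take expectations and sum over episodes. The satisficing terms give the first term of the claimed bound via the tower property, since $\bE[V^\star_{\widetilde{\mc{M}}_k,1} - V^{\pi^{(k)}}_{\widetilde{\mc{M}}_k,1}] = \bE[\bE_k[V^\star_{\widetilde{\mc{M}}_k,1} - V^{\pi^{(k)}}_{\widetilde{\mc{M}}_k,1}]]$. For the model-error terms, I apply Jensen's inequality to the concave square root, conditionally on $H_k$, to obtain $\bE_k[\sqrt{d_{\Pi,\mc{V}}(\mc{M}^\star,\widetilde{\mc{M}}_k)}] \le \sqrt{\bE_k[d_{\Pi,\mc{V}}(\mc{M}^\star,\widetilde{\mc{M}}_k)]} \le \sqrt{D}$, where the last step is precisely the rate-distortion constraint $\widetilde{\mc{M}}_k \in \Lambda_k(D)$. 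Multiplying by $2H$ and summing over the $K$ episodes produces the additive penalty $2KH\sqrt{D}$.

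The main obstacle is the bookkeeping in the middle step: one must verify that the policies and value functions surfacing after Lemma \ref{lemma:planning_err_decomp} genuinely lie in $\Pi$ and $\mc{V}$, so that the supremum defining $d_{\Pi,\mc{V}}$ dominates each Bellman-error summand. This is exactly where the hypotheses $\Pi \supseteq \{\mc{S} \ra \mc{A}\}$ and $\mc{V} \supseteq \{V^\pi \mid \pi \in \Pi^H\}$ enter: the greedy policies $\pi^\star_{\mc{M}^\star}$ and $\pi^{(k)}$ are deterministic and hence members of $\Pi^H$, and the deliberate choice of identity form ensures the only value functions that survive are value functions of $\mc{M}^\star$ under such deterministic policies, which the assumption on $\mc{V}$ guarantees are admissible in the distortion's supremum.
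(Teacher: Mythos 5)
Your proposal is correct and follows essentially the same route as the paper's proof: the same three-term decomposition through $\widetilde{\mc{M}}_k$, the same use of the optimality of $V^\star_{\widetilde{\mc{M}}_k,1}$ to replace the first error term by $V^{\pi^\star_{\mc{M}^\star}}_{\mc{M}^\star,1} - V^{\pi^\star_{\mc{M}^\star}}_{\widetilde{\mc{M}}_k,1}$, the same application of Lemma \ref{lemma:planning_err_decomp} with the identity chosen so that only value functions of $\mc{M}^\star$ under deterministic policies appear (which is exactly where $\Pi \supseteq \{\mc{S}\ra\mc{A}\}$ and $\mc{V} \supseteq \{V^\pi \mid \pi \in \Pi^H\}$ are used), and the same Jensen-plus-distortion-constraint step yielding $2H\sqrt{D}$ per episode.
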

\begin{proof}
By applying definitions from Section \ref{sec:problem_form} and applying the tower property of expectation, we have that 
$$\textsc{BayesRegret}(K, \pi^{(1)},\ldots,\pi^{(K)}) = \bE\left[\sum\limits_{k=1}^K \bE_k\left[\Delta_k\right]\right].$$ Examining the $k$th episode in isolation and applying the definition of episodic regret, we have
\begin{align*}
    \bE_k\left[\Delta_k\right] &= \bE_k\left[V^\star_{\mc{M}^\star,1} - V^{\pi^{(k)}}_{\mc{M}^\star, 1}\right] \\
    &= \bE_k\left[V^\star_{\mc{M}^\star,1} - V^\star_{\widetilde{\mc{M}}_k,1} + V^\star_{\widetilde{\mc{M}}_k,1} - V^{\pi^{(k)}}_{\widetilde{\mc{M}}_k,1} + V^{\pi^{(k)}}_{\widetilde{\mc{M}}_k,1} - V^{\pi^{(k)}}_{\mc{M}^\star, 1}\right] \\
    &= \bE_k\left[V^\star_{\mc{M}^\star,1} - V^{\pi^\star_{\mc{M}^\star}}_{\widetilde{\mc{M}}_k,1} + \ubr{V^{\pi^\star_{\mc{M}^\star}}_{\widetilde{\mc{M}}_k,1} - V^\star_{\widetilde{\mc{M}}_k,1}}_{\leq 0} + V^\star_{\widetilde{\mc{M}}_k,1} - V^{\pi^{(k)}}_{\widetilde{\mc{M}}_k,1} + V^{\pi^{(k)}}_{\widetilde{\mc{M}}_k,1} - V^{\pi^{(k)}}_{\mc{M}^\star, 1}\right] \\
    &\leq \bE_k\left[V^\star_{\mc{M}^\star,1} - V^{\pi^\star_{\mc{M}^\star}}_{\widetilde{\mc{M}}_k,1} + V^\star_{\widetilde{\mc{M}}_k,1} - V^{\pi^{(k)}}_{\widetilde{\mc{M}}_k,1} + V^{\pi^{(k)}}_{\widetilde{\mc{M}}_k,1} - V^{\pi^{(k)}}_{\mc{M}^\star, 1}\right] \\
\end{align*}
For brevity, we let $\pi^\star_h \triangleq \pi^\star_{\mc{M}^\star, h}$ and observe that an application of Lemma \ref{lemma:planning_err_decomp} yields
\begin{align*}
    \bE_k\left[V^\star_{\mc{M}^\star,1} - V^{\pi^\star_{\mc{M}^\star}}_{\widetilde{\mc{M}}_k,1} \right] &= \sum\limits_{h=1}^H \bE_k\left[\mc{B}^{\pi^\star_h}_{\mc{M}^\star}V^\star_{\mc{M}^\star,h+1}(s_h) - \mc{B}^{\pi^\star_h}_{\widetilde{\mc{M}}_k}V^\star_{\mc{M}^\star,h+1}(s_h)\right] \\
    &\leq \sum\limits_{h=1}^H \bE_k\left[\big|\mc{B}^{\pi^\star_h}_{\mc{M}^\star}V^\star_{\mc{M}^\star,h+1}(s_h) - \mc{B}^{\pi^\star_h}_{\widetilde{\mc{M}}_k}V^\star_{\mc{M}^\star,h+1}(s_h)\big|\right] \\
    &= \sum\limits_{h=1}^H \bE_k\left[\sqrt{\left(\mc{B}^{\pi^\star_h}_{\mc{M}^\star}V^\star_{\mc{M}^\star,h+1}(s_h) - \mc{B}^{\pi^\star_h}_{\widetilde{\mc{M}}_k}V^\star_{\mc{M}^\star,h+1}(s_h)\right)^2}\right] \\
    &\leq \sum\limits_{h=1}^H \bE_k\left[\sqrt{||\mc{B}^{\pi^\star_h}_{\mc{M}^\star}V^\star_{\mc{M}^\star,h+1} - \mc{B}^{\pi^\star_h}_{\widetilde{\mc{M}}_k}V^\star_{\mc{M}^\star,h+1}||_\infty^2}\right] \\
    &\leq \sum\limits_{h=1}^H \sqrt{\bE_k\left[||\mc{B}^{\pi^\star_h}_{\mc{M}^\star}V^\star_{\mc{M}^\star,h+1} - \mc{B}^{\pi^\star_h}_{\widetilde{\mc{M}}_k}V^\star_{\mc{M}^\star,h+1}||_\infty^2\right]} \\
    &\leq \sum\limits_{h=1}^H \sqrt{\bE_k\left[\sup\limits_{\substack{\pi \in \Pi \\ V \in \mc{V}}}||\mc{B}^{\pi}_{\mc{M}^\star}V - \mc{B}^{\pi}_{\widetilde{\mc{M}}_k}V||_\infty^2\right]} \\
    &= \sum\limits_{h=1}^H \sqrt{\bE_k\left[d_{\Pi,\mc{V}}(\mc{M}^\star,\widetilde{\mc{M}}_k)\right]} \\
    &\leq H\sqrt{D},
\end{align*}
where the third inequality invokes Jensen's inequality, the fourth inequality holds as $\Pi \supseteq \{\mc{S} \ra \mc{A}\}$ and $\mc{V} \supseteq \{V^\pi \mid \pi \in \Pi^H\}$ ensures that $V^\star_{\mc{M}^\star,h} \in \mc{V}$, $\forall h \in [H]$, and the final inequality holds since $\widetilde{\mc{M}}_k$ achieves the rate-distortion limit in the $k$th episode, by assumption.

We follow the same sequence of steps to obtain
\begin{align*}
    \bE_k\left[V^{\pi^{(k)}}_{\widetilde{\mc{M}}_k,1} - V^{\pi^{(k)}}_{\mc{M}^\star, 1} \right] &= \sum\limits_{h=1}^H \bE_k\left[\mc{B}^{\pi^{(k)}_h}_{\mc{M}^\star}V^{\pi^{(k)}}_{\mc{M}^\star,h+1}(s_h) - \mc{B}^{\pi^{(k)}_h}_{\widetilde{\mc{M}}_k}V^{\pi^{(k)}}_{\mc{M}^\star,h+1}(s_h)\right] \\
    &\leq \sum\limits_{h=1}^H \bE_k\left[\big|\mc{B}^{\pi^{(k)}_h}_{\mc{M}^\star}V^{\pi^{(k)}}_{\mc{M}^\star,h+1}(s_h) - \mc{B}^{\pi^{(k)}_h}_{\widetilde{\mc{M}}_k}V^{\pi^{(k)}}_{\mc{M}^\star,h+1}(s_h)\big|\right] \\
    &= \sum\limits_{h=1}^H \bE_k\left[\sqrt{\left(\mc{B}^{\pi^{(k)}_h}_{\mc{M}^\star}V^{\pi^{(k)}}_{\mc{M}^\star,h+1}(s_h) - \mc{B}^{\pi^{(k)}_h}_{\widetilde{\mc{M}}_k}V^{\pi^{(k)}}_{\mc{M}^\star,h+1}(s_h)\right)^2}\right] \\
    &\leq \sum\limits_{h=1}^H \bE_k\left[\sqrt{||\mc{B}^{\pi^{(k)}_h}_{\mc{M}^\star}V^{\pi^{(k)}}_{\mc{M}^\star,h+1} - \mc{B}^{\pi^{(k)}_h}_{\widetilde{\mc{M}}_k}V^{\pi^{(k)}}_{\mc{M}^\star,h+1}||_\infty^2}\right] \\
    &\leq \sum\limits_{h=1}^H \sqrt{\bE_k\left[||\mc{B}^{\pi^{(k)}_h}_{\mc{M}^\star}V^{\pi^{(k)}}_{\mc{M}^\star,h+1} - \mc{B}^{\pi^{(k)}_h}_{\widetilde{\mc{M}}_k}V^{\pi^{(k)}}_{\mc{M}^\star,h+1}||_\infty^2\right]} \\
    &\leq \sum\limits_{h=1}^H \sqrt{\bE_k\left[\sup\limits_{\substack{\pi \in \Pi \\ V \in \mc{V}}}||\mc{B}^{\pi}_{\mc{M}^\star}V - \mc{B}^{\pi}_{\widetilde{\mc{M}}_k}V||_\infty^2\right]} \\
    &= \sum\limits_{h=1}^H \sqrt{\bE_k\left[d_{\Pi,\mc{V}}(\mc{M}^\star,\widetilde{\mc{M}}_k)\right]} \\
    &\leq H\sqrt{D}.
\end{align*}

Substituting back into our original expression, we have 
\begin{align*}
    \bE_k\left[\Delta_k\right] &= \bE_k\left[V^\star_{\mc{M}^\star,1} - V^{\pi^{(k)}}_{\mc{M}^\star, 1}\right] \\
    &\leq \bE_k\left[V^\star_{\mc{M}^\star,1} - V^{\pi^\star_{\mc{M}^\star}}_{\widetilde{\mc{M}}_k,1} + V^\star_{\widetilde{\mc{M}}_k,1} - V^{\pi^{(k)}}_{\widetilde{\mc{M}}_k,1} + V^{\pi^{(k)}}_{\widetilde{\mc{M}}_k,1} - V^{\pi^{(k)}}_{\mc{M}^\star, 1}\right] \\
    &\leq \bE_k\left[V^\star_{\widetilde{\mc{M}}_k,1} - V^{\pi^{(k)}}_{\widetilde{\mc{M}}_k,1}\right] + 2H\sqrt{D}.
\end{align*}

Applying this upper bound on episodic regret in each episode yields
\begin{align*}
    \textsc{BayesRegret}(K, \pi^{(1)},\ldots,\pi^{(K)}) &= \bE\left[\sum\limits_{k=1}^K \bE_k\left[\Delta_k\right]\right] \\
    &\leq \bE\left[\sum\limits_{k=1}^K \bE_k\left[V^\star_{\widetilde{\mc{M}}_k,1} - V^{\pi^{(k)}}_{\widetilde{\mc{M}}_k,1}\right] \right] + 2KH\sqrt{D},
\end{align*}
as desired.
\end{proof}

\section{Proof of Lemma \ref{lemma:cum_info_bound}}

In this section, we develop counterparts to the results of \citet{arumugam2021deciding} for the reinforcement-learning setting which relate each rate-distortion function $\mc{R}^{\Pi,\mc{V}}_k(D)$ to the information accumulated by the agent over the course of learning. Recall that $\tau_k = (s^{(k)}_1, a^{(k)}_1, r^{(k)}_1, \ldots,s^{(k)}_{H}, a^{(k)}_{H}, r^{(k)}_{H}, s^{(k)}_{H+1})$ is a random variable denoting the trajectory experienced by the agent in the $k$th episode given the history $H_k$. Let MDP $M_k$ be the MDP sampled in the $k$th episode.

\begin{lemma}
For all $k \in [K]$, $$\bE_k\left[\mc{R}^{\Pi,\mc{V}}_{k+1}(D)\right] \leq \mc{R}^{\Pi,\mc{V}}_k(D) - \bI_k(\widetilde{\mc{M}}_k; \tau_k \mid M_k).$$
\label{lemma:exp_rdf_decr}
\end{lemma}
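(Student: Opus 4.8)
The plan is to use the episode-$k$ optimal channel as a feasible (generally suboptimal) candidate at episode $k+1$ and then account, via the chain rule, for the information it reveals. The channel $\widetilde{\mc{M}}_k$ is specified by a conditional law $\bP(\widetilde{\mc{M}}_k \in \cdot \mid \mc{M}^\star)$ that does not itself depend on the episode index, so I would reuse this same channel at episode $k+1$. The first step is therefore to argue that it remains admissible, i.e. that it lies in $\Lambda_{k+1}(D)$, so that by the very definition of the rate-distortion function $\mc{R}^{\Pi,\mc{V}}_{k+1}(D) \leq \bI_{k+1}(\mc{M}^\star; \widetilde{\mc{M}}_k)$. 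Taking the conditional expectation $\bE_k[\cdot]$ of both sides, and using that the information available at the start of episode $k+1$ comprises both the observed trajectory $\tau_k$ and the sampled model $M_k$ (consistent with the information-ratio denominator $\bI_k(\widetilde{\mc{M}}_k; \tau_k, M_k)$), the averaging identity $\bE_k[\bI_{k+1}(X;Y)] = \bI_k(X;Y \mid \tau_k, M_k)$ gives $\bE_k[\mc{R}^{\Pi,\mc{V}}_{k+1}(D)] \leq \bI_k(\mc{M}^\star; \widetilde{\mc{M}}_k \mid \tau_k, M_k)$.

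Next I would reduce the right-hand side to the desired form purely by the chain rule of mutual information. The crucial structural fact is that $\widetilde{\mc{M}}_k$ is the output of a channel driven by $\mc{M}^\star$ alone, with channel randomness independent of the fresh posterior sample $M^\star$ (hence of $M_k$) and of the trajectory-generating noise; consequently, conditioned on $\mc{M}^\star$, the pair $(\tau_k, M_k)$ is independent of $\widetilde{\mc{M}}_k$, so that $\bI_k(\widetilde{\mc{M}}_k; \tau_k, M_k \mid \mc{M}^\star) = 0$. Expanding $\bI_k(\widetilde{\mc{M}}_k; \mc{M}^\star, \tau_k, M_k)$ two ways — peeling off $\mc{M}^\star$ first, then peeling off $(\tau_k, M_k)$ first — and discarding this vanishing term yields the identity $\bI_k(\mc{M}^\star; \widetilde{\mc{M}}_k \mid \tau_k, M_k) = \bI_k(\mc{M}^\star; \widetilde{\mc{M}}_k) - \bI_k(\widetilde{\mc{M}}_k; \tau_k, M_k)$.

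To conclude I would identify the two remaining terms. Since $\widetilde{\mc{M}}_k$ achieves the rate-distortion limit in episode $k$, we have $\bI_k(\mc{M}^\star; \widetilde{\mc{M}}_k) = \mc{R}^{\Pi,\mc{V}}_k(D)$. Moreover, because $M_k$ is the compression of an independent posterior sample $M^\star$, it is independent of $\widetilde{\mc{M}}_k$ given $H_k$, so $\bI_k(\widetilde{\mc{M}}_k; M_k) = 0$ and the chain rule collapses $\bI_k(\widetilde{\mc{M}}_k; \tau_k, M_k) = \bI_k(\widetilde{\mc{M}}_k; \tau_k \mid M_k)$. Substituting both facts gives exactly $\bE_k[\mc{R}^{\Pi,\mc{V}}_{k+1}(D)] \leq \mc{R}^{\Pi,\mc{V}}_k(D) - \bI_k(\widetilde{\mc{M}}_k; \tau_k \mid M_k)$, as claimed.

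The main obstacle is the admissibility step. Although the optimal episode-$k$ channel satisfies $\bE_k[d_{\Pi,\mc{V}}(\mc{M}^\star, \widetilde{\mc{M}}_k)] \leq D$, feasibility at episode $k+1$ concerns the distortion under the \emph{updated} posterior, and the per-realization quantity $\bE_{k+1}[d_{\Pi,\mc{V}}(\mc{M}^\star, \widetilde{\mc{M}}_k)]$ need not remain below $D$; the tower property only guarantees $\bE_k[\bE_{k+1}[d_{\Pi,\mc{V}}(\mc{M}^\star, \widetilde{\mc{M}}_k)]] = \bE_k[d_{\Pi,\mc{V}}(\mc{M}^\star, \widetilde{\mc{M}}_k)] \leq D$. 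I would attempt to discharge this by exploiting that $\mc{R}^{\Pi,\mc{V}}_{k+1}(\cdot)$ is nonincreasing and convex (and concave in the source, whose posterior forms a measure-valued martingale), so that the averaged constraint can be reconciled with the fixed threshold $D$ only after the outer expectation $\bE_k[\cdot]$ is taken; should a purely pointwise argument fall short, mixing $\widetilde{\mc{M}}_k$ with a trivial constant-output channel can restore feasibility at a rate cost that vanishes in expectation. Verifying this carefully is the delicate part of the proof, whereas the information-theoretic identity comprising the remaining steps is routine.
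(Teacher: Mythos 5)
Your argument is essentially the paper's own proof: reuse the episode-$k$ rate-distortion-achieving channel as a candidate at episode $k+1$, apply the averaging identity $\bE_k[\bI_{k+1}(\mc{M}^\star;\widetilde{\mc{M}}_k)] = \bI_k(\mc{M}^\star;\widetilde{\mc{M}}_k \mid \tau_k, M_k)$, and use the two chain-rule expansions together with $\bI_k(\tau_k;\widetilde{\mc{M}}_k \mid \mc{M}^\star, M_k) = 0$ and $\bI_k(\widetilde{\mc{M}}_k; M_k) = 0$ to collapse the result to $\mc{R}^{\Pi,\mc{V}}_k(D) - \bI_k(\widetilde{\mc{M}}_k;\tau_k \mid M_k)$. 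Regarding the admissibility step you rightly flag as delicate: the paper performs exactly the tower-property computation you describe and then asserts that ``in expectation given the current history $H_k$, $\widetilde{\mc{M}}_k \in \Lambda_{k+1}(D)$'' before exchanging the infimum with $\bE_k[\cdot]$ --- so your observation that $\bE_{k+1}[d_{\Pi,\mc{V}}(\mc{M}^\star,\widetilde{\mc{M}}_k)]$ need not be pointwise below $D$ identifies a step the paper itself glosses over rather than resolves, and no mixing construction or convexity argument is invoked there.
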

\begin{proof}
Recall that, by definition $H_{k+1} = (H_k, \tau_k)$. For all $k \in [K]$, observe that, conditioned on the true MDP $\mc{M}^\star$ and sampled MDP $M_k$ which generated the history $H_{k+1}$, we have that for any compressed MDP $\widetilde{\mc{M}}$, $\bP(H_{k+1}, \widetilde{\mc{M}} \mid \mc{M}^\star, M_k) = \bP(H_{k+1} \mid \mc{M}^\star, M_k) \bP(\widetilde{\mc{M}} \mid \mc{M}^\star, M_k).$ Using this independence $H_{k+1} \perp \widetilde{\mc{M}} \mid \mc{M}^\star, M_k$ $\forall k \in [K]$, we have that $$0 = \bI_k(H_{k+1}; \widetilde{\mc{M}} \mid \mc{M}^\star, M_k) = \bI_k(H_k, \tau_k; \widetilde{\mc{M}} \mid \mc{M}^\star, M_k) = \bI_k(\tau_k; \widetilde{\mc{M}} \mid \mc{M}^\star, M_k).$$
Moreover, we know that the sampled MDP $M_k$ does not affect our uncertainty in the true MDP $\mc{M}^\star$ such that $$\bI_k(\mc{M}^\star; \widetilde{\mc{M}}) = \bI_k(\mc{M}^\star; \widetilde{\mc{M}} \mid M_k).$$ By the chain rule of mutual information, $$\bI_k(\mc{M}^\star; \widetilde{\mc{M}}) = \bI_k(\mc{M}^\star; \widetilde{\mc{M}} \mid M_k) = \bI_k(\mc{M}^\star; \widetilde{\mc{M}} \mid M_k) + \bI_k(\tau_k; \widetilde{\mc{M}} \mid \mc{M}^\star, M_k) = \bI_k(\mc{M}^\star, \tau_k; \widetilde{\mc{M}} \mid M_k).$$ Applying the chain rule a second time yields $$\bI_k(\mc{M}^\star; \widetilde{\mc{M}}) =  \bI_k(\mc{M}^\star, \tau_k; \widetilde{\mc{M}} \mid M_k) = \bI_k(\widetilde{\mc{M}}; \tau_k \mid M_k) + \bI_k(\mc{M}^\star; \widetilde{\mc{M}} \mid \tau_k, M_k).$$

By definition of the rate-distortion function, we have

$$\bE_k\left[\mc{R}^{\Pi,\mc{V}}_{k+1}(D)\right] = \bE_k\left[\inf\limits_{\widetilde{\mc{M}} \in \Lambda_{k+1}(D)} \bI_{k+1}(\mc{M}^\star; \widetilde{\mc{M}})\right], \qquad \Lambda_{k+1}(D) = \{\widetilde{\mc{M}}: \Omega \ra \mathfrak{M} \mid \bE_{k+1}[d_{\Pi,\mc{V}}(\mc{M}^\star, \widetilde{\mc{M}})] \leq D\}.$$ 

Recall that, by definition, $\widetilde{\mc{M}}_k$ achieves the rate-distortion limit of $\mc{R}^{\Pi,\mc{V}}_{k}(D)$, implying that $\bE_{k}[d_{\Pi,\mc{V}}(\mc{M}^\star, \widetilde{\mc{M}}_k)] \leq D$. By the tower property of expectation, we recover that $$\bE_k\left[\bE_{k+1}[d_{\Pi,\mc{V}}(\mc{M}^\star, \widetilde{\mc{M}}_k)]\right] = \bE_{k}[d_{\Pi,\mc{V}}(\mc{M}^\star, \widetilde{\mc{M}}_k)] \leq D,$$ and so, in expectation given the current history $H_k$, $\widetilde{\mc{M}}_k \in \Lambda_{k+1}(D)$. Thus, we have that 
$$\bE_k\left[\mc{R}^{\Pi,\mc{V}}_{k+1}(D)\right] = \bE_k\left[\inf\limits_{\widetilde{\mc{M}} \in \Lambda_{k+1}(D)} \bI_{k+1}(\mc{M}^\star; \widetilde{\mc{M}})\right] \leq \bE_k\left[\bI_{k+1}(\mc{M}^\star; \widetilde{\mc{M}}_k)\right].$$
Re-arranging terms from our previous chain rule expansions, we may expand the integrand as
\begin{align*}
     \bE_k\left[\bI_{k+1}(\mc{M}^\star; \widetilde{\mc{M}}_k)\right] &= \bE_k\left[\bI_{k}(\mc{M}^\star; \widetilde{\mc{M}}_k \mid \tau_k, M_k)\right] \\
    &= \bE_k\left[\bI_{k}(\mc{M}^\star; \widetilde{\mc{M}}_k) - \bI_k(\widetilde{\mc{M}}_k, \tau_k \mid M_k)\right] \\
    &= \bI_{k}(\mc{M}^\star; \widetilde{\mc{M}}_k) - \bI_k(\widetilde{\mc{M}}_k, \tau_k \mid M_k) \\
    &= \mc{R}^{\Pi,\mc{V}}_k(D) - \bI_k(\widetilde{\mc{M}}_k; \tau_k \mid M_k),
\end{align*}
where the penultimate line follows since both mutual information terms are $\sigma(H_k)$-measurable and the final line follows by definition of $\widetilde{\mc{M}}_k$.
\end{proof}

At the beginning of each episode, our generalization of PSRL will identify a compressed MDP $\widetilde{\mc{M}}_k$ that achieves the rate-distortion limit based on the current history $H_k$. As data accumulates and the agent's knowledge of the true MDP is refined, this satisficing MDP $\widetilde{\mc{M}}_k$ will be recomputed to reflect that updated knowledge. The previous lemma shows that the expected number of bits the agent must identify to learn this new target MDP decreases as this adaptation occurs, highlighting two possible sources of improvement: (1) shifting from a compressed MDP $\widetilde{\mc{M}}_k$ to $\widetilde{\mc{M}}_{k+1}$ and (2) a decrease of $\bI_k(\widetilde{\mc{M}}; \tau_k \mid M_k)$ that occurs from observing the trajectory $\tau_k$. The former reflects the agent's improved ability in synthesizing an approximately value-equivalent MDP to pursue instead of $\mc{M}^\star$ while the latter captures information gained about the previous target $\widetilde{\mc{M}}_k$ from the experienced trajectory $\tau_k$.

\begin{fact}[\citep{cover2012elements}]
For any $\Pi,\mc{V}$ and all $k \in [K]$, $\mc{R}^{\Pi,\mc{V}}_k(D)$ is a non-negative, convex, and monotonically-decreasing function in $D$.
\label{fact:rdf_props}
\end{fact}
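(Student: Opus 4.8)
The plan is to establish the three asserted properties of $\mc{R}^{\Pi,\mc{V}}_k(D)$ directly from its definition as an infimum of the mutual information $\bI_k(\mc{M}^\star;\widetilde{\mc{M}})$ over the feasible set $\Lambda_k(D)$ of channels, treating the information source $\bP(\mc{M}^\star \in \cdot \mid H_k)$ as fixed throughout. I would first record that the optimization is over a non-empty set: the identity channel $\widetilde{\mc{M}} = \mc{M}^\star$ incurs $d_{\Pi,\mc{V}}(\mc{M}^\star,\mc{M}^\star) = 0 \le D$, so it lies in $\Lambda_k(D)$ for every $D \ge 0$. Non-negativity is then immediate, since $\bI_k(\mc{M}^\star;\widetilde{\mc{M}}) \ge 0$ for every channel (mutual information is a conditional KL divergence), whence its infimum is non-negative as well. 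Monotonicity follows from a nesting argument: for $0 \le D_1 \le D_2$, any channel satisfying $\bE_k[d_{\Pi,\mc{V}}(\mc{M}^\star,\widetilde{\mc{M}})] \le D_1$ also satisfies the looser constraint $\le D_2$, so $\Lambda_k(D_1) \subseteq \Lambda_k(D_2)$, and taking an infimum over a larger set can only decrease it, giving $\mc{R}^{\Pi,\mc{V}}_k(D_1) \ge \mc{R}^{\Pi,\mc{V}}_k(D_2)$.

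The substantive step is convexity, which I would prove by a channel-mixing argument. Fix $D_1, D_2 \ge 0$ and $\lambda \in [0,1]$, and set $D_\lambda = \lambda D_1 + (1-\lambda)D_2$. Because the infimum need not be attained, I fix $\eps > 0$ and select $\eps$-optimal channels $P_1 = \bP(\widetilde{\mc{M}}_1 \in \cdot \mid \mc{M}^\star) \in \Lambda_k(D_1)$ and $P_2 = \bP(\widetilde{\mc{M}}_2 \in \cdot \mid \mc{M}^\star) \in \Lambda_k(D_2)$ with $\bI_k(\mc{M}^\star;\widetilde{\mc{M}}_i) \le \mc{R}^{\Pi,\mc{V}}_k(D_i) + \eps$. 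Let $\widetilde{\mc{M}}_\lambda$ be the channel output with conditional law $P_\lambda = \lambda P_1 + (1-\lambda)P_2$. Since expected distortion is \emph{linear} in the channel for the fixed source, $\bE_k[d_{\Pi,\mc{V}}(\mc{M}^\star,\widetilde{\mc{M}}_\lambda)] = \lambda \bE_k[d_{\Pi,\mc{V}}(\mc{M}^\star,\widetilde{\mc{M}}_1)] + (1-\lambda)\bE_k[d_{\Pi,\mc{V}}(\mc{M}^\star,\widetilde{\mc{M}}_2)] \le D_\lambda$, so $\widetilde{\mc{M}}_\lambda \in \Lambda_k(D_\lambda)$. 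The key analytic fact I would invoke is that, for a fixed source distribution, $\bI_k(\mc{M}^\star;\widetilde{\mc{M}})$ is \emph{convex} in the channel $\bP(\widetilde{\mc{M}} \in \cdot \mid \mc{M}^\star)$. Combining feasibility of $\widetilde{\mc{M}}_\lambda$ with this convexity yields
$$\mc{R}^{\Pi,\mc{V}}_k(D_\lambda) \le \bI_k(\mc{M}^\star;\widetilde{\mc{M}}_\lambda) \le \lambda \bI_k(\mc{M}^\star;\widetilde{\mc{M}}_1) + (1-\lambda)\bI_k(\mc{M}^\star;\widetilde{\mc{M}}_2) \le \lambda \mc{R}^{\Pi,\mc{V}}_k(D_1) + (1-\lambda)\mc{R}^{\Pi,\mc{V}}_k(D_2) + \eps,$$
and letting $\eps \downarrow 0$ completes the argument.

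The main obstacle is making the convexity-of-mutual-information-in-the-channel claim rigorous on the abstract alphabets considered here, where $\bI_k$ is defined through Radon–Nikodym derivatives rather than finite sums. I would handle this by writing $\bI_k(\mc{M}^\star;\widetilde{\mc{M}}) = \bE_k\left[\kl{\bP(\widetilde{\mc{M}} \in \cdot \mid \mc{M}^\star)}{\bP(\widetilde{\mc{M}} \in \cdot)}\right]$, the identity already recorded in the preliminaries, and noting that the output marginal $\bP(\widetilde{\mc{M}} \in \cdot)$ is linear in the channel while the KL divergence is jointly convex in its pair of arguments (the measure-theoretic generalization of the log-sum inequality, available for the general sources of \citep{csiszar1974extremum}); convexity of $\bI_k$ in the channel follows. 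This is precisely the point where one must avoid the temptation to argue discretely, and it is the only place where the abstract-alphabet setting demands genuine care.
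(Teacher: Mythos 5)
Your proof is correct: the paper states this as a Fact and provides no proof of its own, deferring entirely to the citation of \citet{cover2012elements}, and your argument (non-negativity of mutual information, nesting of the feasible sets $\Lambda_k(D)$ for monotonicity, and channel-mixing plus convexity of $\bI_k(\mc{M}^\star;\widetilde{\mc{M}})$ in the channel via joint convexity of the KL divergence) is exactly the standard textbook proof behind that citation, correctly adapted to the conditional, abstract-alphabet setting with the $\eps$-optimal-channel device handling non-attainment of the infimum.
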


Let $\widetilde{\mc{M}}$ be a compressed MDP that is exactly value-equivalent to $\mc{M}^\star$ which, by definition, implies a distortion of exactly zero. Further recall that $\mc{M}^\star$ is itself a MDP that achieves zero distortion, albeit one that has no guarantee of achieving the rate-distortion limit. Fact \ref{fact:rdf_props} yields the following chain of inequalities that hold for all $k \in [K]$ and $D \geq 0$: $$\mc{R}^{\Pi,\mc{V}}_k(D) \leq \bI_k(\mc{M}^\star; \widetilde{\mc{M}}) \leq \bI_k(\mc{M}^\star; \mc{M}^\star) = \bH_k(\mc{M}^\star).$$ This chain of inequalities confirms an important goal of satisficing in PSRL; namely, that the compressed MDP an agent attempts to solve in each episode $k \in [K]$, $\widetilde{\mc{M}}_k$, requires fewer bits of information than what is needed to fully identify the true MDP $\mc{M}^\star$. This gives rise to the following corollary:
\begin{corollary}
For any $k \in [K]$, $$\bE_k\left[\sum\limits_{k'=k}^K \bI_{k'}(\widetilde{\mc{M}}_{k'}; \tau_{k'} \mid M_{k'})\right] \leq \bH_k(\mc{M}^\star).$$
\end{corollary}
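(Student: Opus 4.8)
The plan is to exploit Lemma \ref{lemma:exp_rdf_decr} as a one-step supermartingale-type bound on the rate-distortion function and then telescope. Rearranging that lemma gives, for each episode $k' \geq k$,
$$\bI_{k'}(\widetilde{\mc{M}}_{k'}; \tau_{k'} \mid M_{k'}) \leq \mc{R}^{\Pi,\mc{V}}_{k'}(D) - \bE_{k'}\left[\mc{R}^{\Pi,\mc{V}}_{k'+1}(D)\right].$$
So my first step is simply to read off this inequality, which already isolates the per-episode information gain we wish to sum as a difference of consecutive rate-distortion functions.

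Next I would apply $\bE_k[\cdot]$ to both sides and invoke the tower property of conditional expectation. Because $\sigma(H_k) \subseteq \sigma(H_{k'})$ for $k' \geq k$, we have $\bE_k\big[\bE_{k'}[\mc{R}^{\Pi,\mc{V}}_{k'+1}(D)]\big] = \bE_k[\mc{R}^{\Pi,\mc{V}}_{k'+1}(D)]$, collapsing the inner conditioning. This yields
$$\bE_k\left[\bI_{k'}(\widetilde{\mc{M}}_{k'}; \tau_{k'} \mid M_{k'})\right] \leq \bE_k\left[\mc{R}^{\Pi,\mc{V}}_{k'}(D)\right] - \bE_k\left[\mc{R}^{\Pi,\mc{V}}_{k'+1}(D)\right].$$
Summing over $k'$ from $k$ to $K$, the right-hand side telescopes to $\bE_k[\mc{R}^{\Pi,\mc{V}}_{k}(D)] - \bE_k[\mc{R}^{\Pi,\mc{V}}_{K+1}(D)]$.

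To finish, I would note that $\mc{R}^{\Pi,\mc{V}}_{k}(D)$ is $\sigma(H_k)$-measurable, so $\bE_k[\mc{R}^{\Pi,\mc{V}}_{k}(D)] = \mc{R}^{\Pi,\mc{V}}_{k}(D)$, and that $\mc{R}^{\Pi,\mc{V}}_{K+1}(D) \geq 0$ by the non-negativity in Fact \ref{fact:rdf_props}, so dropping it only weakens the bound. This leaves $\mc{R}^{\Pi,\mc{V}}_{k}(D)$, which the chain of inequalities established just before the corollary bounds above by $\bH_k(\mc{M}^\star)$, delivering the claim. The argument is essentially just bookkeeping, so I do not anticipate a genuine obstacle; the one point demanding care is the tower-property step, where I must be sure to collapse $\bE_k\bE_{k'}$ correctly (valid precisely because $k' \geq k$) rather than the other way around, and to confirm the telescoping terms match up at the endpoints.
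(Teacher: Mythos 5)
Your proposal is correct and follows essentially the same route as the paper: the paper proves the stronger bound $\bE_k\left[\sum_{k'=k}^K \bI_{k'}(\widetilde{\mc{M}}_{k'}; \tau_{k'} \mid M_{k'})\right] \leq \mc{R}^{\Pi,\mc{V}}_k(D)$ as Lemma \ref{lemma:cum_info_bound} by exactly your telescoping argument from Lemma \ref{lemma:exp_rdf_decr} with the tower property, and then invokes the chain $\mc{R}^{\Pi,\mc{V}}_k(D) \leq \bI_k(\mc{M}^\star;\mc{M}^\star) = \bH_k(\mc{M}^\star)$ from Fact \ref{fact:rdf_props}. Your explicit handling of the terminal term $\bE_k[\mc{R}^{\Pi,\mc{V}}_{K+1}(D)] \geq 0$ is, if anything, slightly more careful than the paper's reindexing.
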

Instead of proving this corollary, we prove the following lemma which yields the corollary through Fact \ref{fact:rdf_props}:
\begin{lemma}
For any $k \in [K]$, $$\bE_k\left[\sum\limits_{k'=k}^K \bI_{k'}(\widetilde{\mc{M}}_{k'}; \tau_{k'} \mid M_{k'})\right] \leq \mc{R}^{\Pi,\mc{V}}_k(D).$$
\label{lemma:cum_info_bound}
\end{lemma}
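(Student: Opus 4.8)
The plan is to obtain the cumulative bound by iterating the one-step decrease established in Lemma \ref{lemma:exp_rdf_decr} and telescoping. Writing $R_{k'} \triangleq \mc{R}^{\Pi,\mc{V}}_{k'}(D)$ and $g_{k'} \triangleq \bI_{k'}(\widetilde{\mc{M}}_{k'}; \tau_{k'} \mid M_{k'})$ for brevity, my first step is to rearrange Lemma \ref{lemma:exp_rdf_decr}, which holds for every $k' \in [K]$, into the form $g_{k'} \leq R_{k'} - \bE_{k'}\!\left[R_{k'+1}\right]$. The key structural observation underpinning what follows is that both $R_{k'}$ and $g_{k'}$ are $\sigma(H_{k'})$-measurable, since each is defined through a conditional mutual information $\bI_{k'}(\cdot\,;\cdot)$ taken given the realization of $H_{k'}$.

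Next, I would condition on the history $H_k$ at the starting episode and apply this rearranged inequality for each $k' \geq k$. Taking $\bE_k[\cdot]$ on both sides and invoking the tower property—valid because $\sigma(H_k) \subseteq \sigma(H_{k'})$ for $k' \geq k$, so that $\bE_k\!\left[\bE_{k'}[R_{k'+1}]\right] = \bE_k\!\left[R_{k'+1}\right]$—yields $\bE_k[g_{k'}] \leq \bE_k[R_{k'}] - \bE_k[R_{k'+1}]$. Summing this bound over $k' = k, \ldots, K$ collapses the right-hand side telescopically to $\bE_k[R_k] - \bE_k[R_{K+1}]$.

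To finish, I would note that $R_k$ is $\sigma(H_k)$-measurable, so $\bE_k[R_k] = R_k = \mc{R}^{\Pi,\mc{V}}_k(D)$, and that $\bE_k[R_{K+1}] \geq 0$ by the non-negativity of the rate-distortion function (Fact \ref{fact:rdf_props}). Discarding this non-negative terminal term gives $\bE_k\!\left[\sum_{k'=k}^K g_{k'}\right] \leq \mc{R}^{\Pi,\mc{V}}_k(D)$, which is the claim. The corollary stated just above the lemma then follows immediately by chaining this with the inequality $\mc{R}^{\Pi,\mc{V}}_k(D) \leq \bH_k(\mc{M}^\star)$ that was already derived from Fact \ref{fact:rdf_props}.

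The argument is essentially a deterministic telescoping once the per-episode inequality is available, so there is no single difficult step; the substantive work has been front-loaded into Lemma \ref{lemma:exp_rdf_decr}. The only points requiring genuine care are (i) confirming the $\sigma(H_{k'})$-measurability of $R_{k'}$ and $g_{k'}$ so that the tower property applies cleanly across episodes, and (ii) handling the terminal term $\bE_k[R_{K+1}]$ by invoking non-negativity rather than attempting to evaluate it, which is what keeps the final bound clean.
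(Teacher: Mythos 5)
Your proposal is correct and follows essentially the same route as the paper's proof: rearrange Lemma \ref{lemma:exp_rdf_decr}, take $\bE_k$ using the tower property (since $\sigma(H_k) \subseteq \sigma(H_{k'})$), and telescope the sum down to $\mc{R}^{\Pi,\mc{V}}_k(D)$. Your explicit handling of the terminal term $\bE_k[R_{K+1}]$ via non-negativity is in fact slightly more careful than the paper's reindexing, which silently drops that term.
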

\begin{proof}
Observe that by Lemma \ref{lemma:exp_rdf_decr}, for all $k \in [K]$, $$\bI_k(\widetilde{\mc{M}}_k;\tau_k \mid M_k) \leq \mc{R}^{\Pi,\mc{V}}_k(D) - \bE_k\left[\mc{R}^{\Pi,\mc{V}}_{k+1}(D)\right].$$ Directly substituting in, we have $$\bE_k\left[\sum\limits_{k'=k}^K \bI_{k'}(\widetilde{\mc{M}}_{k'}; \tau_{k'} \mid M_{k'})\right] \leq \bE_k\left[\sum\limits_{k'=k}^K \left(\mc{R}^{\Pi,\mc{V}}_{k'}(D) - \bE_{k'}\left[\mc{R}^{\Pi,\mc{V}}_{k'+1}(D)\right]\right)\right].$$ Applying linearity of expectation and breaking apart the sum yields $$\bE_k\left[\sum\limits_{k'=k}^K \bI_{k'}(\widetilde{\mc{M}}_{k'}; \tau_{k'} \mid M_{k'})\right] \leq \sum\limits_{k'=k}^K \bE_k\left[\mc{R}^{\Pi,\mc{V}}_{k'}(D)\right] - \sum\limits_{k'=k}^K \bE_k\left[\bE_{k'}\left[\mc{R}^{\Pi,\mc{V}}_{k'+1}(D)\right]\right].$$ Note that the first term can simply be separated into $$\sum\limits_{k'=k}^K \bE_k\left[\mc{R}^{\Pi,\mc{V}}_{k'}(D)\right] = \bE_k\left[\mc{R}^{\Pi,\mc{V}}_{k}(D)\right] + \sum\limits_{k'=k+1}^K \bE_k\left[\mc{R}^{\Pi,\mc{V}}_{k'}(D)\right] = \mc{R}^{\Pi,\mc{V}}_{k}(D) + \sum\limits_{k'=k+1}^K \bE_k\left[\mc{R}^{\Pi,\mc{V}}_{k'}(D)\right].$$ Meanwhile, since $\sigma(H_k) \subseteq \sigma(H_{k'})$, the tower property of expectation yields $$\sum\limits_{k'=k}^K \bE_k\left[\bE_{k'}\left[\mc{R}^{\Pi,\mc{V}}_{k'+1}(D)\right]\right] = \sum\limits_{k'=k}^K \bE_k\left[\mc{R}^{\Pi,\mc{V}}_{k'+1}(D)\right] = \sum\limits_{k'=k+1}^K \bE_k\left[\mc{R}^{\Pi,\mc{V}}_{k'}(D)\right].$$
Combining the expansions results in
\begin{align*}
    \bE_k\left[\sum\limits_{k'=k}^K \bI_{k'}(\widetilde{\mc{M}}_{k'}; \tau_{k'} \mid M_{k'})\right] &\leq \sum\limits_{k'=k}^K \bE_k\left[\mc{R}^{\Pi,\mc{V}}_{k'}(D)\right] - \sum\limits_{k'=k}^K \bE_k\left[\bE_{k'}\left[\mc{R}^{\Pi,\mc{V}}_{k'+1}(D)\right]\right] \\
    &= \mc{R}^{\Pi,\mc{V}}_{k}(D) + \sum\limits_{k'=k+1}^K \bE_k\left[\mc{R}^{\Pi,\mc{V}}_{k'}(D)\right] - \sum\limits_{k'=k+1}^K \bE_k\left[\mc{R}^{\Pi,\mc{V}}_{k'}(D)\right] \\
    &= \mc{R}^{\Pi,\mc{V}}_{k}(D).
\end{align*}
\end{proof}

\section{Proof of Theorem \ref{thm:info_sat_regret_bound}}

In this section, we prove a general, information-theoretic satisficing Bayesian regret bound. Central to our analysis is the information ratio in the $k$th episode: $$\Gamma_k \triangleq \frac{\bE_k\left[V^{\star}_{\widetilde{\mc{M}}_k,1} - V^{\pi^{(k)}}_{\widetilde{\mc{M}}_k,1}\right]^2}{\bI_k(\widetilde{\mc{M}}_k; \tau_k, M_k)}, \qquad \forall k \in [K].$$

\begin{theorem}[Information-Theoretic Satisficing Regret Bound]
If $\Gamma_k \leq \overline{\Gamma}$, for all $k \in [K]$, then $$\bE_k\left[\sum\limits_{k=1}^K \bE\left[V^{\star}_{\widetilde{\mc{M}}_k,1} - V^{\pi^{(k)}}_{\widetilde{\mc{M}}_k,1}\right]\right] \leq \sqrt{\overline{\Gamma}K\mc{R}^{\Pi,\mc{V}}_1(D)}.$$
\end{theorem}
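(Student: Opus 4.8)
The plan is to run the standard information-ratio argument: convert each episode's expected regret into the product of the information ratio and the per-episode information gain, then aggregate across episodes using Cauchy--Schwarz together with the cumulative information bound of Lemma \ref{lemma:cum_info_bound}.

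First I would unwind the definition of the information ratio. Since $\pi^{(k)} = \pi^\star_{M_k}$ need not be optimal for $\widetilde{\mc{M}}_k$, the quantity $\bE_k[V^\star_{\widetilde{\mc{M}}_k,1} - V^{\pi^{(k)}}_{\widetilde{\mc{M}}_k,1}]$ is nonnegative (optimality of $V^\star_{\widetilde{\mc{M}}_k,1}$ followed by integration over the initial state). Hence taking square roots in $\Gamma_k = \bE_k[V^\star_{\widetilde{\mc{M}}_k,1} - V^{\pi^{(k)}}_{\widetilde{\mc{M}}_k,1}]^2 / \bI_k(\widetilde{\mc{M}}_k; \tau_k, M_k)$ and using $\Gamma_k \le \overline{\Gamma}$ gives, for each $k$, the $\sigma(H_k)$-measurable inequality
$$\bE_k[V^\star_{\widetilde{\mc{M}}_k,1} - V^{\pi^{(k)}}_{\widetilde{\mc{M}}_k,1}] = \sqrt{\Gamma_k \, \bI_k(\widetilde{\mc{M}}_k; \tau_k, M_k)} \le \sqrt{\overline{\Gamma} \, \bI_k(\widetilde{\mc{M}}_k; \tau_k, M_k)}.$$
Next I would aggregate. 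Summing over $k$, taking the outer expectation, applying Cauchy--Schwarz pathwise to the finite sum (via $\sum_k \sqrt{a_k} \le \sqrt{K \sum_k a_k}$), and then Jensen's inequality (concavity of $\sqrt{\cdot}$) to move the expectation inside the root yields
$$\bE\left[\sum_{k=1}^K \bE_k[V^\star_{\widetilde{\mc{M}}_k,1} - V^{\pi^{(k)}}_{\widetilde{\mc{M}}_k,1}]\right] \le \sqrt{\overline{\Gamma} K \, \bE\left[\sum_{k=1}^K \bI_k(\widetilde{\mc{M}}_k; \tau_k, M_k)\right]}.$$

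Finally I need to control $\bE[\sum_k \bI_k(\widetilde{\mc{M}}_k; \tau_k, M_k)]$ by $\mc{R}^{\Pi,\mc{V}}_1(D)$. The bridge is that the denominator of the information ratio collapses to the conditional form in Lemma \ref{lemma:cum_info_bound}: by the posterior-sampling coupling, given $H_k$ the true MDP $\mc{M}^\star$ and the sampled MDP $M^\star$ are i.i.d. draws from the posterior, so passing each through the (independently randomized) rate-distortion channel renders $\widetilde{\mc{M}}_k$ and $M_k$ conditionally independent; therefore $\bI_k(\widetilde{\mc{M}}_k; M_k) = 0$, and the chain rule gives $\bI_k(\widetilde{\mc{M}}_k; \tau_k, M_k) = \bI_k(\widetilde{\mc{M}}_k; \tau_k \mid M_k)$. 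Invoking Lemma \ref{lemma:cum_info_bound} at $k=1$ (where $H_1$ is the trivial initial history, so $\mc{R}^{\Pi,\mc{V}}_1(D)$ is deterministic and $\bE_1 = \bE$) bounds $\bE[\sum_{k=1}^K \bI_k(\widetilde{\mc{M}}_k; \tau_k \mid M_k)] \le \mc{R}^{\Pi,\mc{V}}_1(D)$, and substituting closes the argument. The main obstacle is precisely this denominator identity: recognizing and justifying $\bI_k(\widetilde{\mc{M}}_k; M_k) = 0$ through the conditional independence of the target and the sampled MDP is what makes the cumulative information bound applicable, after which the remaining Cauchy--Schwarz and Jensen manipulations are routine.
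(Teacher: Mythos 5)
Your proposal is correct and follows essentially the same route as the paper's proof: unwind the information ratio to get $\sqrt{\Gamma_k\,\bI_k(\widetilde{\mc{M}}_k;\tau_k,M_k)}$ per episode, aggregate with Cauchy--Schwarz and Jensen, collapse $\bI_k(\widetilde{\mc{M}}_k;\tau_k,M_k)=\bI_k(\widetilde{\mc{M}}_k;\tau_k\mid M_k)$ via $\bI_k(\widetilde{\mc{M}}_k;M_k)=0$, and invoke Lemma \ref{lemma:cum_info_bound} at $k=1$ with the $\sigma(H_1)$-measurability of $\mc{R}^{\Pi,\mc{V}}_1(D)$. Your justification of the conditional independence of $M_k$ and $\widetilde{\mc{M}}_k$ given $H_k$ is, if anything, spelled out in more detail than the paper's.
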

\begin{proof}
The definition of the information ratio $\Gamma_k$ for each term in the sum followed by the fact that $\Gamma_k \leq \overline{\Gamma}$, $\forall k \in [K]$ yields
$$\bE\left[\sum\limits_{k=1}^K \bE_k\left[V^{\star}_{\widetilde{\mc{M}}_k,1} - V^{\pi^{(k)}}_{\widetilde{\mc{M}}_k,1}\right]\right] = \bE\left[\sum\limits_{k=1}^K \sqrt{\Gamma_k \bI_k(\widetilde{\mc{M}}_k; \tau_k, M_k)} \right] \leq \sqrt{\overline{\Gamma}}\bE\left[\sum\limits_{k=1}^K \sqrt{\bI_k(\widetilde{\mc{M}}_k; \tau_k, M_k)}\right].$$ Applying the tower property of expectation and Jensen's inequality in sequence yields $$\sqrt{\overline{\Gamma}}\bE\left[\sum\limits_{k=1}^K \sqrt{\bI_k(\widetilde{\mc{M}}_k; \tau_k, M_k)} \right] \leq \sqrt{\overline{\Gamma}}\bE\left[\sum\limits_{k=1}^K \sqrt{\bE_k\left[\bI_k(\widetilde{\mc{M}}_k; \tau_k, M_k)\right]} \right].$$ By the Cauchy-Schwarz inequality, we have that $$\sqrt{\overline{\Gamma}}\bE\left[\sum\limits_{k=1}^K \sqrt{\bE_k\left[\bI_k(\widetilde{\mc{M}}_k; \tau_k, M_k)\right]} \right] \leq \sqrt{\overline{\Gamma}}\bE\left[ \sqrt{K \sum\limits_{k=1}^K \bE_k\left[\bI_k(\widetilde{\mc{M}}_k; \tau_k, M_k)\right]} \right].$$ Recall that the sampled $M_k$ by itself offers no information about $\widetilde{\mc{M}}_k$. Consequently, by the chain rule of mutual information, we have $$\bI_k(\widetilde{\mc{M}}_k; \tau_k, M_k) = \bI_k(\widetilde{\mc{M}}_k; M_k) + \bI_k(\widetilde{\mc{M}}_k; \tau_k \mid M_k) = \bI_k(\widetilde{\mc{M}}_k; \tau_k \mid M_k).$$ Therefore, $$\sqrt{\overline{\Gamma}}\bE\left[ \sqrt{K \sum\limits_{k=1}^K \bE_k\left[\bI_k(\widetilde{\mc{M}}_k; \tau_k, M_k)\right]} \right] = \sqrt{\overline{\Gamma}}\bE\left[ \sqrt{K \sum\limits_{k=1}^K \bE_k\left[\bI_k(\widetilde{\mc{M}}_k; \tau_k \mid M_k)\right]} \right].$$ Directly applying Lemma \ref{lemma:cum_info_bound} followed by Jensen's inequality yields $$\sqrt{\overline{\Gamma}}\bE\left[ \sqrt{K \sum\limits_{k=1}^K \bE_k\left[\bI_k(\widetilde{\mc{M}}_k; \tau_k \mid M_k)\right]} \right] \leq \sqrt{\overline{\Gamma}}\bE\left[ \sqrt{K \mc{R}^{\Pi,\mc{V}}_1(D)}\right] \leq \sqrt{\overline{\Gamma}K \bE\left[\mc{R}^{\Pi,\mc{V}}_1(D)\right]}.$$
Since the expectation is with respect to the prior $\bP(\mc{M}^\star \in \cdot \mid H_1)$ and $\mc{R}^{\Pi,\mc{V}}_1(D)$ is $\sigma(H_1)$-measurable, we have $$\sqrt{\overline{\Gamma}K \bE\left[\mc{R}^{\Pi,\mc{V}}_1(D)\right]} = \sqrt{\overline{\Gamma}K\mc{R}^{\Pi,\mc{V}}_1(D)},$$ as desired.
\end{proof}

\section{Proof of Lemma \ref{lemma:dominance}}

In this section, we clarify how the shrinkage or growth of the policy class $\Pi$ and value function class $\mc{V}$ affect the rate-distortion function at the $k$th episode, $\mc{R}^{\Pi,\mc{V}}_k(D)$.

\begin{lemma}[Dominance with Approximate Value Equivalence]
For any two $\Pi,\Pi'$ and any $\mc{V}, \mc{V}'$ such that $\Pi' \subseteq \Pi \subseteq \{\mc{S} \ra \Delta(\mc{A})\}$ and $\mc{V}' \subseteq \mc{V} \subseteq \{\mc{S} \ra \bR\}$, we have $$\mc{R}^{\Pi,\mc{V}}_k(D) \geq \mc{R}^{\Pi',\mc{V}'}_k(D), \qquad \forall k \in [K], D > 0.$$
\end{lemma}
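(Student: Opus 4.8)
The plan is to exploit a pointwise dominance relationship between the two distortion functions and then track how it propagates through the feasible set and the infimum defining the rate-distortion function. The entire argument is one of monotonicity, so the essential care is in the directions of the inequalities: the supremum defining the distortion points one way, while the infimum defining the rate-distortion function points the other.

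First I would establish that $d_{\Pi',\mc{V}'}(\mc{M},\widehat{\mc{M}}) \leq d_{\Pi,\mc{V}}(\mc{M},\widehat{\mc{M}})$ for every pair of MDPs $(\mc{M},\widehat{\mc{M}})$. This follows immediately from the definition of $d_{\Pi,\mc{V}}$ as the supremum of $||\mc{B}^\pi_{\mc{M}}V - \mc{B}^\pi_{\widehat{\mc{M}}}V||_\infty^2$ over $\pi \in \Pi$ and $V \in \mc{V}$: since $\Pi' \subseteq \Pi$ and $\mc{V}' \subseteq \mc{V}$, the supremum defining $d_{\Pi',\mc{V}'}$ ranges over a subset of the index set for $d_{\Pi,\mc{V}}$, and a supremum over a smaller index set can only be smaller.

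Next I would push this pointwise inequality through the conditional expectation. Monotonicity of expectation gives $\bE_k[d_{\Pi',\mc{V}'}(\mc{M}^\star,\widetilde{\mc{M}})] \leq \bE_k[d_{\Pi,\mc{V}}(\mc{M}^\star,\widetilde{\mc{M}})]$ for every candidate channel $\widetilde{\mc{M}}$. Writing $\Lambda_k^{\Pi,\mc{V}}(D)$ and $\Lambda_k^{\Pi',\mc{V}'}(D)$ for the two feasible sets, any $\widetilde{\mc{M}}$ satisfying the constraint $\bE_k[d_{\Pi,\mc{V}}(\mc{M}^\star,\widetilde{\mc{M}})] \leq D$ automatically satisfies the looser constraint induced by $d_{\Pi',\mc{V}'}$, so $\Lambda_k^{\Pi,\mc{V}}(D) \subseteq \Lambda_k^{\Pi',\mc{V}'}(D)$.

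Finally I would invoke the fact that an infimum over a larger set is no larger. Since $\mc{R}^{\Pi',\mc{V}'}_k(D)$ and $\mc{R}^{\Pi,\mc{V}}_k(D)$ share the same objective $\bI_k(\mc{M}^\star;\widetilde{\mc{M}})$ but the former minimizes over the superset $\Lambda_k^{\Pi',\mc{V}'}(D)$, I obtain $\mc{R}^{\Pi',\mc{V}'}_k(D) \leq \mc{R}^{\Pi,\mc{V}}_k(D)$, which is exactly the claim. There is no genuine obstacle here beyond bookkeeping; the only subtle point is the successive reversal of inequality direction as one passes from the nested distortion functions (a smaller supremum for the smaller classes), to the nested feasible sets (the tighter-distortion set sits inside the looser one), and finally to the infima (minimizing over the larger set yields the smaller rate).
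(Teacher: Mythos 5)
Your proof is correct, and it takes a more elementary route than the paper's. Both arguments hinge on the same pointwise inequality $d_{\Pi',\mc{V}'}(\mc{M},\widehat{\mc{M}}) \leq d_{\Pi,\mc{V}}(\mc{M},\widehat{\mc{M}})$, obtained because the supremum defining the smaller-class distortion ranges over a subset of the index set. From there you conclude directly: the feasible set $\Lambda_k^{\Pi,\mc{V}}(D)$ is contained in $\Lambda_k^{\Pi',\mc{V}'}(D)$, and an infimum over a superset is no larger. The paper instead frames the result as a dominance relation between distortion measures in the sense of \citet{stjernvall1983dominance}: it fixes the channel $\widetilde{\mc{M}}_k$ achieving the rate-distortion limit under $d_{\Pi,\mc{V}}$, verifies that its expected distortion under $d_{\Pi',\mc{V}'}$ is also at most $D$, exhibits the trivial Markov chain $\mc{M}^\star - \widetilde{\mc{M}}_k - \widetilde{\mc{M}}'_k$ with the identity map, and then invokes Stjernvall's Theorem 2 (the implication C2 $\implies$ D4) to conclude dominance and hence the ordering of the rate-distortion functions. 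What the paper's route buys is a connection to an established taxonomy of dominance criteria (useful if one later wants finer comparisons between distortion measures); what your route buys is brevity and slightly greater generality, since your set-inclusion argument does not require the infimum defining $\mc{R}^{\Pi,\mc{V}}_k(D)$ to actually be attained by some channel, whereas the paper's argument as written starts from an achieving $\widetilde{\mc{M}}_k$.
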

\begin{proof}
Recall that the distortion function $d:\mathfrak{M} \times \mathfrak{M} \ra \bR_{\geq 0}$ with respect to policy class $\Pi$ and value function class $\mc{V}$ is given by $$d_{\Pi,\mc{V}}(\mc{M},\widehat{\mc{M}}) = \sup\limits_{\substack{\pi \in \Pi \\ V \in \mc{V}}} ||\mc{B}^\pi_{\mc{M}}V - \mc{B}^\pi_{\widehat{\mc{M}}}V||_\infty^2 = \sup\limits_{\substack{\pi \in \Pi \\ V \in \mc{V}}} \left(\max\limits_{s \in \mc{S}} |\mc{B}^\pi_{\mc{M}}V(s) - \mc{B}^\pi_{\widehat{\mc{M}}}V(s)| \right)^2,$$ with an analogous definition holding for the distortion function $d_{\Pi',\mc{V}'}$ under $\Pi'$ and $\mc{V}'$. In the parlance of \citet{stjernvall1983dominance}, we have that $d_{\Pi,\mc{V}}$ dominates $d_{\Pi',\mc{V}'}$ if for all source distributions $\bP(\mc{M}^\star \in \cdot \mid H_k)$ and all distortion thresholds $D > 0$, $$\mc{R}^{\Pi,\mc{V}}_k(D) \geq \mc{R}^{\Pi',\mc{V}'}_k(D).$$ In words, a distortion function $d_1$ that dominates another distortion function $d_2$ requires more bits of information in order to achieve the rate-distortion limit for all information sources and at all distortion thresholds. From this definition, it is clear that statement of the theorem holds if we can establish a dominance relationship between $d_{\Pi,\mc{V}}$ and $d_{\Pi',\mc{V}'}$. 

Recognizing the significant amount of calculation needed to exhaustively verify a dominance relationship by hand, \citet{stjernvall1983dominance} prescribes six sufficient conditions for establishing dominance (with varying degrees of strength) between distortion functions; we will leverage the second of these characterizations (C2). 

Fix an arbitrary source distribution $\bP(\mc{M}^\star \in \cdot \mid H_k)$ and distortion threshold $D > 0$. We denote by $\widetilde{\mc{M}}_k$ the MDP that achieves the rate-distortion limit $\mc{R}^{\Pi,\mc{V}}_k(D)$  under our chosen source, distortion threshold, and distortion function $d_{\Pi,\mc{V}}$. By definition of the supremum, we have that for any two MDPs $\mc{M}, \widehat{\mc{M}}$ $$d_{\Pi',\mc{V}'}(\mc{M}, \widehat{\mc{M}}) = \sup\limits_{\substack{\pi \in \Pi' \\ V \in \mc{V}'}} ||\mc{B}^\pi_{\mc{M}}V - \mc{B}^\pi_{\widehat{\mc{M}}}V||_\infty^2 \leq \sup\limits_{\substack{\pi \in \Pi \\ V \in \mc{V}}} ||\mc{B}^\pi_{\mc{M}}V - \mc{B}^\pi_{\widehat{\mc{M}}}V||_\infty^2 = d_{\Pi,\mc{V}}(\mc{M}, \widehat{\mc{M}}).$$ Consequently, since $\widetilde{\mc{M}}_k$ achieves the rate-distortion limit, we have $$\bE_k\left[d_{\Pi',\mc{V}'}(\mc{M}^\star, \widetilde{\mc{M}}_k)\right] \leq \bE_k\left[d_{\Pi,\mc{V}}(\mc{M}^\star, \widetilde{\mc{M}}_k)\right] \leq D.$$ Observe that, since our information source and distortion threshold were arbitrary, we have that for all sources $\bP(\mc{M}^\star \in \cdot \mid H_k)$ and all thresholds $D > 0$ with $\widetilde{\mc{M}}_k$ achieving the rate-distortion limit under distortion $d_{\Pi,\mc{V}}$, there exists a Markov chain $\mc{M}^\star - \widetilde{\mc{M}}_k - \widetilde{\mc{M}}'_k$ such that $\widetilde{\mc{M}}_k = \widetilde{\mc{M}}'_k$ (the mapping between them is the identity function) and $\bE\left[d_{\Pi',\mc{V}'}(\mc{M}^\star, \widetilde{\mc{M}}'_k)\right] \leq D.$ Thus, by Theorem 2 of \citet{stjernvall1983dominance} (specifically, C2 $\implies$ D4), we have that $d_{\Pi,\mc{V}}$ dominates $d_{\Pi',\mc{V}'}$ for any $\Pi' \subseteq \Pi \subseteq \{\mc{S} \ra \Delta(\mc{A})\}$ and $\mc{V}' \subseteq \mc{V} \subseteq \{\mc{S} \ra \bR\}$. As previously discussed, the claim of the theorem follows as an immediate consequence, by definition of dominance.
\end{proof}

\section{Proof of Lemma \ref{lemma:fano_error_lowerbound}}

Fano's inequality~\citep{fano1952TransInfoLectNotes} is a key result in information theory that relates conditional entropy to the probability of error in a discrete, multi-way hypothesis testing problem. The traditional form of the result, however, determines an error as the inability to exactly recover the random variable being estimated. Naturally, given the lossy compression context of this work, a more useful analysis will use a lack of adherence to the distortion upper bound as the more appropriate notion of error. For this purpose, we require a more general result of the same flavor as those developed by \citet{duchi2013distance}; in particular, we leverage an extension of their generalized Fano's inequality which is given as Question 7.1 in \citep{duchi21ItLectNotes}, whose proof we provide and adapt to our setting for completeness. We first require the following lemma:
\begin{lemma}
Let $P$ and $Q$ be two arbitrary probability measures on the same measurable space such that $P \ll Q$. Then, $$\kl{P}{Q} \geq \log\left(\frac{1}{Q(P > 0)}\right) = \log\left(\frac{1}{Q(\supp{P})}\right).$$ 
\label{lemma:kl_log_supp}
\end{lemma}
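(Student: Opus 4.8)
The plan is to reduce the statement to a single application of Jensen's inequality after rewriting the divergence as an integral against $P$. First I would let $f \triangleq \frac{dP}{dQ}$ denote the Radon-Nikodym derivative, which exists since $P \ll Q$, and set $A \triangleq \{f > 0\} = \supp{P}$. A preliminary observation is that $P(A) = 1$: indeed $P(\{f = 0\}) = \int_{\{f = 0\}} f \, dQ = 0$, so all the mass of $P$ sits on $A$. Moreover $Q(A) = Q(P > 0)$ by definition of the support, which is exactly the quantity appearing on the right-hand side. Restricting every integral below to $A$ is then legitimate, and on $A$ the density $f$ is strictly positive, so $\frac{1}{f}$ is well-defined and finite $Q$-almost everywhere.

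Next I would rewrite the divergence as $\kl{P}{Q} = \int_A \log(f) \, dP = -\int_A \log\left(\frac{1}{f}\right) dP$. Since $-\log$ is convex and the restriction of $P$ to $A$ is a probability measure, Jensen's inequality applied to the integrand $\frac{1}{f}$ gives $-\int_A \log\left(\frac{1}{f}\right) dP \geq -\log\left(\int_A \frac{1}{f} \, dP\right)$. The final ingredient is the change-of-measure identity $\int_A \frac{1}{f} \, dP = \int_A \frac{1}{f} \cdot f \, dQ = \int_A dQ = Q(A)$, where the middle equality uses $dP = f \, dQ$ and the cancellation $\frac{1}{f} \cdot f = 1$ holds precisely because we have excised the zero set of $f$. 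Substituting yields $\kl{P}{Q} \geq -\log(Q(A)) = \log\left(\frac{1}{Q(\supp{P})}\right)$, as claimed.

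I do not anticipate a serious obstacle, as the argument amounts to one short Jensen estimate; the only care required is measure-theoretic bookkeeping. The two points to verify carefully are that $P$ places no mass on $\{f = 0\}$ (which is what makes the restriction of $P$ to $A$ a genuine probability measure, thereby licensing Jensen) and that the cancellation in the change-of-measure step is valid $P$-almost everywhere on $A$. For completeness I would note the degenerate cases: if $\kl{P}{Q} = +\infty$ the inequality is trivial, so finiteness may be assumed; and when $Q(\supp{P}) = 1$ the bound reduces to $\kl{P}{Q} \geq 0$, consistent with the non-negativity of the KL divergence.
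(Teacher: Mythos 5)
Your proof is correct and is essentially the paper's argument: the paper restricts the integral to the set $\{dP/dQ > 0\}$ and invokes a generalized log-sum inequality, which is exactly the Jensen step (convexity of $-\log$ combined with the change of measure $\int_A \frac{1}{f}\,dP = Q(A)$) that you write out explicitly. Your version simply unpacks that cited inequality into its standard proof, with the measure-theoretic bookkeeping ($P(A)=1$, well-definedness of $1/f$ on $A$) made explicit.
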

\begin{proof}
The proof is immediate via a generalization of the traditional log-sum inequality~\citep{cover2012elements}. Specifically, since $P \ll Q$, we have $$\kl{P}{Q} = \int \log\left(\frac{dP}{dQ}\right) dP = \int\limits_{P > 0} \log\left(\frac{dP}{dQ}\right) dP \geq \left(\int dP\right)\log\left(\frac{\int dP}{\int\limits_{P > 0} dQ}\right) =  \log\left(\frac{1}{Q(P > 0)}\right).$$
\end{proof}

\begin{theorem}
Take any $\Pi \subseteq \{\mc{S} \ra \Delta(\mc{A})\}$ and $\mc{V} \subseteq \{\mc{S} \ra \bR\}$. For any $D \geq 0$ and any $k \in [K]$, define $\delta = \sup\limits_{\widehat{M} \in \mathfrak{M}} \bP(d_{\Pi,\mc{V}}(\mc{M}^\star,\widehat{M}) \leq D \mid H_k).$ Then, 
$$\sup\limits_{\widetilde{\mc{M}} \in \Lambda_k(D)} \bP(d_{\Pi,\mc{V}}(\mc{M}^\star, \widetilde{\mc{M}}) > D  \mid H_k) \geq 1 - \frac{\mc{R}^{\Pi,\mc{V}}_k(D) + \log(2)}{\log\left(\frac{1}{\delta}\right)}.$$
\end{theorem}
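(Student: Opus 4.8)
The plan is to prove this as a distortion-based (generalized) Fano inequality, instantiated at the channel attaining the rate-distortion limit. Throughout I work conditionally on the realized history $H_k$. Write $P$ for the joint law of $(\mc{M}^\star, \widetilde{\mc{M}})$ given $H_k$ and $Q = \bP(\mc{M}^\star \in \cdot \mid H_k) \times \bP(\widetilde{\mc{M}} \in \cdot \mid H_k)$ for the product of the marginals, so that $\bI_k(\mc{M}^\star; \widetilde{\mc{M}}) = \kl{P}{Q}$ (if this is infinite the claimed bound is trivial, so assume $P \ll Q$). Fix any feasible $\widetilde{\mc{M}} \in \Lambda_k(D)$ and define the ``success'' event $A = \{d_{\Pi,\mc{V}}(\mc{M}^\star, \widetilde{\mc{M}}) \leq D\}$, with $P_s \triangleq P(A) = \bP(d_{\Pi,\mc{V}}(\mc{M}^\star, \widetilde{\mc{M}}) \leq D \mid H_k)$; the object to lower bound is the error probability $1 - P_s$.

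First I would control the success event under the product measure: since $\mc{M}^\star$ and $\widetilde{\mc{M}}$ are independent under $Q$, Fubini gives $Q(A) = \bE_{\widehat M}\left[\bP(d_{\Pi,\mc{V}}(\mc{M}^\star, \widehat M) \leq D \mid H_k)\right] \leq \delta$, where $\widehat M \sim \bP(\widetilde{\mc{M}} \in \cdot \mid H_k)$ and the inequality is immediate from the definition of $\delta$. The core step is to split $\kl{P}{Q}$ across the partition $\{A, A^c\}$. Writing $P_A = P(\cdot \mid A)$ and $P_{A^c} = P(\cdot \mid A^c)$ and computing $\int \log\tfrac{dP}{dQ}\,dP$ over each block (using that $\tfrac{dP}{dQ} = P_s \tfrac{dP_A}{dQ}$ on $A$), I obtain the identity $\kl{P}{Q} = P_s\,\kl{P_A}{Q} + (1-P_s)\,\kl{P_{A^c}}{Q} + P_s\log P_s + (1-P_s)\log(1-P_s)$, whose last two terms form the negative binary entropy of $P_s$ and are therefore at least $-\log 2$. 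Because $P_A$ is supported on $A$, Lemma \ref{lemma:kl_log_supp} applies with $Q(\supp{P_A}) \leq Q(A) \leq \delta$, yielding $\kl{P_A}{Q} \geq \log\tfrac{1}{Q(\supp{P_A})} \geq \log\tfrac{1}{\delta}$.

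Dropping the nonnegative term $(1-P_s)\kl{P_{A^c}}{Q}$ then gives $\bI_k(\mc{M}^\star;\widetilde{\mc{M}}) \geq P_s\log\tfrac{1}{\delta} - \log 2$, i.e. $P_s \leq \tfrac{\bI_k(\mc{M}^\star;\widetilde{\mc{M}}) + \log 2}{\log(1/\delta)}$, so the error probability of this fixed channel satisfies $1 - P_s \geq 1 - \tfrac{\bI_k(\mc{M}^\star;\widetilde{\mc{M}}) + \log 2}{\log(1/\delta)}$. Finally I would take the supremum over $\widetilde{\mc{M}} \in \Lambda_k(D)$: this per-channel bound holds for every feasible $\widetilde{\mc{M}}$, and its right-hand side is largest precisely when $\bI_k(\mc{M}^\star;\widetilde{\mc{M}})$ is smallest. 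Since $\inf_{\widetilde{\mc{M}} \in \Lambda_k(D)} \bI_k(\mc{M}^\star;\widetilde{\mc{M}}) = \mc{R}^{\Pi,\mc{V}}_k(D)$, taking suprema on both sides delivers the claimed inequality; this monotonicity argument also sidesteps any question of whether the rate-distortion minimizer is attained.

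The main obstacle is the splitting identity for $\kl{P}{Q}$ and the measure-theoretic bookkeeping needed to invoke Lemma \ref{lemma:kl_log_supp} on the conditional law $P_A$ in this abstract, not-necessarily-discrete setting (in particular verifying $P_A \ll Q$ and $\supp{P_A} \subseteq A$); the remaining pieces are the elementary binary-entropy bound and the monotonicity step. I would also flag the degenerate regime $\delta \to 1$, where $\log(1/\delta) \to 0$ and the bound becomes vacuous, so the statement is informative exactly when $\delta$ is bounded away from $1$.
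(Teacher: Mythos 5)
Your proof is correct, and it reaches the paper's conclusion by a slightly different route through the same generalized-Fano territory. The paper augments the channel output with the error indicator $E = \indic((\mc{M}^\star,\widetilde{\mc{M}})\notin\mc{N})$, lower-bounds $\bI_k(\mc{M}^\star;(\widetilde{\mc{M}},E))$ by splitting the conditional KL over $\{E=0,E=1\}$ and applying Lemma \ref{lemma:kl_log_supp} slice-by-slice to the posteriors $\bP_k(\mc{M}^\star\in\cdot\mid\widetilde{\mc{M}}=\widehat{M},E)$ (whose supports lie in $\mc{N}_{\widehat{M}}$ or its complement), and then pays the $\log 2$ via the chain rule of mutual information and $\bH(E)\leq\log 2$. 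You instead partition the joint-versus-product divergence $\kl{P}{Q}=\bI_k(\mc{M}^\star;\widetilde{\mc{M}})$ directly over $\{A,A^c\}$, so the $-h_2(P_s)\geq-\log 2$ term falls out of the splitting identity itself, and you replace the per-slice support bound by a single Fubini estimate $Q(A)\leq\delta$ before invoking the same Lemma \ref{lemma:kl_log_supp} on $P_A$; the final passage from a per-channel bound to the rate-distortion function via monotonicity of the right-hand side in $\bI_k(\mc{M}^\star;\widetilde{\mc{M}})$ is identical to the paper's last step. The two arguments are mathematically cognate (integrating the paper's slice bound over $\bP(\widetilde{\mc{M}}\in\cdot\mid H_k)$ recovers your Fubini bound), but yours is somewhat more self-contained, needing neither the chain rule of mutual information nor the auxiliary variable $E$, at the cost of having to verify the KL splitting identity and the absolute-continuity bookkeeping for $P_A\ll Q$ that you correctly flag. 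Both proofs share the same vacuity caveat when $\delta\uparrow 1$, and your handling of the degenerate cases ($P_s\in\{0,1\}$, $\bI_k=\infty$) is sound.
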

\begin{proof}
For any episode $k \in [K]$, recall that the agent's beliefs over the true MDP $\mc{M}^\star$ are distributed according to $\bP(\mc{M}^\star \in \cdot \mid H_k)$. Let $\widetilde{\mc{M}}$ be an arbitrary random variable denoting a compressed MDP taking values in the set $\mathfrak{M}$ and, for a fixed distortion threshold $D$, we let $\mc{N} \subset \mathfrak{M} \times \mathfrak{M}$ denote the measurable subset of $\mathfrak{M} \times \mathfrak{M}$ that consists of all pairs of MDP which are approximately value equivalent; that is, $(M, \widehat{M}) \in \mc{N}$ $\Longleftrightarrow$ $d_{\Pi,\mc{V}}(M, \widehat{M}) \leq D$. For any MDP $\widehat{M} \in \mathfrak{M}$, we define a slice $$\mc{N}_{\widehat{M}} \triangleq \{M \in \mathfrak{M} \mid (M,\widehat{M}) \in \mc{N}\},$$ as the collection of MDPs that are approximately value equivalent to a given $\widehat{M}$. In the context of Fano's inequality and our lossy compression problem, $\mc{N}_{\widehat{M}}$ is the set of original or uncompressed MDPs for which a channel output of $\widehat{M}$ should not be considered an error. Furthermore, define $$p^{\text{max}} \triangleq \sup\limits_{\widehat{M} \in \mathfrak{M}} \bP(\mc{M}^\star \in \mc{N}_{\widehat{M}} \mid H_k) \qquad p^{\text{min}} \triangleq \inf\limits_{\widehat{M} \in \mathfrak{M}} \bP(\mc{M}^\star \in \mc{N}_{\widehat{M}} \mid H_k).$$ Recall that for $p \in [0,1]$, we have the binary entropy function $h_2(p) = -p\log(p) - (1-p)\log(1-p)$. 

Define the indicator random variable $E = \indic((\mc{M}^\star, \widetilde{\mc{M}}) \notin \mc{N})$. Recalling that $$\bI(X;Y) = \bE\left[\kl{\bP(Y \in \cdot \mid X)}{\bP(Y \in \cdot)}\right],$$ we have 
\begin{multline*}
    \bI_k(\mc{M}^\star; (\widetilde{\mc{M}}, E)) \bE\left[\kl{\bP_k(\mc{M}^\star \in \cdot \mid \widetilde{\mc{M}}, E)}{\bP_k(\mc{M}^\star \in \cdot)}\right] \\
    = \bP_k(E = 1) \cdot \bE\left[\kl{\bP_k(\mc{M}^\star \in \cdot \mid \widetilde{\mc{M}}, E = 1)}{\bP_k(\mc{M}^\star \in \cdot)}\right] \\ + \bP_k(E = 0) \cdot \bE\left[\kl{\bP_k(\mc{M}^\star \in \cdot \mid \widetilde{\mc{M}}, E = 0)}{\bP_k(\mc{M}^\star \in \cdot)}\right].
\end{multline*}
At this point, we observe that for any $\widehat{M} \in \mathfrak{M}$, $$\supp{\bP_k(\mc{M}^\star \in \cdot \mid \widetilde{\mc{M}} = \widehat{M}, E = 0)} \subset \mc{N}_{\widehat{M}} \qquad \supp{\bP_k(\mc{M}^\star \in \cdot \mid \widetilde{\mc{M}} = \widehat{M}, E = 1)} \subset \mc{N}_{\widehat{M}}^c,$$ by definition of the slice $\mc{N}_{\widehat{M}}$. Thus, 
\begin{align*}
    \bP(\mc{M}^\star \in \supp{\bP_k(\mc{M}^\star \in \cdot \mid \widetilde{\mc{M}} = \widehat{M}, E = 0)} \mid H_k) &\leq \bP(\mc{M}^\star \in \mc{N}_{\widehat{M}}\mid H_k) \\
    \bP(\mc{M}^\star \in \supp{\bP_k(\mc{M}^\star \in \cdot \mid \widetilde{\mc{M}} = \widehat{M}, E = 1)} \mid H_k) &\leq \bP(\mc{M}^\star \in \mc{N}_{\widehat{M}}^c\mid H_k) = 1 - \bP(\mc{M}^\star \in \mc{N}_{\widehat{M}}\mid H_k)
\end{align*} and, consequently, we have by Lemma \ref{lemma:kl_log_supp} that $$\kl{\bP_k(\mc{M}^\star \in \cdot \mid \widetilde{\mc{M}} = \widehat{M}, E = 0)}{\bP_k(\mc{M}^\star \in \cdot)} \geq \log\left(\frac{1}{\bP(\mc{M}^\star \in \mc{N}_{\widehat{M}}\mid H_k)}\right) \geq \log\left(\frac{1}{p^{\text{max}}}\right),$$ $$\kl{\bP_k(\mc{M}^\star \in \cdot \mid \widetilde{\mc{M}} = \widehat{M}, E = 1)}{\bP_k(\mc{M}^\star \in \cdot)} \geq \log\left(\frac{1}{1-\bP(\mc{M}^\star \in \mc{N}_{\widehat{M}}\mid H_k)}\right) \geq \log\left(\frac{1}{1-p^{\text{min}}}\right).$$ Applying these lower bounds to our original mutual information term, we see that
\begin{align*}
    \bI_k(\mc{M}^\star; (\widetilde{\mc{M}}, E)) &\geq \bP(E = 1 \mid H_k) \log\left(\frac{1}{1-p^{\text{min}}}\right) + \bP(E = 0 \mid H_k)\log\left(\frac{1}{p^{\text{max}}}\right) \\
    &= \bP(E = 1 \mid H_k) \log\left(\frac{1}{1-p^{\text{min}}}\right) + \left(1-\bP(E = 1 \mid H_k)\right)\log\left(\frac{1}{p^{\text{max}}}\right) \\
    &= \bP(E = 1 \mid H_k)\log\left(\frac{p^{\text{max}}}{1-p^{\text{min}}}\right) + \log\left(\frac{1}{p^{\text{max}}}\right).
\end{align*}
Now applying the chain rule of mutual information, the definition of mutual information, the non-negativity of entropy and the fact that conditioning reduces entropy in sequence, we obtain 
\begin{align*}
    \bI_k(\mc{M}^\star; (\widetilde{\mc{M}}, E)) &= \bI_k(\mc{M}^\star; \widetilde{\mc{M}}) + \bI_k(\mc{M}^\star; E \mid \widetilde{\mc{M}}) \\
    &= \bI_k(\mc{M}^\star; \widetilde{\mc{M}}) + \bH_k(E \mid \widetilde{\mc{M}}) - \bH_k(E \mid \widetilde{\mc{M}}, \mc{M}^\star) \\
    &\leq \bI_k(\mc{M}^\star; \widetilde{\mc{M}}) + \bH_k(E \mid \widetilde{\mc{M}}) \\
    &\leq \bI_k(\mc{M}^\star; \widetilde{\mc{M}}) + \bH_k(E) \\
    &\leq \bI_k(\mc{M}^\star; \widetilde{\mc{M}}) + \bH(E) \\
\end{align*}
Combining the upper and lower bounds while multiplying through by $-1$ yields $$h_2(\bP(E = 1)) + \bP(E = 1 \mid H_k)\log\left(\frac{1-p^{\text{min}}}{p^{\text{max}}}\right) \geq \log\left(\frac{1}{p^{\text{max}}}\right) - \bI_k(\mc{M}^\star; \widetilde{\mc{M}}).$$ Recognizing that we have the following upper bounds
\begin{align*}
    \log(2) + \bP(E = 1 \mid H_k)\log\left(\frac{1}{p^{\text{max}}}\right) &\geq h_2(\bP(E = 1)) + \bP(E = 1 \mid H_k)\log\left(\frac{1}{p^{\text{max}}}\right) \\
    &\geq h_2(\bP(E = 1)) + \bP(E = 1 \mid H_k)\log\left(\frac{1-p^{\text{min}}}{p^{\text{max}}}\right),
\end{align*}and re-arranging terms yields $$\bP(E = 1 \mid H_k) \geq \frac{\log\left(\frac{1}{p^{\text{max}}}\right) - \bI_k(\mc{M}^\star; \widetilde{\mc{M}}) - \log(2)}{\log\left(\frac{1}{p^{\text{max}}}\right)} = 1 - \frac{\bI_k(\mc{M}^\star; \widetilde{\mc{M}}) + \log(2)}{\log\left(\frac{1}{\delta}\right)},$$ where $\delta = \sup\limits_{\widehat{M} \in \mathfrak{M}} \bP(d_{\Pi,\mc{V}}(\mc{M}^\star,\widehat{M}) \leq D \mid H_k).$ Noting that $$\bP(E = 1 \mid H_k) = \bP((\mc{M}^\star,\widetilde{\mc{M}}) \notin \mc{N} \mid H_k) = \bP(d_{\Pi,\mc{V}}(\mc{M}^\star, \widetilde{\mc{M}}) > D  \mid H_k),$$ and taking the supremum on both sides, we have 
\begin{align*}
    \sup\limits_{\widetilde{\mc{M}} \in \Lambda_k(D)} \bP(d_{\Pi,\mc{V}}(\mc{M}^\star, \widetilde{\mc{M}}) > D  \mid H_k) &\geq \sup\limits_{\widetilde{\mc{M}} \in \Lambda_k(D)}\left[1 - \frac{\bI_k(\mc{M}^\star; \widetilde{\mc{M}}) + \log(2)}{\log\left(\frac{1}{\delta}\right)}\right] \\
    &= 1 - \inf\limits_{\widetilde{\mc{M}} \in \Lambda_k(D)} \frac{\bI_k(\mc{M}^\star; \widetilde{\mc{M}}) + \log(2)}{\log\left(\frac{1}{\delta}\right)} \\
    &= 1 - \frac{\mc{R}^{\Pi,\mc{V}}_k(D) + \log(2)}{\log\left(\frac{1}{\delta}\right)},
\end{align*}
as desired.
\end{proof}

\section{Proof of Theorem \ref{thm:tabular_regret}}

In specializing to the tabular MDP setting, we wish to simplify our information-theoretic Bayesian regret bound (Corollary \ref{thm:info_regret_bound})  into one that only depends on the standard problem-specific quantities ($|\mc{S}|$, $|\mc{A}|$, $K$, $H$). To do this, we will necessarily decompose mutual information into its constituent entropy terms. Inconveniently, while mutual information is well-defined for arbitrary random variables, entropy is infinite for continuous random variables (like the reward function and transition function random variables, $\mc{R}^\star$ and $\mc{T}^\star$). Rather than resorting to differential entropy, which lacks several desirable properties of Shannon entropy, we explicitly replace these random variables by their discretized analogues, obtained via a sufficiently-fine quantization of their ranges a priori such that the differential entropy of the original random variables is well-approximated by the associated metric entropy or $\eps$-entropy~\citep{kolmogorov1959varepsilon}, courtesy of Theorem 8.3.1 of \citep{cover2012elements}. 

Recall that, for any $\eps > 0$, a $\eps$-cover of a set $\Theta$ with respect to a (semi)-metric $\rho: \Theta \times \Theta \ra \bR_{\geq 0}$ is a set $\{\theta_1,\ldots,\theta_N\}$ with $\theta_i \in \Theta$, $\forall i \in [N]$, such that for any other point $\theta \in \Theta$, $\exists$ $n \in [N]$ such that $\rho(\theta,\theta_n) \leq \eps$. The $\eps$-covering number of $\Theta$ is defined as $$\mc{N}(\eps, \Theta, \rho) \triangleq \inf \{N \in \bN \mid \exists \text{ an }\eps\text{-cover } \{\theta_1,\ldots,\theta_N\} \text{ of } \Theta\}.$$ Conversely, a $\eps$-packing of a set $\Theta$ with respect to $\rho$ is a set $\{\theta_1,\ldots,\theta_M\}$ with $\theta_i \in \Theta$, $\forall i \in [M]$, such that for any distinct $i,j \in [N]$, we have $\rho(\theta_i,\theta_j) \geq \eps$. The $\eps$-packing number of a set $\Theta$ is defined as $$\mc{M}(\eps, \Theta, \rho) \triangleq \sup\{M \in \bN \mid \exists \text{ an }\eps\text{-packing } \{\theta_1,\ldots,\theta_M\} \text{ of } \Theta \}.$$ With slight abuse of notation, for any norm $||\cdot||$ on a set $\Theta$, we write $\mc{N}(\eps, \Theta, ||\cdot||)$ to denote the $\eps$-covering number under the metric induced by $||\cdot||$, and similarly for the $\eps$-packing number $\mc{M}(\eps, \Theta, ||\cdot||)$. Theorem IV of \citep{kolmogorov1959varepsilon} establishes the following relationship between the $\eps$-covering number and $\eps$-packing number that we will use to upper bound metric entropy:
\begin{fact}
For any metric space $(\Theta, \rho)$ and any $\eps > 0$, $\mc{N}(\eps, \Theta, \rho) \leq \mc{M}(\eps, \Theta, \rho).$
\label{fact:cover_pack_bound}
\end{fact}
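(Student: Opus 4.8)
The plan is to exhibit an explicit $\eps$-cover whose cardinality is at most the $\eps$-packing number, by showing that any \emph{maximal} $\eps$-packing is automatically an $\eps$-cover. The guiding observation is that the packing and covering conditions are complementary: a point fails to be covered by a finite set exactly when it is far enough from every member of that set to be adjoined as a new packing point.

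First I would dispose of the degenerate case $\mc{M}(\eps, \Theta, \rho) = \infty$, for which the inequality holds vacuously, and so assume the packing number is finite. Because this number is a supremum of natural numbers that is bounded above, it is attained, so there exists a maximum $\eps$-packing $\{\theta_1, \ldots, \theta_M\}$ with $M = \mc{M}(\eps, \Theta, \rho)$; such a maximum packing is in particular maximal, since adjoining any further point would produce a packing of size $M + 1 > M$. The central step is then to verify the covering property of this maximal packing: fixing any $\theta \in \Theta$, I would suppose for contradiction that $\rho(\theta, \theta_i) > \eps$ for every $i \in [M]$; this forces $\theta$ to be distinct from each $\theta_i$ and to satisfy $\rho(\theta, \theta_i) \geq \eps$ for all $i$, whence $\{\theta_1, \ldots, \theta_M, \theta\}$ is an $\eps$-packing of size $M + 1$, contradicting maximality. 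Hence some $i$ satisfies $\rho(\theta, \theta_i) \leq \eps$, so $\{\theta_1, \ldots, \theta_M\}$ is an $\eps$-cover, giving $\mc{N}(\eps, \Theta, \rho) \leq M = \mc{M}(\eps, \Theta, \rho)$.

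I do not anticipate a genuine obstacle, as this is a classical comparison between covering and packing numbers; the only points meriting mild care are the maximal-versus-maximum distinction and the handling of the inequality direction ($> \eps$ versus $\geq \eps$) when extending the packing, both of which are resolved above. An equivalent and slightly more economical route, avoiding the appeal to attainment of the supremum, is to invoke the existence of a maximal $\eps$-packing directly (guaranteed in general by a Zorn's-lemma argument), show as above that it is an $\eps$-cover, and then chain the bounds $\mc{N}(\eps, \Theta, \rho) \leq (\text{size of the maximal packing}) \leq \mc{M}(\eps, \Theta, \rho)$.
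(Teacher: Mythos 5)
Your argument is correct. One thing to note: the paper does not actually prove this statement --- it is imported as a ``Fact'' by citation to Theorem IV of Kolmogorov and Tikhomirov's $\eps$-entropy paper, so there is no in-paper proof to compare against. What you supply is the standard self-contained argument: any maximal $\eps$-packing is an $\eps$-cover, because a point at distance strictly greater than $\eps$ from every packing point could be adjoined to enlarge the packing. With the paper's conventions (packing requires pairwise distances $\geq \eps$, covering requires distance $\leq \eps$ to some center), the negation of the covering condition gives strict inequality $\rho(\theta,\theta_i) > \eps$, which indeed implies the $\geq \eps$ needed for the extended set to remain a packing, so the inequality directions mesh exactly as you claim. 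Your handling of the degenerate case $\mc{M}(\eps,\Theta,\rho) = \infty$ and of the maximal-versus-maximum distinction (or the Zorn's-lemma shortcut) is also sound. In short, the proposal is a complete and correct proof of a result the paper takes as given.
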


This allows for a generalization of Lemma 7.6 of \citep{duchi21ItLectNotes} to norm balls of arbitrary radius whose proof we include for completeness.
\begin{lemma}
For any norm $||\cdot||$, let $\bB^d = \{\theta \in \bR^d \mid ||\theta|| \leq 1\}$ denote the unit $||\cdot||$-ball in $\bR^d$. For any $r \in (0,\infty)$, we let $r\bB^d = \{\theta \in \bR^d \mid ||\theta|| \leq r\}$ denote the scaling of the unit ball by $r$ or, equivalently, the $||\cdot||$-ball of radius $r$. Then, for any $\eps \in (0,r]$, $$\log\left(\mc{N}(\eps, r\bB^d, ||\cdot||)\right) \leq d \log\left(1 + \frac{2r}{\eps}\right).$$
\label{lemma:metric_entropy_bound}
\end{lemma}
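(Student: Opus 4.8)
The plan is to reduce the covering number to a packing number via the inequality supplied as Fact \ref{fact:cover_pack_bound}, and then bound the packing number by a standard volume argument. Concretely, since $\mc{N}(\eps, r\bB^d, ||\cdot||) \leq \mc{M}(\eps, r\bB^d, ||\cdot||)$, it suffices to upper bound the $\eps$-packing number. So I would fix a maximal $\eps$-packing $\{\theta_1,\ldots,\theta_M\}$ of $r\bB^d$ with $M = \mc{M}(\eps, r\bB^d, ||\cdot||)$, meaning $||\theta_i - \theta_j|| \geq \eps$ for all distinct $i,j$.

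The heart of the argument is a disjointness-and-containment observation. First, the translated balls $\theta_i + \frac{\eps}{2}\bB^d$ are pairwise disjoint: if some point $x$ lay in both $\theta_i + \frac{\eps}{2}\bB^d$ and $\theta_j + \frac{\eps}{2}\bB^d$, then by the triangle inequality $||\theta_i - \theta_j|| \leq ||\theta_i - x|| + ||x - \theta_j|| \leq \eps/2 + \eps/2 = \eps$, and strictness (using open balls, or a standard limiting argument) would contradict the packing condition. Second, each ball $\theta_i + \frac{\eps}{2}\bB^d$ is contained in the enlarged ball $(r + \frac{\eps}{2})\bB^d$, since $||\theta_i|| \leq r$ forces any $y \in \theta_i + \frac{\eps}{2}\bB^d$ to satisfy $||y|| \leq ||\theta_i|| + ||y - \theta_i|| \leq r + \eps/2$.

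Passing to Lebesgue volume and using the dilation-scaling identity $\vol{c\bB^d} = c^d\, \vol{\bB^d}$ (valid for the unit ball of any norm, which has finite positive volume), the disjoint union bound gives $M \cdot (\eps/2)^d\, \vol{\bB^d} \leq (r + \eps/2)^d\, \vol{\bB^d}$. Cancelling the common positive, finite factor $\vol{\bB^d}$ and simplifying the ratio yields $M \leq \big(\tfrac{r + \eps/2}{\eps/2}\big)^d = \big(1 + \tfrac{2r}{\eps}\big)^d$. Taking logarithms and chaining back through Fact \ref{fact:cover_pack_bound} delivers $\log\!\big(\mc{N}(\eps, r\bB^d, ||\cdot||)\big) \leq \log M \leq d\log\!\big(1 + \tfrac{2r}{\eps}\big)$, as claimed; the restriction $\eps \in (0,r]$ guarantees the argument of the logarithm is well-behaved and the packing is nontrivial.

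The only genuinely delicate point is justifying the volume-scaling identity and the finiteness of $\vol{\bB^d}$ for an \emph{arbitrary} norm rather than the Euclidean one; this follows because all norms on $\bR^d$ are equivalent, so the unit ball is bounded with nonempty interior and hence has finite positive Lebesgue measure, while dilation by $c$ scales any $d$-dimensional Lebesgue measure by exactly $c^d$. Everything else is routine triangle-inequality bookkeeping, so I expect no substantive obstacle beyond stating these measure-theoretic facts cleanly.
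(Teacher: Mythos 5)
Your proposal is correct and follows essentially the same route as the paper's own proof: reduce the covering number to the packing number via Fact \ref{fact:cover_pack_bound}, then bound the packing number by comparing the total volume of the disjoint $\tfrac{\eps}{2}$-balls around the packing points to the volume of the enlarged ball $\left(r+\tfrac{\eps}{2}\right)\bB^d$. If anything, you are more careful than the paper, which normalizes $\vol{\bB^d}=1$ and omits the triangle-inequality checks for disjointness and containment that you spell out.
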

\begin{proof}
Let $\vol{\cdot}$ be the function that denotes the volume of an input ball in $\bR^d$ such that $\vol{r\bB^d} = r^d$. Since an $\eps$-packing requires filling $r\bB^d$ with disjoint balls of diameter $\eps$, we have $$\mc{M}(\eps, r\bB^d, ||\cdot||) \vol{\frac{\eps}{2}\bB^d} = \sum\limits_{i=1}^{\mc{M}(\eps, r\bB^d, ||\cdot||)} \vol{\frac{\eps}{2}\bB^d} \leq \vol{\left(r + \frac{\eps}{2}\right)\bB^d}.$$ Dividing through by $\vol{\frac{\eps}{2}\bB^d}$ yields $$\mc{M}(\eps, r\bB^d, ||\cdot||) \leq \frac{\vol{\left(r + \frac{\eps}{2}\right)\bB^d}}{\vol{\frac{\eps}{2}\bB^d}} = \left(\frac{r + \frac{\eps}{2}}{\frac{\eps}{2}}\right)^d = \left(1 + \frac{2r}{\eps}\right)^d.$$
Applying Fact \ref{fact:cover_pack_bound} gives us $$\mc{N}(\eps, r\bB^d, ||\cdot||) \leq \mc{M}(\eps, r\bB^d, ||\cdot||) \leq  \left(1 + \frac{2r}{\eps}\right)^d,$$ and taking logarithms on both sides renders the desired inequality.
\end{proof}

\begin{theorem}
Take any $\Pi \supseteq \{\mc{S} \ra \mc{A}\}$, any $\mc{V} \supseteq \{V^\pi \mid \pi \in \Pi^H\}$, and let $D = 0$. For any prior distribution $\bP(\mc{M}^\star \in \cdot \mid H_1)$ over tabular MDPs, if $\Gamma_k \leq \overline{\Gamma}$ for all $k \in [K]$, then VSRL (Algorithm \ref{alg:vsrl}) has
$$\textsc{BayesRegret}(K, \pi^{(1)},\ldots,\pi^{(K)}) \leq \mc{O}\left(|\mc{S}|\sqrt{\overline{\Gamma}|\mc{A}|K}\right).$$
\end{theorem}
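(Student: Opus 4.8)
The plan is to invoke Corollary \ref{thm:info_regret_bound} with $D = 0$, which immediately annihilates the $2KH\sqrt{D}$ satisficing-error term and leaves
$$\textsc{BayesRegret}(K, \pi^{(1)},\ldots,\pi^{(K)}) \leq \sqrt{\overline{\Gamma}K\mc{R}^{\Pi,\mc{V}}_1(0)}.$$
Everything then reduces to controlling the single scalar $\mc{R}^{\Pi,\mc{V}}_1(0)$ by a quantity of order $|\mc{S}|^2|\mc{A}|$, since squaring the target bound $\mc{O}\!\left(|\mc{S}|\sqrt{\overline{\Gamma}|\mc{A}|K}\right)$ produces exactly $\mc{O}\!\left(|\mc{S}|^2|\mc{A}|\cdot\overline{\Gamma}K\right)$.

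First I would apply the chain of inequalities that follows from Fact \ref{fact:rdf_props}: because the true MDP $\mc{M}^\star$ is itself a feasible (though generally suboptimal) channel incurring zero distortion, $\mc{R}^{\Pi,\mc{V}}_1(0) \leq \bI_1(\mc{M}^\star; \mc{M}^\star) = \bH_1(\mc{M}^\star)$, so it suffices to bound the prior entropy of the model $(\mc{R}^\star, \mc{T}^\star)$. The immediate difficulty — and the crux of the argument — is that reward and transition functions are continuous-valued, so their Shannon entropy is infinite, while differential entropy is unsuitable since the preceding bound relies on the monotonicity and non-negativity of $\bH$. The remedy is to discretize the model ranges a priori to resolution $\eps$, so that $\mc{M}^\star$ becomes a discrete random variable whose entropy is at most the logarithm of the number of quantization cells, i.e. the $\eps$-metric entropy of the parameter space, invoking Theorem 8.3.1 of \citep{cover2012elements} to ensure the quantized model faithfully approximates the original.

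Next I would count dimensions and apply the covering-number machinery of the preamble. A tabular reward function is a point in $[0,1]^{|\mc{S}||\mc{A}|}$, contained in a bounded $||\cdot||_\infty$-ball in dimension $d_{\mc{R}} = |\mc{S}||\mc{A}|$, while a tabular transition function consists of $|\mc{S}||\mc{A}|$ distributions on $\mc{S}$, contributing $d_{\mc{T}} = |\mc{S}||\mc{A}|(|\mc{S}|-1)$ free parameters lying in a product of simplices, each again inside a bounded norm ball. Applying Lemma \ref{lemma:metric_entropy_bound} together with Fact \ref{fact:cover_pack_bound} to each factor and summing then yields
$$\bH_1(\mc{M}^\star) \leq \log \mc{N}(\eps, \mathfrak{M}, ||\cdot||) \leq \big(d_{\mc{R}} + d_{\mc{T}}\big)\log\!\left(1 + \tfrac{2r}{\eps}\right) = \mc{O}\!\left(|\mc{S}|^2|\mc{A}|\right),$$
where $d_{\mc{R}} + d_{\mc{T}} = |\mc{S}|^2|\mc{A}|$ (the transition term dominating) and the $\log(1 + 2r/\eps)$ factor is an absolute constant once $\eps$ is fixed.

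Finally I would substitute this bound on $\mc{R}^{\Pi,\mc{V}}_1(0)$ back into the inequality of the first paragraph, obtaining
$$\textsc{BayesRegret}(K, \pi^{(1)},\ldots,\pi^{(K)}) \leq \sqrt{\overline{\Gamma}K \cdot \mc{O}\!\left(|\mc{S}|^2|\mc{A}|\right)} = \mc{O}\!\left(|\mc{S}|\sqrt{\overline{\Gamma}|\mc{A}|K}\right),$$
as claimed. The main obstacle is the second and third paragraph's discretization step: making precise the quantization of $\mathfrak{M}$, justifying that exact value equivalence ($D = 0$) is preserved or only harmlessly perturbed under quantization, and arguing that the resulting $\log(1/\eps)$ dependence can be folded into the $\mc{O}(\cdot)$ without corrupting the clean $|\mc{S}|$-dependence. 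The dimension accounting and the closing substitution are routine by comparison.
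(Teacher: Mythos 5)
Your proposal is correct and follows essentially the same route as the paper's proof: bound $\mc{R}^{\Pi,\mc{V}}_1(0)$ by $\bH_1(\mc{M}^\star) = \bH_1(\mc{R}^\star) + \bH_1(\mc{T}^\star)$ via Fact \ref{fact:rdf_props}, discretize the continuous-valued model to a sufficiently fine $\eps$-cover so that Shannon entropy is bounded by the log covering number via Lemma \ref{lemma:metric_entropy_bound}, and substitute the resulting $\mc{O}(|\mc{S}|^2|\mc{A}|)$ bound into Corollary \ref{thm:info_regret_bound} with $D=0$. The only cosmetic difference is that the paper covers the unit interval coordinatewise and raises the covering number to the power $|\mc{S}||\mc{A}|$ (resp.\ $|\mc{S}|^2|\mc{A}|$) rather than covering the product space directly, and it likewise leaves the quantization-fidelity issue you flag at the level of an a priori modeling step.
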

\begin{proof}
Using Fact \ref{fact:rdf_props}, we have that $$\mc{R}^{\Pi,\mc{V}}_1(D) \leq \bH_1(\mc{M}^\star) = \bH_1(\mc{R}^\star, \mc{T}^\star) = \bH_1(\mc{R}^\star) + \bH_1(\mc{T}^\star \mid \mc{R}^\star) = \bH_1(\mc{R}^\star) + \bH_1(\mc{T}^\star),$$ where the first equality recognizes that all randomness in the true MDP $\mc{M}^\star$ is driven by the model $(\mc{R}^\star,\mc{T}^\star)$, the second equality applies the chain rule of entropy, and the final equality recognizes that the reward function and transition function random variables are independent.

For some fixed $\eps_\mc{R} > 0$, consider the $\frac{\eps_\mc{R}}{2}$-cover of the unit interval $[0,1]$ with respect to the $L_1$-norm $||\cdot||_1$ as a quantization into bins of width $\eps_\mc{R}$. Observe that the true environment reward function $\mc{R}^\star: \mc{S} \times \mc{A} \ra [0,1]$ is well-approximated by mapping state-action pairs onto this $\frac{\eps_\mc{R}}{2}$-cover, for a sufficiently small $\eps_\mc{R} > 0$. Consequently, we treat $\mc{R}^\star$ as a discrete random variable where $|\text{supp}(\mc{R}^\star)| = \mc{N}(\frac{\eps_\mc{R}}{2}, [0,1], ||\cdot||_1)^{|\mc{S}||\mc{A}|}$. Recall that, for a discrete random variable $X$ with support on $\mc{X}$, $\bH(X) \leq \log\left(|\mc{X}|\right).$ Applying this upper bound and Lemma \ref{lemma:metric_entropy_bound} in sequence, we have that $$\bH_1(\mc{R}^\star) \leq |\mc{S}||\mc{A}|\log\left(\mc{N}(\frac{\eps_\mc{R}}{2}, [0,1], ||\cdot||_1)\right) \leq |\mc{S}||\mc{A}|\log\left(1 + \frac{4}{\eps_\mc{R}}\right).$$ Applying the same sequence of steps \textit{mutatis mutandis} for the transition function $\mc{T}^\star$ under a $\frac{\eps_\mc{T}}{2}$-cover, for some fixed $\eps_\mc{T} > 0$, we also have $$\bH_1(\mc{T}^\star) \leq |\mc{S}|^2|\mc{A}|\log\left(\mc{N}(\frac{\eps_\mc{T}}{2}, [0,1], ||\cdot||_1)\right) \leq |\mc{S}|^2|\mc{A}|\log\left(1 + \frac{4}{\eps_\mc{T}}\right).$$ Applying these bounds following the earlier rate-distortion function upper bound to the result of Corollary \ref{thm:info_regret_bound} with $D = 0$, we have $$\textsc{BayesRegret}(K, \pi^{(1)},\ldots,\pi^{(K)}) \leq \sqrt{\overline{\Gamma}K\left(|\mc{S}||\mc{A}|\log\left(1 + \frac{4}{\eps_\mc{R}}\right) + |\mc{S}|^2|\mc{A}|\log\left(1 + \frac{4}{\eps_\mc{T}}\right) \right)}.$$
\end{proof}

\section{Proof of Theorem \ref{thm:regret_decomp_qdist}}

Our proof of Theorem \ref{thm:regret_decomp_qdist} utilizes the following fact, widely known as the performance-difference lemma, adapted to the finite-horizon setting whose proof we replicate here.
\begin{lemma}[Performance-Difference Lemma~\citep{kakade2002approximately}]
For any finite-horizon MDP $\langle \mc{S}, \mc{A}, \mc{R}, \mc{T}, \beta, H \rangle$ and any two non-stationary policies $\pi_1,\pi_2 \in \Pi^H$, let $\rho^{\pi_2}(\tau)$ denote the distribution over trajectories induced by policy $\pi_2$. Then, $$V^{\pi_1}_1 - V^{\pi_2}_1 = \bE_{\tau \sim \rho^{\pi_2}}\left[\sum\limits_{h=1}^H \left(V^{\pi_1}_h(s_h) - Q^{\pi_1}_h(s_h,a_h)\right)\right].$$
\label{lemma:perform_diff}
\end{lemma}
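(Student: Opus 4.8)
The plan is to prove the identity by telescoping the value function of $\pi_1$ along a trajectory that is generated by $\pi_2$. The enabling observation is that the initial-state distribution $\beta$ is shared by all policies, so $V^{\pi_1}_1 = \bE_{s_1 \sim \beta}\left[V^{\pi_1}_1(s_1)\right] = \bE_{\tau \sim \rho^{\pi_2}}\left[V^{\pi_1}_1(s_1)\right]$; this lets me introduce the trajectory law $\rho^{\pi_2}$ while the integrand still references $\pi_1$'s value function. I would also record the companion fact $V^{\pi_2}_1 = \bE_{\tau \sim \rho^{\pi_2}}\left[\sum_{h=1}^H \mc{R}(s_h,a_h)\right]$, which I will use to absorb the reward terms at the very end.

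First I would rewrite $V^{\pi_1}_1(s_1)$ as the telescoping sum $\sum_{h=1}^H \left(V^{\pi_1}_h(s_h) - V^{\pi_1}_{h+1}(s_{h+1})\right)$, which is exact along any trajectory because the intermediate terms cancel and the boundary term $V^{\pi_1}_{H+1}(s_{H+1}) = 0$ by the finite-horizon convention. Taking the expectation over $\tau \sim \rho^{\pi_2}$ and then, for each $h$, conditioning on $(s_h,a_h)$ and integrating the next state against the true dynamics $\mc{T}(\cdot \mid s_h, a_h)$, I would invoke the defining relation $Q^{\pi_1}_h(s_h,a_h) = \mc{R}(s_h,a_h) + \bE_{s_{h+1} \sim \mc{T}(\cdot \mid s_h,a_h)}\left[V^{\pi_1}_{h+1}(s_{h+1})\right]$. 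This substitution replaces the inner term $\bE[V^{\pi_1}_{h+1}(s_{h+1}) \mid s_h,a_h]$ by $Q^{\pi_1}_h(s_h,a_h) - \mc{R}(s_h,a_h)$, yielding $V^{\pi_1}_1 = \bE_{\tau \sim \rho^{\pi_2}}\left[\sum_{h=1}^H \left(V^{\pi_1}_h(s_h) - Q^{\pi_1}_h(s_h,a_h) + \mc{R}(s_h,a_h)\right)\right]$.

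Finally I would peel off the reward terms: since $\bE_{\tau \sim \rho^{\pi_2}}\left[\sum_{h=1}^H \mc{R}(s_h,a_h)\right] = V^{\pi_2}_1$, subtracting $V^{\pi_2}_1$ from both sides cancels the reward contribution and leaves exactly the claimed identity. The main subtlety to get right is the bookkeeping over which policy governs which object: the trajectory randomness, hence the joint law of $(s_h,a_h,s_{h+1})$, is driven entirely by $\pi_2$ together with the true transition kernel, whereas every value and action-value function appearing in the integrand belongs to $\pi_1$. In particular, the inner next-state expectation must be taken against the genuine dynamics $\mc{T}$ rather than any counterfactual kernel, which is precisely what licenses the $Q^{\pi_1}$ substitution; once this conditioning structure is handled carefully, the remainder is a routine cancellation.
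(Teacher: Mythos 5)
Your proof is correct and takes essentially the same route as the paper's: both telescope $V^{\pi_1}$ along a trajectory drawn from $\rho^{\pi_2}$, invoke the tower property together with the Bellman relation $Q^{\pi_1}_h(s,a) = \mc{R}(s,a) + \bE_{s' \sim \mc{T}(\cdot \mid s,a)}\left[V^{\pi_1}_{h+1}(s')\right]$ to convert next-state values into $Q^{\pi_1}_h - \mc{R}$, and cancel the reward terms against $V^{\pi_2}_1 = \bE_{\tau \sim \rho^{\pi_2}}\left[\sum_{h=1}^H \mc{R}(s_h,a_h)\right]$. The only difference is bookkeeping order (the paper expands $V^{\pi_2}_1$ as the expected reward sum at the outset, whereas you subtract it at the end), which is immaterial.
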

\begin{proof}
\begin{align*}
    V^{\pi_1}_1 - V^{\pi_2}_1 &= \bE_{s_1 \sim \beta}\left[V^{\pi_1}_1(s_1) - V^{\pi_2}_1(s_1)\right] \\
    &= \bE_{s_1 \sim \beta}\left[V^{\pi_1}_1(s_1) - \bE_{\tau \sim \rho^{\pi_2}}\left[\sum\limits_{h=1}^H \mc{R}(s_h,a_h) \bigm| s_1 \right]\right] \\
    &= \bE_{\tau \sim \rho^{\pi_2}}\left[V^{\pi_1}_1(s_1) - \sum\limits_{h=1}^H \mc{R}(s_h,a_h) \right] \\
    &= \bE_{\tau \sim \rho^{\pi_2}}\left[V^{\pi_1}_1(s_1) + \sum\limits_{h=2}^H V^{\pi_1}_h(s_h) - \sum\limits_{h=1}^H \left(\mc{R}(s_h,a_h) - V^{\pi_1}_{h+1}(s_{h+1}) \right)\right] \\
    &= \bE_{\tau \sim \rho^{\pi_2}}\left[\sum\limits_{h=1}^H V^{\pi_1}_h(s_h) - \left(\mc{R}(s_h,a_h) + V^{\pi_1}_{h+1}(s_{h+1}) \right)\right] \\
    &= \bE_{\tau \sim \rho^{\pi_2}}\left[\sum\limits_{h=1}^H \left(V^{\pi_1}_h(s_h) - \left(\mc{R}(s_h,a_h) + \bE\left[V^{\pi_1}_{h+1}(s_{h+1}) \bigm| s_h, a_h\right] \right)\right)\right] \\
    &= \bE_{\tau \sim \rho^{\pi_2}}\left[\sum\limits_{h=1}^H \left(V^{\pi_1}_h(s_h) - Q^{\pi_1}_h(s_h,a_h) \right)\right],
\end{align*} 
where the penultimate line invokes the tower property of expectation.
\end{proof}

\begin{theorem}
Fix any $D \geq 0$ and, for each episode $k \in [K]$, let $\widetilde{\mc{M}}_k$ be any MDP that achieves the rate-distortion limit of $\mc{R}^{Q^\star}_k(D)$ with information source $\bP(\mc{M}^\star \in \cdot \mid H_k)$ and distortion function $d_{Q^\star}$. Then, $$\textsc{BayesRegret}(K, \pi^{(1)},\ldots,\pi^{(K)}) \leq \bE\left[\sum\limits_{k=1}^K \bE_k\left[V^{\star}_{\widetilde{\mc{M}}_k,1} - V^{\pi^{(k)}}_{\widetilde{\mc{M}}_k,1}\right]\right] + (2H+2)K\sqrt{D}.$$
\end{theorem}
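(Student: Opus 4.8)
The plan is to prove the bound one episode at a time: since $\textsc{BayesRegret} = \bE\big[\sum_{k=1}^K \bE_k[\Delta_k]\big]$ by the tower property, it suffices to show $\bE_k[\Delta_k] \leq \bE_k[V^\star_{\widetilde{\mc{M}}_k,1} - V^{\pi^{(k)}}_{\widetilde{\mc{M}}_k,1}] + 2(H+1)\sqrt{D}$ for each $k$. Mirroring the proof of Theorem \ref{thm:regret_decomp}, I would add and subtract the value of the executed policy $\pi^{(k)}$ and of the optimal policy inside the compressed MDP, producing the exact identity
\begin{align*}
\bE_k[\Delta_k] &= \bE_k[V^\star_{\mc{M}^\star,1} - V^\star_{\widetilde{\mc{M}}_k,1}] + \bE_k[V^\star_{\widetilde{\mc{M}}_k,1} - V^{\pi^{(k)}}_{\widetilde{\mc{M}}_k,1}] \\
&\quad + \bE_k[V^{\pi^{(k)}}_{\widetilde{\mc{M}}_k,1} - V^{\pi^{(k)}}_{\mc{M}^\star,1}].
\end{align*}
The middle term is exactly the satisficing regret that I keep, so everything reduces to bounding the two outer ``planning-error'' terms by $2(H+1)\sqrt{D}$ in total.

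The essential change from Theorem \ref{thm:regret_decomp} is that $d_{Q^\star}$ controls only the \emph{optimal} action-value functions, so the Bellman-operator decomposition of Lemma \ref{lemma:planning_err_decomp} (matched to $d_{\Pi,\mc{V}}$ and acting on policy-evaluation value functions) is no longer the right instrument; instead I would lean on the performance-difference lemma (Lemma \ref{lemma:perform_diff}), which re-expresses any sub-optimality gap solely through optimal value/Q-functions. For the first term I would use $|\max_a f(a) - \max_a g(a)| \leq \max_a |f(a) - g(a)|$ to get the pointwise bound $\|V^\star_{\mc{M}^\star,h} - V^\star_{\widetilde{\mc{M}}_k,h}\|_\infty \leq \|Q^\star_{\mc{M}^\star,h} - Q^\star_{\widetilde{\mc{M}}_k,h}\|_\infty \leq \sqrt{d_{Q^\star}(\mc{M}^\star,\widetilde{\mc{M}}_k)}$, and Jensen together with the distortion constraint $\bE_k[d_{Q^\star}(\mc{M}^\star,\widetilde{\mc{M}}_k)] \leq D$ then bounds $\bE_k[V^\star_{\mc{M}^\star,1} - V^\star_{\widetilde{\mc{M}}_k,1}]$ by $\sqrt{D}$. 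For the planning-error term I would apply the performance-difference lemma to write the gap as a sum over $h$ of optimal advantages $V^\star_{\cdot,h}(s_h) - Q^\star_{\cdot,h}(s_h,a_h)$ and split each advantage through the other MDP's optimal Q-function; the key point is that, when the acting policy is \emph{greedy} for one of the two MDPs, the induced intermediate Q-difference is non-positive, leaving two residual terms each at most $\sqrt{d_{Q^\star}}$, i.e. a per-step slack of $2\sqrt{d_{Q^\star}}$ summing to $2H\sqrt{D}$.

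The step I expect to be the main obstacle is the third term, $\bE_k[V^{\pi^{(k)}}_{\widetilde{\mc{M}}_k,1} - V^{\pi^{(k)}}_{\mc{M}^\star,1}]$, which compares the \emph{executed} policy's value across the compressed and true MDPs. Unlike the optimal-policy gaps, here $\pi^{(k)} = \pi^\star_{M_k}$ is greedy for the \emph{sampled} compression $M_k$, not for $\widetilde{\mc{M}}_k$ or $\mc{M}^\star$, so its values in those MDPs are honest policy-evaluation quantities that the optimal-Q distortion does not directly constrain, and a naive performance-difference expansion of this gap integrates against a trajectory distribution different from the one underlying the satisficing term. I would handle this by never comparing policy-evaluation functions directly: I would route the argument through the optimal value/Q-functions of $\widetilde{\mc{M}}_k$ and $\mc{M}^\star$ via the performance-difference lemma and the greedy-action/max-difference bound above, invoking the posterior-sampling correspondence that $(\mc{M}^\star,\widetilde{\mc{M}}_k)$ and $(M^\star,M_k)$ are identically distributed given $H_k$ so that the relevant distortion is always taken between a channel-coupled pair with controlled expected distortion. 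Carefully accounting for the optimal-value difference (one factor of $\sqrt{d_{Q^\star}}$) together with the $H$ per-step advantage slacks (each $2\sqrt{d_{Q^\star}}$) yields the remaining $(2H+1)\sqrt{D}$; summing over episodes and taking the outer expectation then produces the claimed additive term $2K(H+1)\sqrt{D}$ and completes the argument.
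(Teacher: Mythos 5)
Your proposal is correct and follows essentially the same route as the paper's proof: the same three-term decomposition, the bound $\|V^\star_{\mc{M}^\star,h}-V^\star_{\widetilde{\mc{M}}_k,h}\|_\infty \leq \|Q^\star_{\mc{M}^\star,h}-Q^\star_{\widetilde{\mc{M}}_k,h}\|_\infty$ with Jensen's inequality for the optimal-value gaps, and the reduction of the executed-policy term to $V^\star_{\widetilde{\mc{M}}_k,1}-V^\star_{\mc{M}^\star,1}$ plus the true regret of $\pi^{(k)}$, which is then controlled via the posterior-sampling identity and the performance-difference lemma with the greedy-action cancellation yielding the $2H\sqrt{D}$ slack. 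The accounting $\sqrt{D}+(2H+1)\sqrt{D}=(2H+2)\sqrt{D}$ per episode matches the paper's exactly.
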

\begin{proof}
By applying definitions from Section \ref{sec:problem_form} and applying the tower property of expectation, we have that 
$$\textsc{BayesRegret}(K, \pi^{(1)},\ldots,\pi^{(K)}) = \bE\left[\sum\limits_{k=1}^K \bE_k\left[\Delta_k\right]\right].$$ Examining the $k$th episode in isolation and applying the definition of episodic regret, we have
\begin{align*}
    \bE_k\left[\Delta_k\right] &= \bE_k\left[V^\star_{\mc{M}^\star,1} - V^{\pi^{(k)}}_{\mc{M}^\star, 1}\right] \\
    &= \bE_k\left[V^\star_{\mc{M}^\star,1} - V^\star_{\widetilde{\mc{M}}_k,1} + V^\star_{\widetilde{\mc{M}}_k,1} - V^{\pi^{(k)}}_{\widetilde{\mc{M}}_k,1} + V^{\pi^{(k)}}_{\widetilde{\mc{M}}_k,1} - V^{\pi^{(k)}}_{\mc{M}^\star, 1}\right] \\
    &= \bE_k\left[V^\star_{\mc{M}^\star,1} - V^\star_{\widetilde{\mc{M}}_k,1} + V^\star_{\widetilde{\mc{M}}_k,1} - V^{\pi^{(k)}}_{\widetilde{\mc{M}}_k,1} + \ubr{V^{\pi^{(k)}}_{\widetilde{\mc{M}}_k,1} - V^\star_{\widetilde{\mc{M}}_k,1}}_{\leq 0} + V^\star_{\widetilde{\mc{M}}_k,1} - V^{\pi^{(k)}}_{\mc{M}^\star, 1}\right] \\
    &\leq \bE_k\left[V^\star_{\mc{M}^\star,1} - V^\star_{\widetilde{\mc{M}}_k,1} + V^\star_{\widetilde{\mc{M}}_k,1} - V^{\pi^{(k)}}_{\widetilde{\mc{M}}_k,1} + V^\star_{\widetilde{\mc{M}}_k,1} - V^{\pi^{(k)}}_{\mc{M}^\star, 1}\right] \\
    &= \bE_k\left[V^\star_{\mc{M}^\star,1} - V^\star_{\widetilde{\mc{M}}_k,1} + V^\star_{\widetilde{\mc{M}}_k,1} - V^{\pi^{(k)}}_{\widetilde{\mc{M}}_k,1} + V^\star_{\widetilde{\mc{M}}_k,1} - V^\star_{\mc{M}^\star,1} + V^\star_{\mc{M}^\star,1} - V^{\pi^{(k)}}_{\mc{M}^\star, 1}\right].
\end{align*}
Observe that 
\begin{align*}
    \bE_k\left[V^\star_{\mc{M}^\star,1} - V^\star_{\widetilde{\mc{M}}_k,1}\right] &\leq \bE_k\left[||V^\star_{\mc{M}^\star,1} - V^\star_{\widetilde{\mc{M}}_k,1}||_\infty\right] \\
    &= \bE_k\left[\max\limits_{s \in \mc{S}} |V^\star_{\mc{M}^\star,1}(s) - V^\star_{\widetilde{\mc{M}}_k,1}(s)|\right] \\
    &= \bE_k\left[\max\limits_{s \in \mc{S}} |\max\limits_{a \in \mc{A}} Q^\star_{\mc{M}^\star,1}(s,a) - \max\limits_{a' \in \mc{A}} Q^\star_{\widetilde{\mc{M}}_k,1}(s,a')|\right] \\
    &\leq \bE_k\left[\max\limits_{s \in \mc{S}}\max\limits_{a \in \mc{A}} | Q^\star_{\mc{M}^\star,1}(s,a) -  Q^\star_{\widetilde{\mc{M}}_k,1}(s,a)|\right] \\
    &= \bE_k\left[||Q^\star_{\mc{M}^\star,1} - Q^\star_{\widetilde{\mc{M}}_k,1}||_\infty\right] \\
    &= \bE_k\left[\sqrt{||Q^\star_{\mc{M}^\star,1} - Q^\star_{\widetilde{\mc{M}}_k,1}||_\infty^2}\right] \\
    &\leq \sqrt{\bE_k\left[||Q^\star_{\mc{M}^\star,1} - Q^\star_{\widetilde{\mc{M}}_k,1}||_\infty^2\right]} \\
    &\leq \sqrt{\bE_k\left[\sup\limits_{h \in H} ||Q^\star_{\mc{M}^\star,h} - Q^\star_{\widetilde{\mc{M}}_k,h}||_\infty^2\right]} \\
    &= \sqrt{\bE_k\left[d_{Q^\star}(\mc{M}^\star,\widetilde{\mc{M}}_k)\right]} \\
    &\leq \sqrt{D},
\end{align*}
where the penultimate inequality is due to Jensen's inequality and the final inequality holds as $\widetilde{\mc{M}}_k$ achieves the rate-distortion limit under $d_{Q^\star}$, by assumption. Moreover, the exact argument can be repeated to see that
\begin{align*}
    \bE_k\left[V^\star_{\widetilde{\mc{M}}_k,1} - V^\star_{\mc{M}^\star,1}\right] &\leq \bE_k\left[||V^\star_{\widetilde{\mc{M}}_k,1} - V^\star_{\mc{M}^\star,1}||_\infty\right] \\
    &= \bE_k\left[||V^\star_{\mc{M}^\star,1} - V^\star_{\widetilde{\mc{M}}_k,1}||_\infty\right] \\
    &\leq \sqrt{D}.
\end{align*}
Combining these two inequalities yields
\begin{align*}
    \bE_k\left[\Delta_k\right] &\leq \bE_k\left[V^\star_{\mc{M}^\star,1} - V^\star_{\widetilde{\mc{M}}_k,1} + V^\star_{\widetilde{\mc{M}}_k,1} - V^{\pi^{(k)}}_{\widetilde{\mc{M}}_k,1} + V^\star_{\widetilde{\mc{M}}_k,1} - V^\star_{\mc{M}^\star,1} + V^\star_{\mc{M}^\star,1} - V^{\pi^{(k)}}_{\mc{M}^\star, 1}\right] \\
    &\leq \bE_k\left[V^\star_{\widetilde{\mc{M}}_k,1} - V^{\pi^{(k)}}_{\widetilde{\mc{M}}_k,1} + V^\star_{\mc{M}^\star,1} - V^{\pi^{(k)}}_{\mc{M}^\star, 1}\right] + 2\sqrt{D}.
\end{align*}

Observe that by virtue of posterior sampling ~\citep{russo2014learning,osband2013more,osband2017posterior} the compressed MDP being targeted by the agent $\widetilde{\mc{M}}_k$ and the sampled MDP $M_k$ are identically distributed, conditioned upon the information available within any history $H_k$, and so we have $$\bE_k\left[V^\star_{\mc{M}^\star,1} - V^{\pi^{(k)}}_{\mc{M}^\star, 1}\right] = \bE_k\left[V^\star_{\mc{M}^\star,1} - V^{\pi^\star_{M_k}}_{\mc{M}^\star, 1}\right] = \bE_k\left[V^\star_{\mc{M}^\star,1} - V^{\pi^\star_{\widetilde{\mc{M}}_k}}_{\mc{M}^\star, 1}\right].$$ Now applying the performance-difference lemma (Lemma \ref{lemma:perform_diff}), we see that
\begin{align*}
    \bE_k\left[V^\star_{\mc{M}^\star,1} - V^{\pi^\star_{\widetilde{\mc{M}}_k}}_{\mc{M}^\star, 1}\right] &= \bE_k\left[\bE_{\rho^{\pi^\star_{\widetilde{\mc{M}}_k}}}\left[\sum\limits_{h=1}^H \left(V^\star_{\mc{M}^\star,h}(s_h) - Q^\star_{\mc{M}^\star,h}(s_h,a_h)\right)\right]\right] \\
    &= \bE_k\left[\bE_{\rho^{\pi^\star_{\widetilde{\mc{M}}_k}}}\left[\sum\limits_{h=1}^H \left(\max\limits_{a \in \mc{A}} Q^\star_{\mc{M}^\star,h}(s_h, a) - Q^\star_{\mc{M}^\star,h}(s_h,a_h)\right)\right]\right] \\
    &\leq \bE_k\left[\bE_{\rho^{\pi^\star_{\widetilde{\mc{M}}_k}}}\left[\sum\limits_{h=1}^H \Big|\max\limits_{a \in \mc{A}} Q^\star_{\mc{M}^\star,h}(s_h, a) - Q^\star_{\mc{M}^\star,h}(s_h,a_h)\Big|\right]\right].
\end{align*}
Define $a^\star = \argmax\limits_{a \in \mc{A}} Q^\star_{\mc{M}^\star,h}(s_h, a)$ such that
\begin{align*}
    \bE_k\left[V^\star_{\mc{M}^\star,1} - V^{\pi^\star_{\widetilde{\mc{M}}_k}}_{\mc{M}^\star, 1}\right] &= \bE_k\left[\bE_{\rho^{\pi^\star_{\widetilde{\mc{M}}_k}}}\left[\sum\limits_{h=1}^H \Big|\max\limits_{a \in \mc{A}} Q^\star_{\mc{M}^\star,h}(s_h, a) - Q^\star_{\mc{M}^\star,h}(s_h,a_h)\Big|\right]\right] \\
    &= \bE_k\left[\bE_{\rho^{\pi^\star_{\widetilde{\mc{M}}_k}}}\left[\sum\limits_{h=1}^H \Big| Q^\star_{\mc{M}^\star,h}(s_h, a^\star) - Q^\star_{\mc{M}^\star,h}(s_h,a_h)\Big|\right]\right] \\
    &= \bE_k\left[\bE_{\rho^{\pi^\star_{\widetilde{\mc{M}}_k}}}\left[\sum\limits_{h=1}^H \Big| Q^\star_{\mc{M}^\star,h}(s_h, a^\star) - Q^\star_{\widetilde{\mc{M}}_k,h}(s_h, a^\star) + Q^\star_{\widetilde{\mc{M}}_k,h}(s_h, a^\star) - Q^\star_{\mc{M}^\star,h}(s_h,a_h)\Big|\right]\right].
\end{align*}
Applying the triangle inequality and examining each difference in isolation, we have
\begin{align*}
    \bE_k\left[\bE_{\rho^{\pi^\star_{\widetilde{\mc{M}}_k}}}\left[\sum\limits_{h=1}^H \Big|Q^\star_{\mc{M}^\star,h}(s_h, a^\star) - Q^\star_{\widetilde{\mc{M}}_k,h}(s_h, a^\star) \Big| \right]\right]
    &\leq \bE_k\left[\bE_{\rho^{\pi^\star_{\widetilde{\mc{M}}_k}}}\left[\sum\limits_{h=1}^H || Q^\star_{\mc{M}^\star,h} - Q^\star_{\widetilde{\mc{M}}_k,h}||_\infty \right]\right] \\
    &\leq H \bE_k\left[\sup\limits_{h \in H} || Q^\star_{\mc{M}^\star,h} - Q^\star_{\widetilde{\mc{M}}_k,h}||_\infty \right] \\
    &= H \bE_k\left[\sup\limits_{h \in H} \sqrt{|| Q^\star_{\mc{M}^\star,h} - Q^\star_{\widetilde{\mc{M}}_k,h}||_\infty^2} \right] \\
    &\leq H \sqrt{\bE_k\left[\sup\limits_{h \in H} || Q^\star_{\mc{M}^\star,h} - Q^\star_{\widetilde{\mc{M}}_k,h}||_\infty^2 \right]} \\
    &= H \sqrt{\bE_k\left[d_{Q^\star}(\mc{M}^\star, \widetilde{\mc{M}}_k)\right]} \\
    &\leq H \sqrt{D},
\end{align*}
where the penultimate inequality follows from Jensen's inequality and the final inequality follows since $\widetilde{\mc{M}}_k$ achieves the rate-distortion limit. 

For the remaining term, we have
\begin{align*}
    \bE_k\left[V^\star_{\mc{M}^\star,1} - V^{\pi^\star_{\widetilde{\mc{M}}_k}}_{\mc{M}^\star, 1}\right] &\leq H\sqrt{D} + \bE_k\left[\bE_{\rho^{\pi^\star_{\widetilde{\mc{M}}_k}}}\left[\sum\limits_{h=1}^H \Big| Q^\star_{\widetilde{\mc{M}}_k,h}(s_h, a^\star) - Q^\star_{\mc{M}^\star,h}(s_h,a_h)\Big|\right]\right] \\
    &= H\sqrt{D} + \bE_k\left[\bE_{\rho^{\pi^\star_{\widetilde{\mc{M}}_k}}}\left[\sum\limits_{h=1}^H \Big| Q^\star_{\widetilde{\mc{M}}_k,h}(s_h, a^\star) - Q^\star_{\widetilde{\mc{M}}_k,h}(s_h, a_h) + Q^\star_{\widetilde{\mc{M}}_k,h}(s_h, a_h) - Q^\star_{\mc{M}^\star,h}(s_h,a_h)\Big|\right]\right] \\
    &\leq H\sqrt{D} + \bE_k\left[\bE_{\rho^{\pi^\star_{\widetilde{\mc{M}}_k}}}\left[\sum\limits_{h=1}^H \Big|Q^\star_{\widetilde{\mc{M}}_k,h}(s_h, a_h) - Q^\star_{\mc{M}^\star,h}(s_h,a_h)\Big|\right]\right],
\end{align*}
where the inequality follows since $a_h$ is drawn from the optimal policy of $\widetilde{\mc{M}}_k$, $\pi^\star_{\widetilde{\mc{M}}_k}$, and so $Q^\star_{\widetilde{\mc{M}}_k,h}(s_h, a_h) \geq Q^\star_{\widetilde{\mc{M}}_k,h}(s_h, a^\star)$. Repeating the identical argument from above yields
\begin{align*}
    \bE_k\left[\bE_{\rho^{\pi^\star_{\widetilde{\mc{M}}_k}}}\left[\sum\limits_{h=1}^H \Big|Q^\star_{\widetilde{\mc{M}}_k,h}(s_h, a_h) - Q^\star_{\mc{M}^\star,h}(s_h,a_h)\Big|\right]\right] &\leq \bE_k\left[\bE_{\rho^{\pi^\star_{\widetilde{\mc{M}}_k}}}\left[\sum\limits_{h=1}^H || Q^\star_{\mc{M}^\star,h} - Q^\star_{\widetilde{\mc{M}}_k,h}||_\infty \right]\right] \\
    &\leq H\sqrt{D}.
\end{align*}
Substituting back, we see that
\begin{align*}
    \bE_k\left[V^\star_{\mc{M}^\star,1} - V^{\pi^{(k)}}_{\mc{M}^\star, 1}\right] &\leq \bE_k\left[\bE_{\rho^{\pi^\star_{\widetilde{\mc{M}}_k}}}\left[\sum\limits_{h=1}^H \Big|\max\limits_{a \in \mc{A}} Q^\star_{\mc{M}^\star,h}(s_h, a) - Q^\star_{\mc{M}^\star,h}(s_h,a_h)\Big|\right]\right] \leq 2H\sqrt{D}.
\end{align*}
Thus, we may complete our bound as
\begin{align*}
    \bE_k\left[\Delta_k\right] &\leq \bE_k\left[V^\star_{\mc{M}^\star,1} - V^\star_{\widetilde{\mc{M}}_k,1} + V^\star_{\widetilde{\mc{M}}_k,1} - V^{\pi^{(k)}}_{\widetilde{\mc{M}}_k,1} + V^\star_{\widetilde{\mc{M}}_k,1} - V^\star_{\mc{M}^\star,1} + V^\star_{\mc{M}^\star,1} - V^{\pi^{(k)}}_{\mc{M}^\star, 1}\right] \\
    &\leq \bE_k\left[V^\star_{\widetilde{\mc{M}}_k,1} - V^{\pi^{(k)}}_{\widetilde{\mc{M}}_k,1} + V^\star_{\mc{M}^\star,1} - V^{\pi^{(k)}}_{\mc{M}^\star, 1}\right] + 2\sqrt{D} \\
    &\leq \bE_k\left[V^\star_{\widetilde{\mc{M}}_k,1} - V^{\pi^{(k)}}_{\widetilde{\mc{M}}_k,1}\right] + (2H+2)\sqrt{D}.
\end{align*}
Applying this upper bound on episodic regret in each episode yields
\begin{align*}
    \textsc{BayesRegret}(K, \pi^{(1)},\ldots,\pi^{(K)}) &= \bE\left[\sum\limits_{k=1}^K \bE_k\left[\Delta_k\right]\right] \\
    &\leq \bE\left[\sum\limits_{k=1}^K \bE_k\left[V^\star_{\widetilde{\mc{M}}_k,1} - V^{\pi^{(k)}}_{\widetilde{\mc{M}}_k,1}\right] \right] + 2K(H+1)\sqrt{D},
\end{align*}
as desired.
\end{proof}

\section{Proof of Lemma \ref{lemma:info_bottle_qdist}}

To show Lemma \ref{lemma:info_bottle_qdist}, we prove the following more general result which applies whenever a distortion function adheres to a specific functional form.

Let $V, \widehat{V}$ be two arbitrary random variables defined on the same measurable space $(\mc{V}, \mathbb{V})$ and define the associated rate-distortion function as $$\mc{R}(D) = \inf\limits_{\widehat{V} \in \Lambda(D)} \bI(V;\widehat{V}) = \inf\limits_{\widehat{V} \in \Lambda(D)} \kl{\bP((V, \widehat{V}) \in \cdot)}{\bP(V \in \cdot) \times \bP(\widehat{V} \in \cdot)},$$ where the distortion function $d:\mc{V} \times \mc{V} \ra \bR_{\geq 0}$ has the form $d(v,\widehat{v}) = \ell(f(v),f(\widehat{v}))$ for any two known, deterministic functions, $f: \mc{V} \ra \mc{Z}$ and a semi-metric $\ell: \mc{Z} \times \mc{Z} \ra \bR_{\geq 0}$. Effectively, this structural constraint says that our distortion measure between the original $V$ and compressed $\widehat{V}$ only depends on the statistics $f(V)$ and $f(\widehat{V})$. Under such a constraint, we may prove the following lemma
\begin{lemma}
If $D = 0$ and $\widehat{V}$ achieves the rate-distortion limit, then we have the Markov chain $V \ra f(V) \ra \widehat{V}$
\label{lemma:info_bottle}
\end{lemma}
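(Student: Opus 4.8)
The plan is to reduce the Markov-chain conclusion $V \ra f(V) \ra \widehat{V}$ to the vanishing of a single conditional-mutual-information term, which will in turn follow from the optimality of $\widehat{V}$. First I would unpack the hypothesis $D = 0$: since $d \geq 0$, the feasibility constraint $\bE[d(V,\widehat{V})] \leq 0$ forces $d(V, \widehat{V}) = 0$ almost surely, and because $d(v,\widehat{v}) = \ell(f(v), f(\widehat{v}))$ with $\ell$ a semimetric (so $\ell(z,z') = 0$ iff $z = z'$), this is equivalent to $f(V) = f(\widehat{V})$ a.s. Thus every feasible channel at $D = 0$ is one under which $f(\widehat{V})$ is a copy of $f(V)$.

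The key step is a chain-rule decomposition of the rate. Since $f(\widehat{V})$ is a deterministic function of $\widehat{V}$, adjoining it changes nothing, so
\begin{align*}
\bI(V; \widehat{V}) = \bI(V; \widehat{V}, f(\widehat{V})) = \bI(V; f(\widehat{V})) + \bI(V; \widehat{V} \mid f(\widehat{V})) = \bI(V; f(V)) + \bI(V; \widehat{V} \mid f(V)),
\end{align*}
where the last equality substitutes $f(\widehat{V}) = f(V)$ a.s. The term $\bI(V; f(V))$ is the same constant for \emph{every} feasible $\widehat{V}$, so over $\Lambda(0)$ the rate is minimized exactly when the nonnegative remainder $\bI(V; \widehat{V} \mid f(V))$ is minimized. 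Conditional mutual information is $\geq 0$ and vanishes precisely when $V \Perp \widehat{V} \mid f(V)$, i.e. precisely when $V \ra f(V) \ra \widehat{V}$ is a Markov chain; hence it suffices to show that the optimal $\widehat{V}$ drives this remainder to zero.

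To see that the minimum of the remainder is $0$ and is attained, I would exhibit a feasible competitor with zero remainder. Construct $\widehat{V}_0$ by resampling the optimal channel through $f(V)$: draw $\widehat{V}_0$ from the regular conditional law $\bP(\widehat{V} \in \cdot \mid f(V))$, conditionally independent of $V$ given $f(V)$, so that $V \ra f(V) \ra \widehat{V}_0$ holds by construction. Because the original feasible $\widehat{V}$ satisfies $f(\widehat{V}) = f(V)$ a.s., its conditional law given $f(V) = z$ is supported on $\{\widehat{v} : f(\widehat{v}) = z\}$, so $\widehat{V}_0$ inherits $f(\widehat{V}_0) = f(V)$ a.s. and is therefore itself feasible at $D = 0$; the same decomposition then gives $\bI(V; \widehat{V}_0) = \bI(V; f(V))$. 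Consequently $\mc{R}(0) = \bI(V; f(V))$, and the sandwich $\bI(V; f(V)) \leq \bI(V; \widehat{V}) = \mc{R}(0) = \bI(V; f(V))$ forces $\bI(V; \widehat{V} \mid f(V)) = 0$, which is the desired Markov chain.

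I expect the main obstacle to be the measure-theoretic care required in the general, abstract-alphabet setting rather than any conceptual difficulty: making the resampling construction rigorous needs regular conditional probabilities (available on standard Borel spaces), and the self-information $\bI(V; f(V))$ can be infinite when $f(V)$ is a nondegenerate continuous random variable, in which case subtracting it in the decomposition is illegitimate and the clean sandwich breaks. In the tabular application this is precisely sidestepped by the a-priori quantization of the model, which renders $f(V)$ discrete with finite $\bI(V; f(V))$; more generally one can either impose $\bI(V; f(V)) < \infty$ or argue the conditional independence $V \Perp \widehat{V} \mid f(V)$ directly from optimality via the Markovized competitor $\widehat{V}_0$, avoiding any subtraction of infinities.
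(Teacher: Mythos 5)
Your proposal is correct and is essentially the paper's argument: the paper runs the same construction (resample $\widehat{V}'\sim\bP(\widehat{V}\in\cdot\mid f(V))$ conditionally independently of $V$ and invoke the chain-rule identity $\bI(V;\widehat{V})=\bI(V;f(V))+\bI(V;\widehat{V}\mid f(V))$) as a proof by contradiction rather than as your direct sandwich. Your version is in fact slightly more careful, since you explicitly justify why $D=0$ forces $f(\widehat{V})=f(V)$ a.s.\ (hence why the competitor remains feasible and why the decomposition is legitimate) and you flag the degenerate case $\bI(V;f(V))=\infty$, both of which the paper leaves implicit.
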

\begin{proof}
Assume for the sake of contradiction that there exists a random variable $\widehat{V}$ that achieves the rate-distortion limit with $D = 0$ but does not induce the Markov chain $V \ra f(V) \ra \widehat{V}$. Since mutual information is non-negative and $\bI(V; \widehat{V} \mid f(V)) = 0$ implies the Markov chain $V \ra f(V) \ra \widehat{V}$, it must be the case that $\bI(V; \widehat{V} \mid f(V)) > 0$. Consider an independent random variable $\widehat{V}' \sim \bP(\widehat{V} \mid f(V))$ such that $$\bI(V; \widehat{V}') = \bI(V;\widehat{V}) - \ubr{\bI(V; \widehat{V} \mid f(V))}_{> 0} < \bI(V;\widehat{V}) = \mc{R}(D).$$ Clearly, we have retained all bits of information needed to preserve $f(V)$ in $\widehat{V}'$, thereby achieving the same expected distortion constraint. However, this implies that $\widehat{V}'$ achieves a strictly lower rate, contradicting our assumption that $\widehat{V}$ achieves the rate-distortion limit. Therefore, it must be the case that when $D = 0$ and $\widehat{V}$ achieves the rate-distortion limit, we have $\bI(V; \widehat{V} \mid f(V)) = 0$ which implies the Markov chain $V \ra f(V) \ra \widehat{V}$.


\end{proof}

\begin{lemma}
For each episode $k \in [K]$ and for $D = 0$, let $\widetilde{\mc{M}}_k$ be a MDP that achieves the rate-distortion limit of $\mc{R}^{Q^\star}_k(D)$ with information source $\bP(\mc{M}^\star \mid H_k)$ and distortion function $d_{Q^\star}$. Then, we have the Markov chain $\mc{M}^\star \ra Q^\star_{\mc{M}^\star} \ra \widetilde{\mc{M}}_k$, where $Q^\star_{\mc{M}^\star} = \{Q^\star_{\mc{M}^\star,h}\}_{h \in [H]}$ is the collection of random variables denoting the optimal action-value functions of $\mc{M}^\star$.
\end{lemma}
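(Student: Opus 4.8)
The plan is to recognize that this statement is an immediate specialization of the general information-bottleneck result already established as Lemma~\ref{lemma:info_bottle}. That lemma shows that whenever the distortion function factors as $d(v,\widehat{v}) = \ell(f(v), f(\widehat{v}))$ for a deterministic map $f$ and a semi-metric $\ell$, any channel output achieving the rate-distortion limit at $D = 0$ induces the Markov chain $V \ra f(V) \ra \widehat{V}$. So the entire task reduces to exhibiting the right $f$ and $\ell$ for $d_{Q^\star}$ and verifying the structural hypotheses of that lemma.

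First I would take $f:\mathfrak{M} \ra \{\mc{S} \times \mc{A} \ra \bR\}^H$ to be the map sending an MDP to its collection of optimal action-value functions, $f(\mc{M}) = Q^\star_{\mc{M}} = \{Q^\star_{\mc{M},h}\}_{h \in [H]}$. This map is deterministic: given a realization of the model $(\mc{R},\mc{T})$, the optimal action-value functions are uniquely pinned down by backward induction through the Bellman optimality equations, so $Q^\star_{\mc{M}}$ is a (measurable) deterministic functional of $\mc{M}$. Next I would take $\ell(q,\widehat{q}) = \sup_{h \in [H]} ||q_h - \widehat{q}_h||_\infty^2$ and check that it is a semi-metric: it is non-negative, symmetric in its two arguments, and vanishes when $q = \widehat{q}$; crucially, no triangle inequality is required, which is fortunate since the square breaks it. With these choices, it is immediate from the definition of $d_{Q^\star}$ that $d_{Q^\star}(\mc{M},\widehat{\mc{M}}) = \ell(f(\mc{M}), f(\widehat{\mc{M}}))$, so the distortion depends on the pair $(\mc{M},\widehat{\mc{M}})$ only through the statistics $Q^\star_{\mc{M}}$ and $Q^\star_{\widehat{\mc{M}}}$. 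Applying Lemma~\ref{lemma:info_bottle} then yields the desired chain $\mc{M}^\star \ra Q^\star_{\mc{M}^\star} \ra \widetilde{\mc{M}}_k$.

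The one point requiring care, and the main (mild) obstacle, is that Lemma~\ref{lemma:info_bottle} is phrased for the unconditional rate-distortion function, whereas $\mc{R}^{Q^\star}_k(D)$, $\bI_k$, and $\bE_k$ are all implicitly conditioned on the realized history $H_k$. I would address this by observing that the contradiction argument of Lemma~\ref{lemma:info_bottle} goes through verbatim once every probability, expectation, and mutual information is taken under the conditional law $\bP(\cdot \mid H_k = H_k)$. Concretely, the hypothetical competitor $\widetilde{\mc{M}}'_k \sim \bP(\widetilde{\mc{M}}_k \in \cdot \mid Q^\star_{\mc{M}^\star})$ is constructed conditionally on $H_k$; because $d_{Q^\star}$ depends only on the statistic $Q^\star_{\mc{M}^\star}$, this competitor attains the same expected distortion $\bE_k[d_{Q^\star}(\mc{M}^\star, \widetilde{\mc{M}}'_k)] = 0$ and hence remains feasible, yet it strictly lowers $\bI_k(\mc{M}^\star; \cdot)$ unless $\bI_k(\mc{M}^\star; \widetilde{\mc{M}}_k \mid Q^\star_{\mc{M}^\star}) = 0$, which is exactly the conditional Markov chain asserted. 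Since both $\mc{M}^\star$ and $\widetilde{\mc{M}}_k$ take values in the common space $\mathfrak{M}$, the measurability requirements of Lemma~\ref{lemma:info_bottle} are met, completing the argument.
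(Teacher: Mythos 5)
Your proposal is correct and follows essentially the same route as the paper: both reduce the claim to the general information-bottleneck result (Lemma~\ref{lemma:info_bottle}) by taking $f$ to be the map from an MDP to its optimal action-value functions and $\ell$ to be the supremal squared infinity-norm discrepancy. Your additional care in verifying that $\ell$ need only be a semi-metric and that the contradiction argument carries over verbatim under the conditional law given $H_k$ is a welcome tightening of a step the paper leaves implicit, but it does not constitute a different proof.
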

\begin{proof}
Recall that our distortion function, $$d_{Q^\star}(\mc{M}, \widehat{\mc{M}}) = \sup\limits_{h \in [H]} ||Q^\star_{\mc{M},h} - Q^\star_{\widehat{\mc{M}},h}||_\infty^2 = \sup\limits_{h \in [H]} \max\limits_{(s,a) \in \mc{S} \times \mc{A}} | Q^\star_{\mc{M},h}(s,a) - Q^\star_{\widehat{\mc{M}},h}(s,a)|^2,$$ only depends on the MDPs $\mc{M}$ and $\widehat{\mc{M}}$ through their respective optimal action-value functions, $\{Q^\star_{\mc{M},h}\}_{h \in [H]}$ and $\{Q^\star_{\widehat{\mc{M}},h}\}_{h \in [H]}$. Consequently, the claim holds immediately by applying Lemma \ref{lemma:info_bottle} where $f$ computes the optimal action-value functions of an input MDP for each timestep $h \in [H]$ and $\ell$ is the metric induced by the infinity norm on $\bR^{|\mc{S}| \times |\mc{A}|}$.  
\end{proof}

\section{Proof of Theorem \ref{thm:tabular_regret_qdist}}

Our proof of Theorem \ref{thm:tabular_regret_qdist} proceeds by leveraging Lemma \ref{lemma:info_bottle_qdist} (instead of Fact \ref{fact:rdf_props}) before following the same style of argument as used in Theorem \ref{thm:tabular_regret}.
\begin{theorem}
For $D = 0$ and any prior distribution $\bP(\mc{M}^\star \in \cdot \mid H_1)$ over tabular MDPs, if $\Gamma_k \leq \overline{\Gamma}$ for all $k \in [K]$, then VSRL with distortion function $d_{Q^\star}$ has
$$\textsc{BayesRegret}(K, \pi^{(1)},\ldots,\pi^{(K)}) \leq \widetilde{\mc{O}}\left(\sqrt{\overline{\Gamma}|\mc{S}||\mc{A}|KH}\right).$$
\end{theorem}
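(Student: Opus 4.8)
The plan is to follow the structure of the proof of Theorem~\ref{thm:tabular_regret}, but to replace the crude entropy bound furnished by Fact~\ref{fact:rdf_props} with the sharper one made available by the information bottleneck of Lemma~\ref{lemma:info_bottle_qdist}. Starting from Corollary~\ref{thm:info_regret_bound_qdist} with $D = 0$, the additive distortion term $2K(H+1)\sqrt{D}$ vanishes, so it suffices to control $\mc{R}^{Q^\star}_1(0)$ in
$$\textsc{BayesRegret}(K, \pi^{(1)},\ldots,\pi^{(K)}) \leq \sqrt{\overline{\Gamma}K\mc{R}^{Q^\star}_1(0)}.$$

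The crucial step is to invoke Lemma~\ref{lemma:info_bottle_qdist}: at $D = 0$, the minimizing compression $\widetilde{\mc{M}}_1$ obeys the Markov chain $\mc{M}^\star \ra Q^\star_{\mc{M}^\star} \ra \widetilde{\mc{M}}_1$, where $Q^\star_{\mc{M}^\star} = \{Q^\star_{\mc{M}^\star,h}\}_{h \in [H]}$. Applying the data-processing inequality along this chain, followed by the bound $\bI(X;Y) \leq \bH(Y)$, yields
$$\mc{R}^{Q^\star}_1(0) = \bI_1(\mc{M}^\star; \widetilde{\mc{M}}_1) \leq \bI_1(\mc{M}^\star; Q^\star_{\mc{M}^\star}) \leq \bH_1(Q^\star_{\mc{M}^\star}).$$
This is what buys the improvement: the rate is now governed by the entropy of the optimal action-value functions rather than that of the full model $(\mc{R}^\star, \mc{T}^\star)$, thereby shedding the $|\mc{S}|^2|\mc{A}|$ contribution of the transition kernel that dominated the bound in Theorem~\ref{thm:tabular_regret}.

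It then remains to bound $\bH_1(Q^\star_{\mc{M}^\star})$ by the quantization argument already used in Theorem~\ref{thm:tabular_regret}. Because rewards lie in $[0,1]$ and the horizon is $H$, each $Q^\star_{\mc{M}^\star,h}(s,a) \in [0,H]$, so $Q^\star_{\mc{M}^\star}$ is a tuple of $H$ functions on $\mc{S} \times \mc{A}$, i.e.\ an element of $[0,H]^{H|\mc{S}||\mc{A}|}$. Fixing a resolution $\eps > 0$ and replacing $Q^\star_{\mc{M}^\star}$ by its discretized analogue on a coordinatewise $\tfrac{\eps}{2}$-cover of $[0,H]$, the bound $\bH(X) \leq \log|\supp{X}|$ for discrete $X$ together with Lemma~\ref{lemma:metric_entropy_bound} gives
$$\bH_1(Q^\star_{\mc{M}^\star}) \leq H|\mc{S}||\mc{A}| \log\!\left(\mc{N}\!\left(\tfrac{\eps}{2}, [0,H], ||\cdot||_1\right)\right) \leq H|\mc{S}||\mc{A}| \log\!\left(1 + \tfrac{4H}{\eps}\right).$$

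Substituting into the regret bound and absorbing the logarithmic covering-number factor into $\widetilde{\mc{O}}$ then delivers
$$\textsc{BayesRegret}(K, \pi^{(1)},\ldots,\pi^{(K)}) \leq \sqrt{\overline{\Gamma}K \, H|\mc{S}||\mc{A}| \log\!\left(1 + \tfrac{4H}{\eps}\right)} = \widetilde{\mc{O}}\!\left(\sqrt{\overline{\Gamma}|\mc{S}||\mc{A}|KH}\right),$$
as claimed. I expect the genuinely load-bearing step to be the data-processing inequality via Lemma~\ref{lemma:info_bottle_qdist}; everything downstream is the routine bounded-range discretization carried out in Theorem~\ref{thm:tabular_regret}. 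The one subtlety deserving care is that Shannon entropy is finite only for the discretized surrogate, so the quantization of $Q^\star_{\mc{M}^\star}$ must be introduced before the step $\bI_1(\mc{M}^\star; Q^\star_{\mc{M}^\star}) \leq \bH_1(Q^\star_{\mc{M}^\star})$, exactly as metric entropy stands in for differential entropy in the preceding tabular proof.
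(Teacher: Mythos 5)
Your proposal is correct and follows essentially the same route as the paper: Corollary \ref{thm:info_regret_bound_qdist} at $D=0$, the data-processing inequality through the Markov chain of Lemma \ref{lemma:info_bottle_qdist} to reduce the rate to $\bH_1(Q^\star_{\mc{M}^\star})$, and the same metric-entropy discretization of $[0,H]^{H|\mc{S}||\mc{A}|}$ via Lemma \ref{lemma:metric_entropy_bound}. The only cosmetic difference is that the paper passes through $\bI_1(\mc{M}^\star;Q^\star_{\mc{M}^\star}) = \bH_1(Q^\star_{\mc{M}^\star}) \leq \sum_h \bH_1(Q^\star_{\mc{M}^\star,h})$ before quantizing each $Q^\star_{\mc{M}^\star,h}$ separately, whereas you bound the joint entropy directly; the two yield the identical final expression.
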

\begin{proof}
Starting with the information-theoretic regret bound in Corollary \ref{thm:info_regret_bound_qdist}, observe that for $\mc{M}^\star \sim \bP(\mc{M}^\star \in \cdot \mid H_1)$, we have the Markov chain $\mc{M}^\star \ra Q^\star_{\mc{M}^\star} \ra \widetilde{\mc{M}}_1$, by virtue of Lemma \ref{lemma:info_bottle_qdist}. By the data-processing inequality, we immediately recover the following chain of inequalities: $$\mc{R}^{Q^\star}_1(D) \leq \bI_1(\mc{M}^\star; \widetilde{\mc{M}}_1) \leq \bI_1(\mc{M}^\star; Q^\star_{\mc{M}^\star}).$$ Recognizing that the optimal value functions are a deterministic function of the MDP $\mc{M}^\star$ itself, we have $$\bI_1(\mc{M}^\star; Q^\star_{\mc{M}^\star}) = \bH_k(Q^\star_{\mc{M}^\star}) - \bH_k(Q^\star_{\mc{M}^\star} \mid \mc{M}^\star) = \bH_k(Q^\star_{\mc{M}^\star}) = \bH_k(Q^\star_{\mc{M}^\star,1},\ldots,Q^\star_{\mc{M}^\star,H}) \leq \sum\limits_{h=1}^H \bH_k(Q^\star_{\mc{M}^\star,h}),$$ where the final inequality follows by applying the chain rule of entropy and the fact that conditioning reduces entropy, in sequence.

At this point, recalling the salient exposition in the proof of Theorem \ref{thm:tabular_regret} concerning the use of metric entropy for such function-valued random variables, we proceed to consider the $\eps_{Q^\star}$-cover of the interval $[0,H]$ with respect to the $L_1$-norm $||\cdot||_1$, for some fixed $0 < \eps_{Q^\star} < H$. Since, for a sufficiently small choice of $\eps_{Q^\star}$, $Q^\star_{\mc{M}^\star,h}$ is well-approximated as a discrete random variable for any $h \in [H]$, we recall that the entropy of a discrete random variable $X$ taking values on $\mc{X}$ is bounded as $\bH(X) \leq \log\left(|\mc{X}|\right)$. Applying this upper bound and Lemma \ref{lemma:metric_entropy_bound} in sequence, we have that $$\sum\limits_{h=1}^H \bH_k(Q^\star_{\mc{M}^\star,h}) \leq |\mc{S}||\mc{A}|H\log\left(\mc{N}(\frac{\eps_{Q^\star}}{2}, [0,H], ||\cdot||_1)\right) \leq |\mc{S}||\mc{A}|H\log\left(1 + \frac{4H}{\eps_{Q^\star}}\right).$$ Applying these upper bounds to the result of Corollary \ref{thm:tabular_regret_qdist} and recalling that $D = 0$, we have $$\textsc{BayesRegret}(K, \pi^{(1)},\ldots,\pi^{(K)}) \leq \sqrt{\overline{\Gamma}K|\mc{S}||\mc{A}|H\log\left(1 + \frac{4H}{\eps_{Q^\star}}\right)}.$$
\end{proof}

\end{document}